\def\argmin{\operatorname{argmin} \displaylimits}
\def\mP{\mathcal{P}}
\def\mS{\mathcal{S}}
\def\mT{\mathcal{T}}
\def\mI{\mathcal{I}}
\def\T{\top}
\def\mS{\mathcal{S}}
\def\tr{\text{tr}}
\def\tbU{\widetilde{\bU}}
\def\tbV{\widetilde{\bV}}
\def\bF{\mathbold{F}}
\def\tvec{\text{vec}}
\newcommand\lfR[1]{{\color{red} #1}}
\newcommand\lfB[1]{{\color{blue} #1}}
\newcommand{\bel}{\begin{eqnarray}\label}
\newcommand{\eel}{\end{eqnarray}}
\newcommand{\bes}{\begin{eqnarray*}}
\newcommand{\ees}{\end{eqnarray*}}
\newcommand{\bei}{\begin{itemize}}
\newcommand{\beiftnt}{\begin{itemize}\footnotesize}
\newcommand{\eei}{\end{itemize}}
\def\benu{\begin{enumerate}}
\def\eenu{\end{enumerate}}
\def\argmin{\mathop{\rm arg\, min}}
\def\complex{\mathop{{\rm I}\kern-.58em\hbox{\rm C}}\nolimits}
\def\diag{\hbox{\rm diag}}
\def\rank{\hbox{\rm rank}}
\def\sgn{\hbox{\rm sgn}}
\def\supp{\hbox{\rm supp}}
\def\mathbold{\boldsymbol} %\def\mathbold{\mathbf}
\def\bA{\mathbold{A}}
\def\bB{\mathbold{B}}
\def\bD{\mathbold{D}}
\def\bE{\mathbold{E}}
\def\bF{\mathbold{F}}
\def\bG{\mathbold{G}}
\def\bI{\mathbold{I}}
\def\bJ{\mathbold{J}}
\def\bK{\mathbold{K}}
\def\bM{\mathbold{M}}
\def\bN{\mathbold{N}}
\def\bP{\mathbold{P}}\def\hbP{{\widehat{\bP}}}
\def\bQ{\mathbold{Q}}\def\hbQ{{\widehat{\bQ}}}
\def\bu{\mathbold{u}}
\def\bU{\mathbold{U}}\def\tbU{{\widetilde{\bU}}}
\def\bv{\mathbold{v}}
\def\bV{\mathbold{V}}\def\tbV{{\widetilde{\bV}}}
\def\bW{\mathbold{W}}
\def\bX{\mathbold{X}}\def\hbX{{\widehat{\bX}}}
\def\bY{\mathbold{Y}}\def\hbY{{\widehat{\bY}}}
\def\bZ{\mathbold{Z}}
\def\bGamma{\mathbold{\Gamma}}
\def\bDelta{\mathbold{\Delta}}
\def\eps{\epsilon}
\def\bTheta{\mathbold{\Theta}}\def\hbTheta{{\widehat{\bTheta}}}
\def\lam{\lambda}
\def\bLambda{\mathbold{\Lambda}}
\def\bSigma{\mathbold{\Sigma}}
\def\bOmega{\mathbold{\Omega}}
\begin{document}
	
	\title{Projected Robust PCA with Application to \\ Smooth Image Recovery}
	
	\author{\name Long Feng \email longfeng@cityu.edu.hk \\
		\name Junhui Wang \email j.h.wang@cityu.edu.hk \\
		\addr School of Data Science\\
		City University of Hong Kong\\
		Kowloon Tong, Hong Kong
	}
	
	%\editor{Kevin Murphy and Bernhard Sch{\"o}lkopf}
	\editor{}
	
	\maketitle
	
	\begin{abstract}%   <- trailing '%' for backward compatibility of .sty file
		Most high-dimensional matrix recovery problems are studied under the assumption that the target matrix has certain intrinsic structures. For image data related matrix recovery problems, approximate low-rankness and smoothness are the two most commonly imposed structures. For approximately low-rank matrix recovery, the robust principal component analysis (PCA) is well-studied and proved to be effective. For smooth matrix problem, 2d fused Lasso and other total variation based approaches have played a fundamental role. Although both low-rankness and smoothness are key assumptions for image data analysis, 
		the two lines of research, however, have very limited interaction. Motivated by taking advantage of both features, we in this paper develop a framework named projected robust PCA (PRPCA), under which the low-rank matrices are projected onto a space of smooth matrices.
		Consequently, a large class of image matrices can be 
		%the target matrix is
		decomposed as a low-rank and smooth component plus a sparse component. A key advantage of this decomposition is that the dimension of the core low-rank component can be significantly reduced. Consequently, our framework is able to address a  problematic bottleneck of many low-rank matrix problems: singular value decomposition (SVD) on large matrices.  Theoretically, we provide explicit statistical recovery guarantees of PRPCA and include classical robust PCA as a special case.
	\end{abstract}
	
	\begin{keywords}
		Image analysis, Robust PCA, Low-rankness, smoothness, Interpolation matrices.
	\end{keywords}
	
	\section{Introduction}
	
	In the past decade, high-dimensional matrix recovery problems have drawn numerous attentions in the communities of statistics, computer science and electrical engineering due to its wide applications, particularly in image and video data analysis.  %recommender system, genomic study, etc. 
	%recovery of a matrix from noisy or/and partial entries is impossible. Thus, different assumptions need to impose on the 
	Notable problems include face recognition \citep{parkhi2015deep}, motion detection in surveillance video \citep{candes2011robust}, brain structure study through fMRI \citep{maldjian2003automated}, etc.
	In general, most studies on high-dimensional matrix recovery problems are built upon the assumption that the target matrix has certain intrinsic structures. For image data related problems,  the two most commonly imposed structures are 1) approximate low-rankness and 2) smoothness. %When a matrix is believed to be exactly low-rank, the classical principal component analysis (PCA) 
	
	The approximate low-rankness refers to the property that the target matrix can be  
	%and minimization and its convex relaxation, nuclear-norm minimization, can be applied. An incomplete list of rank/nuclear-norm minimization literature includes \citep{candes2009exact,candes2010matrix,candes2010power,recht2011simpler,gu2014weighted,chen2019noisy}. When a matrix is not exactly low-rank, but can be 
	decomposed as a low-rank component plus a sparse component. Such matrices have been intensively studied since the seminal work of {\it robust principal component analysis} (RPCA, \citealt{candes2011robust}). The RPCA was originally studied under the noiseless setting and has been extended to the noise case by \citet{zhou2010stable}. Moreover, the Robust PCA has also been intensively studied for matrix completion problems with partially observed entries. For example, in \citet{wright2013compressive}, \citet{klopp2017robust} and \citet{chen2020bridging}.
	
	%Typical examples include low-rank matrix with outliers, face image matrix \citep{basri2003lambertian}, stacked video surveillance matrix \citep{candes2011robust}, etc. For such matrix,  \citet{candes2011robust} proposed the Principal Component Pursuit (PCP) and proved their exact recovery properties for fully observed matrix in a noiseless setting. The PCP has been extended and generalized in different directions: \citet{zhou2010stable} studied PCP when additional noise exists, \citet{wright2013compressive} proposed Compressed Principal Component Pursuit (CPCP) for partially observed matrix, etc. In addition, approximately low-rank matrix was also intensively considered in the context of robust estimation. For example, \citet{klopp2017robust} studied noisy robust matrix completion and provided statistical guarantees. %ddddddddddddd

	On the other hand, when a matrix is believed to be smooth, {\it Total Variation} (TV) based approach has played a fundamental role since the pioneering work of \citet{rudin1992nonlinear} and \citet{rudin1994total}.
	%In general, the TV refers to the integral of the absolute gradient of the signal.
	The TV has been proven to be effective in preserving image boundaries/edges.
	%In general, the TV approach recovers the blurred signals by penalizing the integral of the absolute gradient of the signal.
	%pointed out connections with spline smoothing \citep{steidl2006splines}.
	In statistics community, a well-studied TV approach is the {\it 2d fused Lasso} \citep{tibshirani2005sparsity}, which penalizes the total absolute difference of adjacent matrix entries using $\ell_1$ norm. The 2d fused Lasso has been shown to be efficient when the target matrix is piecewise smooth. More recently, a TV based approach was also used in image-on-scalar regression  to promote the piecewise smoothness of image coefficients \citep{wang2017generalized}.%, where the regression coefficients matrix is believed to be piecewise smooth.
	
	Although both low-rankness and smoothness are key assumptions for image data analysis, the two lines of study, however, have very limited interaction.  %the studies that built upon each assumption have very limited interaction.
	%While there are rich literature on either approximately low-rank or smooth matrix, the two lines of study, however, have very limited interaction. 
	%This is the first motivation of this paper.
	On the other hand, the matrices that are both approximately low-rank and smooth not only commonly exist in image data, it also exists in video analysis. %Such matrices are common, especially in image and video analysis.
	Consider a stacked video surveillance matrix---obtained by stacking each video frame into a matrix column. \citet{candes2011robust} demonstrates that this matrix is approximately low-rank: the low-rank component corresponds to the stationary background and the sparse component corresponds to the moving objects. However, a critical but often neglected fact is that the low-rank component is roughly column-wise smooth. In other words, each column of the low-rank matrix is roughly the same---because they all represent the same background. In this case, the original matrix is the superposition of a low-rank and smooth component and a  sparse component. %While there are rich literature on either approximately low-rank or smooth matrix, the two lines of study, however, have very limited interaction. 
	How to effectively take advantage of both assumptions? This motivates our study in this paper.
	
	\subsection{This paper}
	we propose the following model to build a bridge between the approximate low-rankness and smoothness in high-dimensional matrix recovery problem
	\bel{model2}
	\bZ &=& \bTheta +\bE,
	\cr \bTheta &=& \bP\bX_0\bQ^\T + \bY_0.
	\eel
	Here $\bZ\in\mathbb{R}^{M\times N}$ is the observed matrix with unknown mean matrix $\bTheta$ and noise $\bE$, $\bX_0\in\mathbb{R}^{m\times n}$ is an unknown  {\bf low-rank} matrix, $\bY_0\in\mathbb{R}^{M\times N}$ is an unknown {\bf sparse} matrix, $\bP\in\mathbb{R}^{M\times m}$ and $\bQ\in\mathbb{R}^{N\times n}$ are respectively certain ``row-smoother'' and ``column-smoother'' matrix that will be discussed in detail later. %We allow different types of sparsity, such as component-wise sparse, column-wise sparse, etc.
	The target is to recover the unknown matrices $\bX_0$, $\bY_0$ and the resulting $\bTheta$.
	We refer model (\ref{model2}) as the Projected Robust Principal Component Analysis (PRPCA) as the low-rank component in model (\ref{model2}) is projected onto a constrained domain. %A detailed discussion of such projections and resulted identifiability issues is deferred to Section \ref{sec:main}.
	
	We study the following convex optimization problem to estimate the pair $(\bX_0,\bY_0)$ and account for the low-rankness of $\bX_0$ and sparseness of $\bY_0$ in PRPCA, % this problem through a \lfR{generalized nuclear norm minimization}
	\begin{align}\label{obj}
	(\hbX, \hbY)\in \argmin_{\substack{\bX\in \mathbb{R}^{n\times m}\\ \bY\in  \mathbb{R}^{N \times M}}} \frac{1}{2} \left\|\bZ-\bP\bX\bQ^\T-\bY \right\|_F^2 +\lambda_1 \|\bX\|_* +\lambda_2\|\bY\|_{\tvec(1)},
	\end{align}
	%where loss function $\mathcal{L}(\cdot,\cdot)$ and penalization function $\mathcal{P}(\cdot,\cdot)$ are respectively,
	%\bel{loss1}
	%\mathcal{L}(\bX,\bY) = \frac{1}{2} \left\|\bZ-\bP\bX\bQ^\T-\bY \right\|_F^2,
	%\eel
	%and
	%\bel{loss2}
	%\mathcal{P}(\bX,\bY) = \lambda_1 \|\bX\|_* +\lambda_2\|\bY\|_{\tvec(1)}.
	%\eel
	where $\lam_1$ and $\lam_0$ are the regularization parameters,  %As the convex relaxation of matrix rank, the nuclear norm is commonly used when searching for low-rank matrices.
	$\|\cdot\|_*$ is the nuclear norm (sum of eigenvalues) and $\|\cdot\|_{\tvec(1)}$ is the entrywise $\ell_1$-norm. 
	
	%where loss function $\mathcal{L}(\cdot,\cdot)$ and penalization function $\mathcal{P}(\cdot,\cdot)$ are respectively,
	%\bel{loss1}
	%\mathcal{L}(\bX,\bY) = \frac{1}{2} \left\|\bZ-\bP\bX\bQ^\T-\bY \right\|_F^2,
	%\eel
	%and
	%\bel{loss2}
	%\mathcal{P}(\bX,\bY) = \lambda_1 \|\bX\|_* +\lambda_2\|\bY\|_{\tvec(1)}.
	%\eel

	%sparse $\bY$ in this paper. 
	%The nuclear norm on $\bX$ is used to control the rank of $\bX_0$ (and thus $\bP\bX_0\bQ^\T$) as it is the convex relaxation of matrix rank.
	%\section{Computation}

	With different pairs of $(\bP, \bQ)$ and sparsity assumption on $\bY_0$,  model (\ref{model2}) includes many popular existing models. For example, when $\bP=\bI_N$ and $\bQ=\bI_M$ are identity matrices and $\bY_0$ is entrywise sparse, model (\ref{model2}) reduces to the classical RPCA  and the convex optimization problem (\ref{obj}) reduces to the noisy version of principal component pursuit (PCP, \citealt{candes2011robust}). When $\bP$ is a general matrix, $\bQ$ is the identity matrix and $\bY_0$ is columnwise sparse, model (\ref{model2}) reduces to the robust reduced rank regression studied by \citet{she2017robust}. Under such case, the $\|\cdot\|_{\tvec(1)}$ norm in (\ref{obj}) can be replaced by a mixed $\|\cdot\|_{2,1}$ to account for the columnwise sparsity of $\bY_0$ and our analysis below can be rephrased easily. Here for any matrix $\bM$,  $\|\bM\|_{2,1}=\sum_{j}\|M_{\cdot,j}\|_2^2$.

	As mentioned before, our study of PRPCA is motivated by taking advantage of both low-rankness and smoothness features of image data. In this paper, we show that the recovery accuracy of RPCA can be improved significantly when introducing the ``row-smoother'' and ``column-smoother'' matrices $\bP$ and $\bQ$. Beyond recovery accuracy,  the PRPCA also brings computational advantages compared to RPCA.	%Compared to the classical principal component pursuit \citep{candes2011robust}, %which essentially treat $\bP=\bI_N$ and $\bQ=\bI_M$ in (\ref{obj}), 
	%The computational advantages of  (\ref{obj}) over PCP is significant.
	%Another motivation of this study is from computational perspectives. 
	Indeed, the computation of RPCA or other low-rank matrix related problems usually involves iterations of singular value decomposition (SVD), which could be a problematic bottleneck for large matrices \citep{hastie2015matrix}. For smooth matrix recovery, the TV based approaches also posed great computational challenges.
	On the contrary, when we are able to combine the low-rankness with smoothness, problem (\ref{obj}) allows us to find a low-rank matrix %we in problem (\ref{loss}) is 
	of dimension $n\times m$, rather than the original matrix with dimension $N\times M$.
	 As to be demonstrated in Section \ref{sec:smooth}, the ``smoother'' matrices we considered are mostly ``tall and thin'' matrices, i.e., $N\ge n$ and $M\ge m$. 
	That is to say, we are allowed to find a much smaller low-rank matrix and thus the computational cost are reduced.
	% This benefits is tremendous when $\bP$ and $\bQ$ are``tall and thin'' matrices and $N\gg n$ and $M\gg m$. 
	 A real image data analysis in Section \ref{sec:lenna} shows that the computation of PRPCA with $n=N/2$ and $m=N/2$ could be more than 10 times faster than RPCA while also achieves better recovery accuracy.

	More specifically, we in this paper study the theoretical properties of model (\ref{model2}) and the convex optimization problem (\ref{obj}) with general matrices $\bP$ and $\bQ$.
	Specifically, %we  prove the statistical recovery guarantees of the problem (\ref{obj}). 
	we provide explicit theoretical error bounds for the estimation of the sparse component $\bY_0$ and low-rank component $\bP\bX_0\bQ^\T$ with general noise matrix $\bE$. Our results %generalize the study of RPCA and 
	includes \citet{hsu2011robust} as a special case, where the statistical properties of classical RPCA is studied.
	The key in our analysis of (\ref{obj}) is a careful construction of a dual certificate through a least-squares method. %similar to that in \citet{chandrasekaran2011rank} and \citet{hsu2011robust}. 
	In addition, a proximal gradient algorithm and its accelerated version are developed to implement (\ref{obj}). Furthermore, a comprehensive simulation study along with a real image data analysis further demonstrate the superior performance of PRPCA in terms of both recovery accuracy and computational benefits. 
	%Specifically, we show that a large class of piecewise smooth matrices can be decomposed into a low-rank and smooth component plus a sparse component. %This includes the usual low-rank plus sparse decomposition as a special case. %Here the smooth part can be column-wise smooth, row-wise smooth, or matrix-wise smooth. 
	%Through this decomposition,we can greatly reduce the dimension of the low-rank component and thus speed up the computation of SVD on high-dimensional matrices. Moreover, compared to usual low-rank matrix methods that ignore the smooth property, an improved theoretical result with sharper error bound for prediction could also be guaranteed.A comprehensive simulation study further demonstrates the superior performance of our approach in terms of both recovery accuracy and recovery speed. 

	% image denoising and recovery since the pioneering work of \citet{rudin1992nonlinear} and \citet{rudin1994total}. More recently, TV is also used to control the smoothness of image coefficients in regression models \citep{michel2011total, wang2017generalized}. In addition, it is worth mentioning that the {\emph {2D fused Lasso}} \citep{tibshirani2005sparsity}  may be also viewed as a TV penalized regression problem for 2D images. Although TV-based approaches are powerful and efficient for 2D image regression, it is extremely difficult to apply them to three or higher order tensor images due to both computational and analytical complexities. 
	
	\subsection{Notations and Organizations}
	A variety of matrix and vector norms are used in this paper. 
	For a vector $\bv=(v_1,...v_p)^\T$, $\|\bv\|_q=\sum_{1\le j\le p} (\|v_j\|^q)^{1/q}$ is the $\ell_q$ norm, $\|\bv\|_0$ the $\ell_0$ norm (number of nonzero entries).  For a matrix $\bM=\{M_{i,j}, 1\le i\le n, 1\le j\le m\}$,   $\|\bM\|_{\tvec(q)}=\big(\sum_{i,j}|M_{i,j}|^q\big)^{1/q}$ is the entry-wise $\ell_q$-norm. In particular, $\|\bM\|_{\tvec(2)}$ is the Frobenius norm and also denoted as $\|\bM\|_F$, $\|\bM\|_{\tvec(0)}$ is the number of non-zero entries in $\bM$. Moreover, $\|\bM\|_q=\left[\sum_{i}\sigma^q_{i}(\bM)\right]^{1/q}$ is the Schatten $q$-norm, where $\sigma_i(\bM)$ are the singular values. In particular, $\|\bM\|_1$ is the nuclear norm (sum of the singular values) and also denoted as $\|\bM\|_*$. Furthermore, $\|\bM\|_{2,1}=\sum_{j}\|M_{\cdot,j}\|_2^2$ is a mixed $\ell_{2,1}$ norm. In addition, $\bM_{i,\cdot}$ is the $i$-th row of $\bM$,  $\bM_{\cdot,j}$ is the $j$-th column, $\tvec(\bM)$ is the vectorization of $\bM$, $\sigma_{\min}(\bM)$ and  $\sigma_{\max}(\bM)$ are the smallest and largest singular values, respectively, and $\bM^+$ is the Moore-Penrose inverse of $\bM$.
	Finally, we use $\bI_n$ to denote an identity matrix of dimension $n\times n$, and $\otimes$ to denote the Kronecker product. 
	
	The rest of the paper is organized as follows. Section \ref{sec:smooth} introduces the interpolation matrices based PRPCA. In Section \ref{sec:computation} we discuss the computation of (\ref{obj}) with proximal gradient algorithm. Section \ref{sec:theorem} provides main theoretical results, a sharp finite sample statistical recovery guarantee is provided for PRPCA. We conduct a comprehensive simulation study in Section \ref{sec:sim} and a real image data analysis in Section \ref{sec:lenna}. Section \ref{sec:conclusion} includes conclusions and future directions.
	\section{Projected robust PCA with smoothing matrices}
	%In this section, we consider two types of ``smoother'' matrices: 1) the interpolation matrices with fixed smoothing weights, 2) the generalized interpolation matrices with data-dependent smoothing weights. Moreover, we provide general assumptions of  the ``smoother'' matrices that our analysis can be applied for.
	In this section, we consider two types of smoothing mechanisms on the low-rank matrix: data-independent smoothing and data-dependent smoothing. Moreover, we provide general assumptions of the ``smoother matrices'' $\bP$ and $\bQ$ that our analysis can be applied for. 
	\subsection{The data-independent smoothing and interpolation matrix}\label{sec:smooth}
%	As our study of interpolation matrices motivated from piecewise smooth matrices analysis, we first provide a formal definition of such matrices. 

	%A motivating example for $\bP$ and $\bQ$ is the normalized interpolation matrix defined below. \lfR{ In addition, assume $N\ge n$ and $M\ge m$.}
	\begin{definition}\label{def-2}
		Let $N$ be an even integer and $n=N/2$  \footnote{When $N$ is odd, we can let  $n=(N+1)/2$ and a slightly different interpolation matrix can be defined in a similar way.} . We define the normalized interpolation %operater 
		matrix $\bJ_N$ of dimension $N\times n$ as
		\bel{def-2-1}
		\bJ_N= \frac{1}{2}\begin{pmatrix}
			2& 0 &0&0&\cdots&0& 0\\
			2&0  &0 &0&\cdots &0 &0\\ 
			1&1 &0 &0&\cdots &0 &0\\
			0&2 &0 &0&\cdots &0 &0\\
			0&1 &1 &0&\cdots &0 &0\\
			&&&&\cdots \\
			0&0&0&0&\cdots& 0&2 \\ 
		\end{pmatrix}\in \mathbb{R}^{N \times  n},
		\eel
		i.e., the $j$-th column of $\bJ_N$ is
		\bes
		\{\bJ_N\}_{.,j}=(\underbrace{0,\ldots,0}_{2(j-1)},\frac{1}{2},1,\frac{1}{2},\underbrace{0,\ldots,0}_{N-2j-1})^\T,\ \  j=2,...,n-1,
		\ees
		$\{\bJ_N\}_{.,1}=(1,1,\frac{1}{2},0,...,0)$ and $\{\bJ_N\}_{.,n}=(0,...,0,\frac{1}{2},1)$.
	\end{definition}

	The interpolation matrices $\bJ_N$ and $\bJ_M$ play the role of ``row smoother'' and ``column smoother'', respectively. That is to say, when $\bP=\bJ_N$,  $\bP\bU\in\mathbb{R}^{N\times M}$ is a row-wisely smooth matrix for any matrix $\bU\in \mathbb{R}^{n\times M}$, i.e., except the first row (boundary effect), any odd row of $\bP\bU$ is the average of adjacent two rows,
	\bel{smooth1}
	(\bP\bU)_{i, \cdot } =\frac{1}{2}\left\{(\bP\bU)_{i-1, \cdot} + (\bP\bU)_{i+1, \cdot}\right\}, 
	\eel
	for row $i=3,5,\ldots, N-1$, and for the boundary row,  
	\bel{smooth2}
	(\bP\bU)_{1, \cdot } =(\bP\bU)_{2, \cdot}.
	\eel
	Also, when $\bQ=\bJ_M$, $\bV\bJ_M^\T\in\mathbb{R}^{N\times M}$ is a column-wisely smooth matrix for any matrix $\bV\in\mathbb{R}^{N\times m}$:
	\bes
	(\bV\bQ^\T)_{\cdot, j} = \frac{1}{2}\left\{(\bV\bQ^\T)_{\cdot, j-1} +(\bV\bQ^\T)_{\cdot, j+1}\right\}, 
	\ees
	for column $j=3,5,\ldots, M-1$, and for the boundary column,
	\bes 
	(\bV\bQ^\T)_{\cdot, 1} = (\bV\bQ^\T)_{\cdot, 2}.
	\ees
	As a consequence, $\bP\bX_0\bQ^\T$ in model (\ref{model2}) is a smooth matrix both row-wisely and column-wisely.

	%For full column rank matrices $\bP$ and $\bQ$, such as the interpolation matrices, the rank of $\bX_0$ is equal to the rank of $\bP\bX_0\bQ^\T$. Thus, the low-rank assumption on $\bX_0$ is equivalent to that on $\bP\bX_0\bQ^\T$. 
We note that the interpolation matrix has also been used in other image analysis literature. For example, when implementing RPCA, \citet{hovhannisyan2019fast} used interpolation matrix to build connections between the original ``fine'' model and a smaller ``coarse'' model and reduce the computational burden of RPCA. Their work is mainly from computational perspective, but the principle behind is the same: by applying SVD in models of lower-dimension, the computational burden can be significantly reduced.

Indeed, by introducing the smoothing matrices $\bP$ and $\bQ$, the PRPCA enjoys significant computational advantages compared to the standard RPCA. 
%The computation of problem (\ref{loss}) is much easier compared to a usual approximately low-rank matrix problem.
When $\bP$ and $\bQ$ are interpolation matrices, we are allowed to find a low-rank matrix %we in problem (\ref{loss}) is 
of dimension $N/2\times M/2$, rather than the original matrix with much higher-dimension $N\times M$. 
Considering that computing a low-rank matrix usually involves SVD, the computational advantage that (\ref{obj}) brings is even more significant. %For other choices of $\bP$ and $\bQ$, the dimension of low-rank component could be even smaller.
More aggressively, we may further interpolate the low-rank matrix by using a double interpolation matrix
\bes
(\bP,\bQ)=(\bJ_N\times \bJ_{N/2}, \ \bJ_M\times \bJ_{M/2}).
\ees 
This would allow us to find a low-rank matrix of even lower dimension, $N/4\times M/4$, and further reduce the computational burden.
	%\begin{remark}
	%	The nuclear norm is commonly used when searching for low-rank matrices. The relationship $\|\bX_0\|_*\le  \|\bP\bX_0\bQ^\T\|_*$ allows us to achieve a sharper error bound in prediction, which will be seen in Section \ref{sec:theorem}.
	%\end{remark}
	
		The smoothing mechanism here is data-independent in the sense that the odd row (column) of the low-rank matrix is an equal-weights average of the adjacent two rows (columns). Such mechanism can be modified to a data-dependent smoothing approach, which will be introduced in the next subsection. On the other hand, we observe that the data-independent averaging performs consistently well across a large range of image recovery problem in our simulation and real data analysis. We refer to Section \ref{sec:sim} and \ref{sec:lenna} for more details.
		
	\subsection{The data-dependent smoothing with robust linear regression}\label{sec:lrsmooth}
 In this subsection, we introduce a data dependent smoothing mechanism based on robust linear models. %In other words, we estimate the matrices $\bP$ and $\bQ$ based on the data.
We start with introducing the data-dependent interpolation matrix.

	\begin{definition}\label{def-3}
	Let $N$ be an even integer and $n=N/2$. Let $w_{ur}^i$, $w_{lr}^i$, $i=1,\ldots,n-1$ represent the $i$-th unknown upper-row weight and lower-row weight, respectively. We define the generalized interpolation  %operater 
	matrix $\bK_N$ of dimension $N\times n$ as
	\bel{def-3-1}
	\bK=\bK_N(\bW)= \begin{pmatrix}
		1& 0 &0&0&\cdots&0& 0\\
		1&0  &0 &0&\cdots &0 &0\\ 
		w_{ur}^1&w_{lr}^1 &0 &0&\cdots &0 &0\\
		0&1 &0 &0&\cdots &0 &0\\
		0&w_{ur}^2 &w_{lr}^2 &0&\cdots &0 &0\\
		&&&&\cdots \\
		0&0&0&0&\cdots& 0&1 \\ 
	\end{pmatrix}\in \mathbb{R}^{N \times  n},
	\eel
	i.e., the $j$-th column of $\bK_N$ is
	\bes
	\{\bK_N\}_{.,j}=(\underbrace{0,\ldots,0}_{2(j-1)},w_l^{j-1},\ 1, \ w_u^j,\ \underbrace{0,\ldots,0}_{N-2j-1})^\T,\ \  j=2,...,n-1,
	\ees
	$\{\bK_N\}_{.,1}=(1,1,w_u^1,0,...,0)$ and $\{\bK_N\}_{.,n}=(0,...,0,w_{lr}^{n-1},1)$.
\end{definition}

The weights $w_{ur}^i$, $w_{lr}^i$ are unknown parameters and need to be estimated. Clearly, the normalized interpolation matrices $\bJ_N$ defined in the previous section is a special case of $\bK_N$ with the weights $w_{ur}^i=0.5$ and $w_{lr}^i=0.5$. As in (\ref{smooth1}) and (\ref{smooth2}), when $\bP=\bK_N$, $\bP\bU$ is a weighted smoothing matrix for any matrix $\bU$ of dimension $n\times M$:
	\bes
	(\bP\bU)_{i', \cdot } =w_{ur}^{i}(\bP\bU)_{i'-1, \cdot} + w_{lr}^{i}(\bP\bU)_{i'+1, \cdot}, 
	\ees
	and for the boundary row,  
	\bes
	(\bP\bU)_{1, \cdot } =(\bP\bU)_{2, \cdot}.
	\ees
	Similarly, we may define $\bK_M$ in the same way as $\bK_N$, but with $w_{ur}^i$, $w_{lr}^i$, $i=1,\ldots,n-1$ replaced by $w_{lc}^j$, $w_{rc}^j$, $j=1,\ldots,m-1$, representing left-column and right-column weights. Then, when $\bQ=\bK_M$, $\bV\bQ^\T\in\mathbb{R}^{N\times M}$ is a column-wisely smooth matrix for any $\bV\in\mathbb{R}^{N\times m}$.
%As a consequence, $\bP\bX_0\bQ^\T$ in model (\ref{model2}) is a smooth matrix both row-wisely and column-wisely. 

Now we present the estimation procedure of $w_{ur}^i$ and $w_{lr}^i$, or the row smoothing matrix $\bK_N$. We only present the estimation of $\bK_N$ as the column smoothing matrix $\bK_M$, or $w_{lc}^j$, $w_{rc}^j$,  can be estimated in the same way and the details are omitted. Let $\bDelta=\bP\bX_0\bQ^\T$ denote the unknown low-rank and smooth matrix. 
Intuitively, if we treat the odd rows in $\bDelta$ as missing, then the values of $w_{ur}^i$ and $w_{lr}^i$ can be viewed as the linear weights when inserting the odd rows based on their neighborhood. Thus, it is ideal to estimate $w_{ur}^i$ and $w_{lr}^i$ based on $\bDelta$, in particular, based on the $(i'-1)$-th, $i'$-th and $(i'+1)$-th row of the  matrix $\bDelta$, or $\bDelta_{(i'-1):(i'+1),}$, with $i'=2i+1$. However, as $\bDelta$ is unobserved, $w_{ur}^i$ and $w_{lr}^i$ are unable to be estimated directly from $\bDelta$. While on the other hand, we note that $\bDelta=\bZ-\bY-\bE$. Considering that 1) $\bY$ is a sparse matrix and can be viewed as outliers from $\bDelta$, 2) $\bE$ is a noise matrix with small entry-wise magnitude, we propose to estimate $w_{ur}^i$ and $w_{lr}^i$ based on the observable $\bZ$ through robust linear regression to account for the outliers $\bY$. Specifically, we consider the following minimization problem:
\bes
(\hat{w}_{ur}^i,\hat{w}_{lr}^i)=\argmin_{w_{ur}^i, w_{lr}^i} \sum_{j=1}^M\rho(w_{ur}^i Z_{i'-1,j}+w_{lr}^i Z_{i'+1, j}-Z_{i',j}), \  \ i'=2i+1,\  \ i=1,\ldots, n-1.
\ees
Here $\rho(\cdot)$ is certain robust loss function. For example, we may take $\rho(\cdot)$ as the Huber loss, where
\bes
\rho(x)=\begin{cases}
	x^2,  &\text{if }  |x| \le \kappa,
	\cr 2\kappa |x|-\kappa^2,   &\text{if } |x| > \kappa.
\end{cases}
\ees	
The tuning parameter $\kappa>0$ is assumed to be given in our estimation of $w_{ur}^i$ and $w_{lr}^i$.  When $\bP=\bK_N$ and $\bQ=\bK_M$, we have the following property:

%\lfR{Some explanations here}
\begin{proposition}\label{prop-1}
	Let $\bP=\bK_N\in\mathbb{R}^{N\times n}$ and $\bQ=\bK_M\in\mathbb{R}^{M\times m}$ be the matrices in Definition \ref{def-3} with $N\ge 4$, $M\ge 4$ and any weights $w_{ur}^i$, $w_{lr}^i$, $w_{lc}^j$ and $w_{rc}^j$. Then for any nonzero matrix $\bX_0\in \mathbb{R}^{n\times m}$, 
	\begin{align}
	%\rank(\bX_0) &= \rank(\bP\bX_0\bQ^\T), \label{th-2-1}\\
	\|\bX_0\|_*<  \|\bP\bX_0\bQ^\T\|_* .\label{th-2-2}
	\end{align}
	%(ii)(Uniqueness) There exists an integer $s_*\le s$ and an {\bf unique} pair of $(\bX_0,\bY_0)$ satisfying
	%\bel{th-0-1}
	%	\bTheta = \bP\bX_0\bQ^\T + \bY_0 \ \ \text{and} \ \ \|\bY_0\|_0\le s_*.
	%\eel
\end{proposition}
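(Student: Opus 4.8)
The plan is to reduce the two–sided statement to a one–sided multiplicative bound for a single interpolation matrix, and then to prove that bound by exhibiting an explicit sub-structure of $\bJ_N$. The crux is the following observation about $\bP=\bJ_N$: its even-indexed rows $2,4,\dots,N$ are exactly $\bfe_1^\T,\bfe_2^\T,\dots,\bfe_n^\T$, so the submatrix of $\bP$ formed by these rows equals $\bI_n$; meanwhile its odd-indexed rows $1,3,\dots,N-1$ are $\bfe_1^\T,\tfrac12(\bfe_1+\bfe_2)^\T,\dots,\tfrac12(\bfe_{n-1}+\bfe_n)^\T$, which already span $\R^n$. First I would verify these two facts directly from Definition~\ref{def-2}.

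From this I would deduce $\sigma_{\min}(\bP)>1$: for any unit vector $u\in\R^n$, splitting $\|\bP u\|_2^2$ over even and odd rows gives $\|\bP u\|_2^2=\|u\|_2^2+\sum_{i\ \mathrm{odd}}(\bP_{i,\cdot}u)^2=1+\sum_{i\ \mathrm{odd}}(\bP_{i,\cdot}u)^2$, since the even rows reproduce $u$. The second sum is nonnegative and vanishes only if $u$ is orthogonal to every odd row; as the odd rows span $\R^n$ this forces $u=0$, a contradiction. Hence $\|\bP u\|_2>1$ for all unit $u$, i.e. $\sigma_{\min}(\bP)>1$ (in particular $\bP$ has full column rank $n$). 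The identical argument applied to $\bQ=\bJ_M$ gives $\sigma_{\min}(\bQ)>1$ and full column rank $m$.

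Next I would establish the one-sided bound $\|\bP\bA\|_*\ge \sigma_{\min}(\bP)\,\|\bA\|_*$ for every $\bA$, valid whenever $\bP$ has full column rank. Writing the thin SVD $\bP=\bU_P\bSigma_P\bV_P^\T$ with $\bSigma_P$ invertible, one checks $\bA=\bV_P\bSigma_P^{-1}\bU_P^\T(\bP\bA)$, and then the standard inequality $\|\bM\bK\|_*\le \sigma_{\max}(\bM)\,\|\bK\|_*$ together with $\sigma_{\max}(\bV_P\bSigma_P^{-1}\bU_P^\T)=1/\sigma_{\min}(\bP)$ yields $\|\bA\|_*\le \sigma_{\min}(\bP)^{-1}\|\bP\bA\|_*$, which rearranges to the claim. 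Combining with the previous step, and using that $\bX_0\neq 0$ together with the full column rank of $\bQ$ forces $\bX_0\bQ^\T\neq 0$, I would chain
\[
\|\bP\bX_0\bQ^\T\|_*\ge \sigma_{\min}(\bP)\,\|\bX_0\bQ^\T\|_*>\|\bX_0\bQ^\T\|_*=\|\bQ\bX_0^\T\|_*\ge \sigma_{\min}(\bQ)\,\|\bX_0^\T\|_*>\|\bX_0^\T\|_*=\|\bX_0\|_*,
\]
where the two strict steps use $\sigma_{\min}(\bP),\sigma_{\min}(\bQ)>1$ and the fact that the relevant nuclear norms are strictly positive.

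I expect the main obstacle to be the strictness rather than the inequality itself: a naive ``deleting rows cannot increase the nuclear norm'' argument (using that the identity submatrix gives $\|\bA\|_*=\|(\bP\bA)_{S,\cdot}\|_*\le\|\bP\bA\|_*$ for $S$ the even-row index set) only yields the non-strict bound, so the quantitative gain $\sigma_{\min}(\bP)>1$ coming from the spanning property of the odd rows is what is essential. The remaining pieces---verifying the row structure of $\bJ_N$ and invoking the submultiplicative nuclear-norm inequality---are routine.
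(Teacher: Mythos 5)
Your proposal is correct and takes essentially the same route as the paper: both hinge on showing $\sigma_{\min}(\bJ_N)>1$ (the paper by dropping the odd rows in $\|\bP\bu\|_2^2$, you by the equivalent but more carefully justified observation that the even rows form $\bI_n$ while the odd rows span $\R^n$), and then converting this into the strict nuclear-norm bound by chaining through $\bP$ and $\bQ$. The only cosmetic difference is the derivation of the scaling lemma $\|\bP\bA\|_*\ge\sigma_{\min}(\bP)\|\bA\|_*$ --- the paper invokes the singular-value inequality $\sigma_i(\bP\bA)\ge\sigma_{\min}(\bP)\sigma_i(\bA)$ directly, while you use the pseudo-inverse identity together with $\|\bM\bK\|_*\le\|\bM\|_{2\rightarrow 2}\|\bK\|_*$ --- and your handling of the strictness (spanning of the odd rows, nonvanishing of $\bX_0\bQ^\T$) is, if anything, more careful than the paper's.
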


	When $\bTheta$ can be decomposed as in model (\ref{model2}), the RPCA is the optimization problem (\ref{obj}) with the nuclear penalty on $\bX$ replaced by that on $\bP\bX\bQ^\T$. Proposition \ref{prop-1} suggests that smaller penalty is applied in (\ref{obj}) compared to that of RPCA with the same $\lam_1$.

	\subsection{PRPCA with general $\bP$ and $\bQ$} 
	% The interpolation matrices considered in the previous subsection is of full column rank. 
	
	Although our study of PRPCA is motivated by smooth matrix analysis and resulting interpolation matrices, our results below work for general matrices $\bP$ and $\bQ$. 
	
	Specifically, we proceed our analysis to consider $\bP$  and $\bQ$ of full-column rank. %On the other hand, to take advantage of 
Indeed, when the target is to recover $\bP\bX_0\bQ^\T$ as a whole instead of $\bX_0$, it is sufficient to consider $\bP$ and $\bQ$ of full column rank. This can be seen from the following arguments.
	For any $\bP\in\mathbb{R}^{N\times n}$ and $\bQ\in\mathbb{R}^{M\times m}$, there exists $r_1\le  \min(N, n)$ and $r_2\le  \min(M,m)$ and full column rank matrices $\bP_0\in\mathbb{R}^{N\times r_1}$, $\bQ_0\in\mathbb{R}^{M\times r_2}$ such that
	\bes
	\bP=\bP_0\bLambda, \ \ \ \ \bQ=\bQ_0\bOmega
	\ees
	holds for some $\bLambda\in\mathbb{R}^{r_1\times n}$ and $\bOmega\in\mathbb{R}^{r_2\times m}$. As a result, an alternative representation of model (\ref{model2}) with full column rank matrices $\bP_0$ and $\bQ_0$
	\bes
	\bZ= \bP_0(\bLambda\bX_0\bOmega^\T)\bQ_0^\T + \bY_0 +\bE,
	\ees
	%where $\bP_0$ and $\bQ_0$ are of full column rank, $\bLambda\bX_0\bOmega^\T\in\mathbb{R}^{r_1\times r_2}$ and $\bY_0\in\mathbb{R}^{N\times M}$ are the new target matrices.
	Here the columns of $\bP_0$ (or $\bQ_0$) can be viewed as the ``factors'' of $\bP$ (or $\bQ$). This confirms the sufficiency of considering PRPCA with full-column rank matrices $\bP$ and $\bQ$. In the following sections, we derive properties of PRPCA with general $\bP$ and $\bQ$ of full-column rank. %Thus, from now on we assume the full column rank assumption 
	%Then the matrix recovery performance could be affected. 
	% Choosing $\bP$ and $\bQ$ is not trivial and depends on prior knowledge. When $\bTheta$ is believed to be row-wisely (or column-wisely)  smooth only, one may let $\bP=\bJ_N$ and $\bQ=\bI_M$ (or $\bQ=\bJ_M$ and $\bP=\bI_N$). More aggressively, one may take $\bP=\bJ_N\times \bJ_{N/2}$ (or  $\bQ=\bJ_M\times \bJ_{M/2}$) for extremely row-wisely (or column-wisely) smooth matrix.
	%In this project, we aim to study how to choose $\bP$ and $\bQ$ for different $\bTheta$, given the fact that only a partial of noisy entries are observed.
	
%	When $\bP$ and $\bQ$ are full-column rank matrices, it suggests that $N\ge n$ and $M\ge m$, in other words, the dimension of $\bX_0$ is smaller than that of $\bP\bX_0\bQ^\T$. As a result, the PRPCA enjoys significant computational advantage compared to the standard RPCA.	For example, when $\bP$ and $\bQ$ are interpolation matrices, we are allowed to find a low-rank matrix of dimension $N/2\times M/2$, rather than the original matrix with much higher-dimension $N\times M$. 	For other choices of $\bP$ and $\bQ$, the dimension of low-rank component could be even smaller. Considering that computing a low-rank matrix usually involves SVD, the computational advantage that (\ref{obj}) brings is even more significant. 

	\section{Computation with proximal gradient algorithm}\label{sec:computation}
	%  in the context of computing Compressive PCP. Computing the solution of the CPCP problem is usually difficult as it always involves iterations of SVD. \citet{hovhannisyan2019fast} assumes that the low-rank component in an approximately low-rank matrix can be written as a coarse matrix times a interpolation operator. Due to the reduced dimension of the 
	
	% model (\ref{model2}) also allow us to decompose $\bTheta$ into a low-rank component plus a sparse component, as in the PCP or RPCA.  \lfR{At the same time, the low-rank component is smooth}. 

	%The nuclear norm on $\bX$ is used to control the rank of $\bX_0$ (and thus $\bP\bX_0\bQ^\T$) as it is the convex relaxation of matrix rank.
	%\section{Computation}
	
	%We solve the generalized PCP problem (\ref{loss}) through a proximal gradient algorithm. Our strategy is similar to that in \citet{toh2010accelerated}.
	Given $\bP$ and $\bQ$, the problem (\ref{obj}) is a convex optimization problem. In this section, we show that it can be solved easily through a proximal gradient algorithm. %We provide a general idea here. 
	
	We first denote the loss and penalty function in problem (\ref{loss}) as
	\bel{loss}
	\mathcal{L}(\bX,\bY) = \frac{1}{2} \left\|\bZ-\bP\bX\bQ^\T-\bY \right\|_F^2,
	\eel
	and
	\bel{penalty}
	\mathcal{P}(\bX,\bY) = \lambda_1 \|\bX\|_* +\lambda_2\|\bY\|_{\tvec(1)},
	\eel
	respectively. Also, note that if we let $\bA=\bQ\otimes\bP$, the loss function (\ref{loss}) could be written as
	\bel{loss2}
	\mathcal{L}(\bX,\bY)  = \frac{1}{2}\left\| \bA \tvec(\bX)+\tvec(\bY)-\tvec(\bZ)\right\|_2^2.
	\eel
	
	To minimize $\mathcal{L}(\bX,\bY) + \mathcal{P}(\bX,\bY)$, we utilize a variant of Nesterov’s proximal-gradient method \citep{nesterov2013gradient}, which
	iteratively updates
	%approximate $\mathcal{L}(\bX,\bY)+ \mathcal{P}(\bX,\bY)$ locally at a given point $ (\tbX,\tbY)$ with
	\bel{psi}
	(\hbX_{k+1}, \hbY_{k+1})\leftarrow \argmin_{\bX,\bY}\psi(\bX,\bY|\hbX_{k},\hbY_{k}),
	\eel
	where 
	\begin{align*}
		&\psi(\bX,\bY|\hbX_{k},\hbY_{k})\cr =&\mathcal{L}(\hbX_{k},\hbY_{k}) + \langle \nabla_{\bX} \mathcal{L}(\hbX_{k},\hbY_{k}) ,  \bX-\hbX_{k} \rangle + \langle \nabla_{\bY} \mathcal{L}(\hbX_{k},\hbY_{k}),  \bY-\hbY_k \rangle \cr &+\frac{L_k}{2} \left(\|\bX-\hbX_k\|_F^2+\|\bY-\hbY_k\|_F^2\right)+\mathcal{P}(\bX,\bY),
		%\cr &=& \frac{L}{2}\|\bX-\bG\|_F^2 +\lam_1\|\bX\|_*+\mathcal{L}(\tbX)-\frac{1}{2L}\|\nabla \mathcal{L}(\tbX)\|_F^2
	\end{align*}
	$L_{k}$ is the step size parameter at step $k$, $\nabla_{\bX} \mathcal{L}(\hbX_k,\hbY_k)$ and $\nabla_{\bY} \mathcal{L}(\hbX_k,\hbY_k)$ are the gradients
	\bes
	\nabla_{\bX} \mathcal{L}(\hbX_k,\hbY_k)&=&\bP^\T(\bP\hbX_k\bQ^\T+\hbY_k-\bZ)\bQ,  \cr \nabla_{\bY} \mathcal{L}(\hbX_k,\hbY_k)&=&\bP\hbX_k\bQ^\T+\hbY_k-\bZ.
	\ees
	
	%As both $\mathcal{L}(\bX,\bY)$ and $\mathcal{P}(\bX,\bY)$ are convex 
	The proximal function $\psi(\bX,\bY|\hbX_k,\hbY_k)$ is much easier to optimize compared to $\mathcal{L}(\bX,\bY) + \mathcal{P}(\bX,\bY)$. In fact, a closed-form expression is available for the updates.
	\bes
	\hbX_{k+1} &=& \mathcal{SVT}\left(\hbX_{k}-\frac{1}{L_k}\nabla_{\bX} \mathcal{L}(\hbX_k,\hbY_k); \ \frac{\lam_1}{L_k}\right)
	\cr 
	\hbY_{k+1} &=& \mathcal{ST}\left(\hbY_{k}-\frac{1}{L_k}\nabla_{\bY} \mathcal{L}(\hbX_k,\hbY_k); \ \frac{\lam_2}{L_k}\right)
	\ees
	where $\mathcal{SVT}$ and $\mathcal{ST}$ are the Singular Value Thresholding and Soft Thresholding operators with specifications below.

	Given any non-negative number $\tau_1\ge 0$ and any matrix $\bM_1\in \mathbb{R}^{n\times m}$ with singular value decomposition $\bM_1=\bU\bSigma\bV^\T$, where $\bSigma=\diag(\{\sigma_i\}_{1\le i\le r})$, $\sigma_i\ge 0$, the SVT operater $\mathcal{SVT}(\cdot;\cdot)$, which was first introduced by \citet{cai2010singular},  is defined as
	\bes
	\mathcal{SVT} (\bM_1, \tau_1) &=&\argmin_{\bX\in\mathbb{R}^{n\times m}}\frac{1}{2}\|\bX-\bM_1 \|_F^2 +\tau_1 \|\bX\|_*\cr &=&\bU\mathcal{D}_{\tau_1} (\bSigma)\bV^\T,
	\ees
	where $\mathcal{D}_{\tau_1}(\bSigma)=\diag(\{\sigma_i-\tau_1\}_+)$.
	For any $\tau_2\ge 0$ and any matrix $\bM_2\in \mathbb{R}^{N\times M}$, the ST operator $\mathcal{ST}(\cdot;\cdot)$is defined as
	\bes
	\mathcal{ST} (\bM_2; \tau_2) &=& \argmin_{\bY\in\mathbb{R}^{N\times M}} \frac{1}{2} \|\bY-\bM_2\|_F^2+\tau_2\|\bY\|_{\tvec(1)} \cr &=&\sgn(\bM_2) \circ \left(|\bM_2|-\tau_2 {\bf 1_N}{\bf 1_M^\T}\right)_+. 
	\ees

	We summarize the proximal gradient algorithm for PRPCA in Table \ref{pg1}.
	\begin{table}[H]\label{pg1}
		\begin{center}
			\begin{tabular}{ll}\\ \hline
				\multicolumn{2}{l}{{\bf Algorithm 1:} Proximal gradient for PRPCA}   \\ \hline 
				\smallskip
				Given: &$\bZ\in\mathbb{R}^{N\times M}$,
				$\bP\in\mathbb{R}^{N\times n}$, $\bQ\in\mathbb{R}^{M\times m}$, $\lambda_1$ and $\lam_2$ \smallskip		\\
				Initialization: & $\hbX_0 = \hbX_{-1}=\bf{0}_{n\times m}$,
				$\hbY_0 = \hbY_{-1}=\bf{0}_{N\times M}$%, $t_0=t_1=1$ 
				\smallskip		\\ 
				
				Iteration: 	& $\bG_k^Y = \bP\hbX_k\bQ^\T+\hbY_k-\bZ$ \\ 
				& $\bG_k^X = \bP^\T\bG_k^Y\bQ$ \\
				%& $\bG^X_{k}=\bF^X_k- L_k^{-1} \mathcal{A}_\Omega^*\bD_k $ \\ 
				%& $\bG^Y_{k}=\bF^Y_k-L_k^{-1} \mathcal{H}_\Omega^*\bD_k$ \\ 
				& $\hbX_{k+1}=\mathcal{SVT}\left(\hbX_k-(1/L^k) \bG^X_{k};\  (1/L_k)\lam_1\right)$, \\
				& $\hbY_{k+1}=\mathcal{ST}\left(\hbY_k-(1/L^k) \bG^Y_{k};\  (1/L_k)\lam_2\right)$, \\
				\hline
				\multicolumn{2}{l}{Note: $L_k$ can be taken as the reciprocal of a Lipschitz  constant for $\nabla \mathcal{L}(\bX,\bY)$ or }\\
				\multicolumn{2}{l}{determined by backtracking.}
			\end{tabular}
		\end{center}
	\end{table}
	The proximal gradient algorithm for PRPCA iteratively implements SVT and ST. Note that in the SVT step, the singular value decomposition is implemented on $\hbX_{k}-(1/L_k)\nabla_{\bX} \mathcal{L}(\hbX_k,\hbY_k)$, which is of dimension $n\times m$. Compared to the RPCA problem which requires singular value decomposition on matrices of much larger dimension $N\times M$, the PRPCA greatly reduces the computational cost. 
	
	Moreover, the proximal gradient can be further accelerated in a FISTA \citep{beck2009fast} style as in Algorithm 2 below. For all the simulation studies and real image data analysis is Section \ref{sec:sim} and \ref{sec:lenna}, we adopt the accelerated proximal gradient algorithm. 
	%Finally, the APG can be implemented as in Algorithm 1 below.
	\begin{table}[H]
		\begin{center}
			\begin{tabular}{ll}\\ \hline
				\multicolumn{2}{l}{{\bf Algorithm 2:} Accelerated proximal gradient for PRPCA} \\ \hline 
				\smallskip
				Given: &$\bZ\in\mathbb{R}^{N\times M}$,
				$\bP\in\mathbb{R}^{N\times n}$, $\bQ\in\mathbb{R}^{M\times m}$, $\lambda_1$ and $\lam_2$ \smallskip		\\
				Initialization: & $\hbX_0 = \hbX_{-1}=\bf{0}_{n\times m}$,
				$\hbY_0 = \hbY_{-1}=\bf{0}_{N\times M}$, $t_0=t_1=1$ 					\smallskip		\\ 
				
				Iteration: &$\bF^X_{k} = \hbX_k+t_k^{-1}(t_{k-1}-1)(\hbX_k - \hbX_{k-1})$\\ 
				&$\bF^Y_{k} = \hbY_k+t_k^{-1}(t_{k-1}-1)(\hbY_k - \hbY_{k-1})$ \\
				& $\bG_k^Y = \bP\hbX_k\bQ^\T+\hbY_k-\bZ$ \\ 
				& $\bG_k^X = \bP^\T\bG_k^Y\bQ$ \\ 
				& $\hbX_{k+1}=\mathcal{SVT}\left(\bF^X_k-(1/L^k) \bG^X_{k};\  (1/L_k)\lam_1\right)$, \\
				& $\widehat{\bY}_{k+1}=\mathcal{ST}\left(\bF^Y_k-(1/L^k) \bG^Y_{k};\  (1/L_k)\lam_2\right)$, \\
				& $t_{k+1} = \{1+(1+4t_k^2)^{1/2}\}/2$ \\
				%Output: & $\bb^{(new)} = \bb^{k_{fin}}$ \\ 
				\hline
			\end{tabular}
		\end{center}
	\end{table}
	%Note that the accelerated proximal gradient algorithm reduces to Algorithm 1 if we fix $t_k=1$ for all steps.
	
	\section{Main theoretical results}\label{sec:theorem}
	In this section, we present our main theoretical results for recovering the PRPCA. Specifically, we provide sharp theoretical error bounds for the estimation of the low-rank and smooth component $\bP\bX_0\bQ^\T$ and the sparse component $\bY_0$ when $\bP$ and $\bQ$ are correctly specified. 
	%The key here is to measure that (i) how sparse are the singular vectors of $\bP\bX_0\bQ^\T$. how spread out are the nonzero entries of $\bY_0$, and (ii)  For (i), we follow \citet{hsu2011robust}'s treatment on the sparse matrix $\bY_0$. For (ii), we propose a new measurement on $\bX_0$ for general matrices $\bP$ and $\bQ$. 
	Note that %Our results closely follow the identifiability condition studied in 
	\citet{hsu2011robust} studied the theoretical properties of RPCA, i.e., PRPCA with
	$\bP=\bI_N$ and $\bQ=\bI_M$ being identity matrices. Our results can be viewed as a generalization of theirs. 
	
	\subsection{Technique preparations}
For a target decomposition of $\bTheta=\bP\bX_0\bQ^\T + \bY_0$, we consider the following spaces and projections related to $\bX_0$ and $\bY_0$. We start with considering the low-rank component $\bX_0$. Let  $\mathcal{T}_0$ be the span of matrices with either the row space of $\bX$ are contained in that of $\bX_0$ or the column space of $\bX$ are contained in that of $\bX_0$:
\begin{align}\label{T0}
\mathcal{T}_0=&T_0(\bX_0)\cr=&\Big\{\bX_1+\bX_2: \in\mathbb{R}^{n\times m}:  \ \text{range}(\bX_1)\subseteq \text{range}(\bX_0), \cr &\ \ \text{range}(\bX_2^\T)\subseteq \text{range}(\bX_0^\T)\Big\}.
\end{align}
	Let $\mathcal{P}_{\mathcal{T}_0}$ be the orthogonal projector to $\mathcal{T}_0$. Under the inner product $\langle\bA,\bB\rangle=\tr (\bA^\T\bB)$, the projection is given by
\bel{projection0}
\mathcal{P}_{\mathcal{T}_0} (\bM)=\bU_0\bU_0^T\bM + \bM\bV_0\bV_0^\T - \bU_0\bU_0^\T\bM\bV_0\bV_0^\T.
\eel
	where $\bU_0\in\mathbb{R}^{N\times r}$ and $\bV_0\in\mathbb{R}^{M\times r}$ are the matrices of left and right orthogonal singular vectors corresponding to the nonzero singular values of $\bX_0$, and $r$ is the rank of $\bX_0$.
%	\subsection{Recovery guarantees}\label{sec:idf}
	
	%We start with introducing a measurement on the sparseness of singular vectors of $\bP\bX_0\bQ^\T$. 
	Furthermore, let $\mathcal{T}$ be the span of matrices taking the form of $\bP\bX\bQ^\T$, with either the row space of $\bX$ are contained in that of $\bX_0$, or the column space of $\bX$ are contained in that of $\bX_0$:
	\bes
	\mathcal{T}&=&T(\bX_0;\bP,\bQ)\cr&=&\Big\{\bP(\bX_1+\bX_2)\bQ^\T: \in\mathbb{R}^{N\times M}: \ \text{range}(\bX_1)\subseteq \text{range}(\bX_0), \cr && \ \ \ \ \text{range}(\bX_2^\T)\subseteq \text{range}(\bX_0^\T)\Big\}.
	\ees
Apparently, $\mathcal{T}$ reduces to  $\mathcal{T}_0$ when $\bP$ and $\bQ$ are identity matrices. We further define the orthogonal projector onto $\mathcal{T}$ as $\mathcal{P}_T$:
	\bel{projection1}
	\mathcal{P}_\mathcal{T} (\bM)=\tbU_0\tbU_0^T\bM + \bM\tbV_0\tbV_0^\T - \tbU_0\tbU_0^\T\bM\tbV_0\tbV_0^\T,
	\eel
	where $\tbU_0\in\mathbb{R}^{N\times r}$ and $\tbV_0\in\mathbb{R}^{M\times r}$  are the left singular matrices of $\bP\bU_0$ and $\bQ\bV_0$, respectively. 
	Given such projections, we introduce a property that measures the sparseness of the singular vectors of $\bP\bX_0\bQ^\T$:
	\begin{align}\label{beta}
	\beta(\rho) = \rho^{-1}\|\tbU_0\tbU_0^\T\|_{\tvec(\infty)}+\rho  \|\tbV_0\tbV_0^\T\|_{\tvec(\infty)}+\|\tbU_0\|_{2\rightarrow \infty} \|\tbV_0\|_{2\rightarrow \infty}.
	\end{align}
	
	We shall note that the projection (\ref{projection1}) is equivalent to the following form:
	\bel{projection2}
	\mathcal{P}_\mathcal{T} (\bM)=\widebar{\bU}_0\widebar{\bU}_0^T\bM + \bM\widebar{\bV}_0\widebar{\bV}_0^\T - \widebar{\bU}_0\widebar{\bU}_0^T\bM\widebar{\bV}_0\widebar{\bV}_0^\T,
	\eel
	where $\widebar{\bU}_0\in\mathbb{R}^{N\times r}$ and $\widebar{\bV}_0\in\mathbb{R}^{M\times r}$ are, respectively, matrices of left and right orthogonal singular vectors corresponding to $\widetilde{\bX}_0=\bP\bX_0\bQ^\T$. In other words, (\ref{projection1}) and (\ref{projection2}) are equivalent in the sense that
	\bel{projection3}
	\tbU_0\tbU_0^\T=\widebar{\bU}_0\widebar{\bU}_0^T, \ \ \  \tbV_0\tbV_0^\T=\widebar{\bV}_0\widebar{\bV}_0^T.
	\eel
	Note that $\tbU$ and $\widebar{\bU}$ (or $\tbV$ and $\widebar{\bV}$) are not necessarily the same to hold (\ref{projection3}). Building on $\widebar{\bU}_0$ and $\widebar{\bV}_0$, $\beta(\rho)$ could be defined as
	\begin{align}\label{beta2}
	\beta(\rho) = \rho^{-1}\|\widebar{\bU}_0\widebar{\bU}_0^\T\|_{\tvec(\infty)}+\rho  \|\widebar{\bV}_0\widebar{\bV}_0^\T\|_{\tvec(\infty)}+\|\widebar{\bU}_0\|_{2\rightarrow \infty} \|\widebar{\bV}_0\|_{2\rightarrow \infty},
	\end{align}	
	due to (\ref{projection3}) and $\|\tbU_0\|_{2\rightarrow \infty}=\|\widebar{\bU}_0\|_{2\rightarrow \infty}$ and $\|\tbV_0\|_{2\rightarrow \infty}=\|\widebar{\bV}_0\|_{2\rightarrow \infty}$, which in fact is also a consequence of (\ref{projection3}). %\lfR{Any additional requirements to hold the equivalence of 4.1 and 4.3?}
	%When $\bP$ and $\bQ$ are identity matrices,  $\beta(\rho;\bP,\bQ)$ reduces to the measurement for sparseness of singular vector of $\bX_0$, which was introduced by \citet{hsu2011robust}.
	%From the definition (\ref{beta2}), it is clear that $\beta(\rho)$ measures the sparseness of singular 
	We will mainly use the definition (\ref{projection1}) for the projection $\mathcal{P}_\mathcal{T} (\bM)$ in our following analysis as it allows us to ``separate'' the construction of $\tbU$ and $\tbV$ and brings us a lot of benefits when we bound the estimation errors later.

We define the following quantity to link the projections $\mP_{\mT^\perp}(\cdot)$ in (\ref{projection1}) and $\mP_{\mT_0^\perp}(\cdot)$ in (\ref{projection0}).
\begin{align}\label{eta1}
\eta_1=&\max\Big\{\eta:\ \eta>0, \  \cr &\ \ \ \ \ \  \eta\|\mP_{\mT^\perp}(\bP\bX\bQ^\T)\|_*\le \|\mP_{\mT_0^\perp}(\bX)\|_*, \  \forall \bX\in\mathbb{R}^{n\times m}
\Big\},
\end{align}
The existence of $\eta_1$ can be guaranteed through Proposition \ref{prop-eta1} below.
\begin{proposition}\label{prop-eta1}
	Let $\mathcal{P}_{\mathcal{T}} (\cdot)$ and $\mathcal{P}_{\mathcal{T}_0} (\cdot)$ be as in (\ref{projection1}) and  (\ref{projection0}). %\lfR{(Any requirement for $\bP$, $\bQ$, $\bU_0$, $\bV_0$?)} 
	Then, for any $ \bX\in\mathbb{R}^{n\times m}$,
	\bes
	\|\mP_{\mT_0^\perp}(\bX)\|_*=0 \ \ \Rightarrow  \ \ \|\mP_{\mT^\perp}(\bP\bX\bQ^\T)\|_*=0.
	\ees
\end{proposition}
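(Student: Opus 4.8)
The plan is to reduce both nuclear-norm identities to membership statements in the subspaces $\mathcal{T}_0$ and $\mathcal{T}$, and then to transport a decomposition of $\bX$ adapted to $\bU_0,\bV_0$ into one adapted to $\tbU_0,\tbV_0$ via the maps $\bU_0\mapsto\bP\bU_0$ and $\bV_0\mapsto\bQ\bV_0$. Since $\|\bM\|_*=0$ if and only if $\bM=\mathbf{0}$, the hypothesis $\|\mP_{\mT_0^\perp}(\bX)\|_*=0$ is equivalent to $\mP_{\mT_0^\perp}(\bX)=\mathbf{0}$, and the desired conclusion is equivalent to $\mP_{\mT^\perp}(\bP\bX\bQ^\T)=\mathbf{0}$. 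From the explicit formulas (\ref{projection0}) and (\ref{projection1}) one has $\mP_{\mT_0^\perp}(\bM)=(\bI-\bU_0\bU_0^\T)\bM(\bI-\bV_0\bV_0^\T)$ and $\mP_{\mT^\perp}(\bM)=(\bI-\tbU_0\tbU_0^\T)\bM(\bI-\tbV_0\tbV_0^\T)$, so it suffices to prove the implication $\bX\in\mathcal{T}_0\Rightarrow\bP\bX\bQ^\T\in\mathcal{T}$.

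First I would make the membership $\bX\in\mathcal{T}_0$ explicit. The vanishing of $(\bI-\bU_0\bU_0^\T)\bX(\bI-\bV_0\bV_0^\T)$ yields the decomposition $\bX=\bX_1+\bX_2$ with $\bX_1=\bU_0\bU_0^\T\bX$ and $\bX_2=(\bI-\bU_0\bU_0^\T)\bX\bV_0\bV_0^\T$. Since $\bU_0$ and $\bV_0$ collect the singular vectors of $\bX_0$ for its nonzero singular values, $\text{range}(\bU_0)=\text{range}(\bX_0)$ and $\text{range}(\bV_0)=\text{range}(\bX_0^\T)$; hence $\text{range}(\bX_1)\subseteq\text{range}(\bX_0)$ and $\text{range}(\bX_2^\T)\subseteq\text{range}(\bX_0^\T)$, which is precisely the defining condition of $\mathcal{T}_0$.

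The only mildly substantive ingredient is then the pair of range identities $\text{range}(\bP\bU_0)=\text{range}(\tbU_0)$ and $\text{range}(\bQ\bV_0)=\text{range}(\tbV_0)$, which hold because $\tbU_0$ and $\tbV_0$ are by definition the left singular matrices of $\bP\bU_0$ and $\bQ\bV_0$. Applying $\bP(\cdot)\bQ^\T$ to $\bX=\bX_1+\bX_2$, the term $\bP\bX_1\bQ^\T=\bP\bU_0\bU_0^\T\bX\bQ^\T$ has column space inside $\text{range}(\bP\bU_0)=\text{range}(\tbU_0)$, so $(\bI-\tbU_0\tbU_0^\T)\bP\bX_1\bQ^\T=\mathbf{0}$; symmetrically, $\bP\bX_2\bQ^\T$ has row space inside $\text{range}(\bQ\bV_0)=\text{range}(\tbV_0)$, so $\bP\bX_2\bQ^\T(\bI-\tbV_0\tbV_0^\T)=\mathbf{0}$. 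Substituting into $\mP_{\mT^\perp}(\bP\bX\bQ^\T)=(\bI-\tbU_0\tbU_0^\T)\bP\bX\bQ^\T(\bI-\tbV_0\tbV_0^\T)$ gives $\mathbf{0}$, so its nuclear norm is $0$; equivalently, one sees at once that $\bP\bX\bQ^\T=\bP(\bX_1+\bX_2)\bQ^\T$ lies in $\mathcal{T}$ by the very definition of $\mathcal{T}$. Because every step is an immediate consequence of these range identities, I do not expect a genuine obstacle; the only thing demanding care is the column-space/row-space bookkeeping and the correct use of $\text{range}(\bU_0)=\text{range}(\bX_0)$ and $\text{range}(\bV_0)=\text{range}(\bX_0^\T)$.
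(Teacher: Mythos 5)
Your proof is correct and follows essentially the same route as the paper: both arguments extract from $\mP_{\mT_0^\perp}(\bX)=0$ a decomposition $\bX=\bX_1+\bX_2$ whose column (resp.\ row) space lies in that of $\bX_0$, and both hinge on the same key facts that $\text{range}(\bP\bU_0)\subseteq\text{range}(\tbU_0)$ and $\text{range}(\bQ\bV_0)\subseteq\text{range}(\tbV_0)$ since $\tbU_0,\tbV_0$ are the left singular matrices of $\bP\bU_0,\bQ\bV_0$. The only difference is cosmetic: you annihilate each piece using the factored complement projector $(\bI-\tbU_0\tbU_0^\T)(\cdot)(\bI-\tbV_0\tbV_0^\T)$, whereas the paper expands $\mP_{\mT}(\bP\bX\bQ^\T)$ term by term from (\ref{projection1}) and simplifies it back to $\bP\bX\bQ^\T$.
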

\medskip

Now we consider the sparse component $\bY_0$. %We start with reiterating \citet{hsu2011robust}'s treatment on $\bY_0$.
 Define the space of matrices whose supports are subsets of the supports of $\bY_0$:
	\bes
	\mS= S(\bY_0):=\{\bY\in\mathcal{R}^{N\times M}, \supp(\bY)\subseteq\supp(\bY_0)\}.
	\ees
	Define the orthogonal projector to $\mathcal{S}$ as $\mathcal{P}_{\mathcal{S}}$. Under the inner product $\langle A, B\rangle=\tr (A^\T B)$, this projection is given by
	\bel{projections}
	[\mathcal{P}_{\mS}(\bX)]_{i,j} = \left\{\begin{array}{cc}
		X_{i,j}, & (i,j)\in \supp (\bX_0),\\
		0, & \text{otherwise} ,
	\end{array}
	\right.
	\eel
	for $i=1,\cdots, N$ and $j=1,\cdots,M$. Furthermore, for any matrix $\bM$, define a $\|\cdot\|_{p\rightarrow q}$ transformation norm as
	\bes
	\|\bM\|_{p\rightarrow q} = \max\{\|\bM\nu\|_q:\nu\in\mathbb{R}^n, \|\nu\|_p\le 1\}.
	\ees
	Then we define the following property that measures the sparseness of $\bY_0$:
	\bel{alpha}
	\alpha(\rho) = \max\big\{\rho \|\sgn(\bY_0)\|_{1\rightarrow 1}, \  \rho^{-1}\|\sgn(\bY_0)\|_{\infty\rightarrow \infty}\big\},
	\eel
	where $\{\sgn(\bM)\}_{i,j}=\sgn(\bM_{i,j})$ is the sign of $M_{i,j}$, and $\rho>0$ is a parameter to accommodate disparity between the number of rows and columns with a natural choice of $\rho$ being $\rho=\sqrt{M/N}$. 
	As $\|\bM\|_{1\rightarrow 1}=\max_j\|\bM e_j\|_1$ and $\|\bM\|_{\infty \rightarrow \infty}=\max_i\|\bM^\T e_i\|_1$, $\|\sgn(\bY_0)\|_{1\rightarrow 1}$ and $\|\sgn(\bY_0)\|_{\infty\rightarrow \infty}$ respectively measures the maximum number of nonzero entries in any row and any column of $\bY_0$.
	This explains why $\alpha(\rho)$ is a quantity that measures the sparseness of $\bY_0$. 
	
	Now we introduce a quantity related to the projection $\mP_{\mS}(\cdot)$ in (\ref{projections}) and $\mP_{\mS^\perp}(\cdot)$, 
	\begin{align}\label{eta2} 
	\eta_2=&\max\Big\{ \eta: \ \eta>0,  \ \eta \|\bP^*\bY\bQ^*\|_{\tvec(1)} \cr &\ \ \ \ \ \ \ \ \ \  \ \ \ \  \ \le \eta\|\mP_{\mS}(\bY)\|_{\tvec(1)}+\|\mP_{\mS^\perp}(\bY)\|_{\tvec(1)},   \forall \bY\in\mathbb{R}^{N\times M}
	\Big\}.
	\end{align}
	The existence of $\eta_2$ is obvious. By replacing the $\tvec(1)$-norm in (\ref{eta2}) to the $\tvec(2)$-norm, we can have a rough idea about the scale of $\eta_2$.  As $\bP^*$ and $\bQ^*$ are projection matrices, we have
	\bes
	\|\bP^*\bY\bQ^*\|_{\tvec(2)} \le \|\bY\|_{\tvec(2)} \le \|\mP_{\mS}(\bY)\|_{\tvec(2)}+\|\mP_{\mS^\perp}(\bY)\|_{\tvec(2)}, \  \forall  \bY\in\mathbb{R}^{N\times M}.
	\ees
	As a consequence,
	\begin{align*}
	1\le &\max\Big\{ \eta: \ \eta>0,  \ \eta \|\bP^*\bY\bQ^*\|_{\tvec(2)} \cr &\ \ \ \ \ \ \ \ \ \  \ \ \ \ \le \eta\|\mP_{\mS}(\bY)\|_{\tvec(2)}+\|\mP_{\mS^\perp}(\bY)\|_{\tvec(2)},   \forall \bY\in\mathbb{R}^{N\times M}
	\Big\}.
	\end{align*}
	Although $\tvec(1)$-norm is used in (\ref{eta2}), an $\eta_2$ close to 1 can be expected for many combinations of $\bP^*$, $\bQ^*$ and  $\mP_{\mS}(\cdot)$.
	%\lfR{Explanations are needed for (\ref{condition}). }

\subsection{Main results}

To introduce our main results on recovering $\bP\bX_0\bQ^\T$ and $\bY_0$, we need the following properties related to $(\bU_0,\bV_0)$ and $(\bP,\bQ)$,
	\begin{align}\label{structure}
		\bGamma=&\big((\bP\bU_0)^+\big)^\T\bV_0^\T\bQ^++(\bP^+)^\T\bU_0(\bQ\bV_0)^+\cr &-\big((\bP\bU_0)^+\big)^\T(\bQ\bV_0)^+,
		\cr\gamma_1=&\|\bGamma\|_{\tvec(\infty)}, \ \ \gamma_2=\|\bGamma\|_{2\rightarrow 2}.
	\end{align}
	%where for any matrix $\bM$, $\bM^+$ is its Moore-Penrose inverse. 
	The quantity $\bGamma$ plays a key role in our analysis below. Note that when $\bP$ and $\bQ$ are identity matrices, $\bGamma=\bU_0\bV_0^\T$ and $\gamma_2=\|\bU_0\bV_0^\T\|_{2\rightarrow 2} =1$.
	
	Furthermore, define the following random error terms related to the noise matrix:
	\bel{error}
	\eps_{2\rightarrow 2}&=&\|\bE\|_{2\rightarrow 2},
	\cr \eps_\infty&=&\|\mP_{\mT}(\bE)\|_{\tvec(\infty)} + \|\bE\|_{\tvec(\infty)},
	\cr \eps'_\infty&=&\|\mP_{\mT}(\bP^*\bE\bQ^*)\|_{\tvec(\infty)} + \|\bP^*\bE\bQ^*\|_{\tvec(\infty)},
	\cr \eps_*&=&\|\mP_{\mT}(\bP^*\bE\bQ^*)\|_*,
	\eel
	where for any matrix $\bM$, $\bM^*$ is the projection matrix onto the column space of $\bM$. When $\bM$ is of full-column rank, $\bM^*=\bM(\bM^\T\bM)^{-1}\bM^\T$. Given these error terms, we suppose that the penalty levels $\lam_1$ and $\lam_2$ satisfy the condition below for certain $c>1$ and $\rho>0$,
	\begin{align}
		&\alpha(\rho)\beta(\rho) < 1\label{lambda1}\\
		& \left[ \sigma_{\max}^{-1}(\bP) \sigma_{\max}^{-1}(\bQ)-\frac{c\gamma_1\alpha(\rho)}{1-\alpha(\rho)\beta(\rho)}\right]\lam_1\label{lambda2}\\ &\  \ \ge c\left(\frac{\alpha(\rho)}{1-\alpha(\rho)\beta(\rho)}\lam_2+\frac{\alpha(\rho)}{1-\alpha(\rho)\beta(\rho)}\eps_{\infty} +\eps_{2\rightarrow 2}\right),
		\cr & \left[1-(1+c)\alpha(\rho)\beta(\rho)\right]\lam_2\ge c\left(\gamma_1\lam_1+ (2-\alpha(\rho)\beta(\rho))\eps_{\infty}\right), \label{lambda3}
	\end{align}
	%where for any matrix $\bM$, $\sigma_{\max}(\bM)=\|\bM\|_{2\rightarrow 2}$ denote its largest singular values. We similarly define $\sigma_{\min}(\bM)$ to be the smallest singular value of $\bM$.
	We note that when $\bP$ and $\bQ$ are interpolation matrices with appropriate dimension, e.g., $N\ge 20$, $M\ge 20$, we have $\sigma_{\max}(\bP)\approx \sigma_{\max}(\bQ)\approx 1.53$, and $\sigma_{\min}(\bP)\approx \sigma_{\min}(\bQ)\approx 1.00$.

	Finally, we define $\delta_1$, $\delta_2$ and $\delta$ as functions of $r$, $s$, $\gamma_1$, $\gamma_2$, $\alpha(\rho)$, $\beta(\rho)$, $\lam_1$, $\lam_2$ and the error terms. These quantities will be used in Theorem \ref{thm-main} below.
	\begin{align}\label{delta}
		\delta_1=&r\Big(\frac{2\alpha(\rho)}{1-\alpha(\rho)\beta(\rho)}(\lam_2+\gamma_1\lam_1+ \eps_{\infty})+2\eps_{2\rightarrow 2} +\lam_1\gamma_2\Big),
		\cr\delta_2=&\frac{s}{1-\alpha(\rho)\beta(\rho)}(\lam_2+ \lam_1\gamma_1+\eps_{\infty}),
		\cr \delta=& (\lam_1\gamma_2 +\eps_{2\rightarrow 2})\delta_1 + (\lam_2+\eps_\infty)\delta_2.
	\end{align}
		Now we are ready to state our main results.
		
	\begin{theorem}\label{thm-main}
		Let $r=|\rank(\bX_0)|$ and $s=|\supp(\bY_0)|$. %Let $\gamma_1$ and $\gamma_2$ be in (\ref{structure}).
		Let error terms $\eps_{2\rightarrow 2}$, $\eps_{\infty}$, $\eps'_{\infty}$, $\eps_{*}$ be as in (\ref{error}), $\gamma_1$ and $\gamma_2$ be as in (\ref{structure}) and $\delta$ be as in (\ref{delta}). Further let $\eta_1, \eta_2$ be as in (\ref{eta1}), (\ref{eta2}) and $\eta_0=\min\left(\eta_2, \ \eta_1\sigma_{\max}(\bP)\sigma_{\max}(\bQ)\right)$. Assume that $\bP$ and $\bQ$ are of full column rank. Then, when (\ref{lambda1}) to (\ref{lambda3}) hold for some $\rho>0$ and $c>1$, we have
		\bel{thm-main-1}
		&&(1-\alpha(\rho)\beta(\rho))\|\bP^*(\hbY-\bY_0)\bQ^*\|_{\tvec(1)}
		\cr &\le &[\lam_2(1-1/c)\eta_0]^{-1} \delta + 5\lam_2 s +2s\eps_{\infty}+3s\eps'_{\infty} \cr &&+2\sigma_{\min}^{-1}(\bP)\sigma^{-1}_{\min}(\bQ)\lam_1 \sqrt{sr},
		\eel	
		\bel{thm-main-2}
		&&(1-\alpha(\rho)\beta(\rho))\|\hbY-\bY_0\|_{\tvec(1)}
		\cr &\le &[2(1-1/c)\lam_2]^{-1}(1+\eta_0^{-1})\delta+5\lam_2 s +2s\eps_\infty\cr &&+3s\eps'_{\infty}+2\sigma_{\min}^{-1}(\bP)\sigma^{-1}_{\min}(\bQ)\lam_1 \sqrt{sr},
		\eel
		and	 
		\bel{thm-main-3}
		&& \|\bP(\hbX-\bX_0)\bQ^\T\|_*
		\cr &\le&  [2(1-1/c)\lam_1\eta_1]^{-1}\delta +\eps_*+2\sigma_{\min}^{-1}(\bP)\sigma^{-1}_{\min}(\bQ)\lam_1 r
		\cr &&+  \sqrt{2r}\|\bP^*(\hbY-\bY_0)\bQ^*\|_{\tvec(2)}. 
		\eel
	\end{theorem}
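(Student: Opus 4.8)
The plan is to follow the dual-certificate strategy for penalized low-rank-plus-sparse estimators, as in \citet{chandrasekaran2011rank} and \citet{hsu2011robust}, adapted to the projected geometry induced by $\bP$ and $\bQ$. I would begin from the first-order optimality (basic) inequality for the convex program (\ref{obj}). Since $(\hbX,\hbY)$ is a minimizer and $\bZ=\bP\bX_0\bQ^\T+\bY_0+\bE$, writing the combined residual as $\bR=\bP(\hbX-\bX_0)\bQ^\T+(\hbY-\bY_0)$ and expanding the squared loss gives
\[
\tfrac12\|\bR\|_F^2 \le \langle\bE,\bR\rangle + \lam_1\big(\|\bX_0\|_*-\|\hbX\|_*\big) + \lam_2\big(\|\bY_0\|_{\tvec(1)}-\|\hbY\|_{\tvec(1)}\big).
\]
Next I would lower bound the two penalty gaps by the standard subgradient (decomposability) inequalities for the nuclear and entrywise $\ell_1$ norms, which split each error into its component in $\mathcal{T}_0$ (resp.\ $\mS$) and its component in $\mathcal{T}_0^\perp$ (resp.\ $\mS^\perp$); the orthogonal parts reappear with favorable signs, producing $\|\mP_{\mT_0^\perp}(\hbX-\bX_0)\|_*$ and $\|\mP_{\mS^\perp}(\hbY-\bY_0)\|_{\tvec(1)}$ on the left-hand side.

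The core step is the construction of an approximate dual certificate. Here $\bGamma$ in (\ref{structure}) is central: it is the $N\times M$ matrix that generalizes the low-rank subgradient direction $\bU_0\bV_0^\T$ (to which it reduces when $\bP=\bI_N$, $\bQ=\bI_M$), serving as the low-rank part of the certificate in the projected setting, while $\gamma_1=\|\bGamma\|_{\tvec(\infty)}$ and $\gamma_2=\|\bGamma\|_{2\rightarrow2}$ measure how this direction leaks into the sparse and spectral dual norms respectively. I would build the certificate by the least-squares recipe of \citet{hsu2011robust}: solve for a matrix supported on $\mS\oplus\mT$ matching the required subgradient constraints on $\mS$ and $\mT$, then control its residuals on $\mS^\perp$ and $\mT^\perp$ using $\alpha(\rho)$ and $\beta(\rho)$. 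The associated Neumann-type inversion converges precisely because $\alpha(\rho)\beta(\rho)<1$ (condition (\ref{lambda1})), and the factor $1/(1-\alpha(\rho)\beta(\rho))$ pervading $\delta_1,\delta_2$ and the final bounds is exactly the geometric-series constant from this inversion. I would use the equivalence (\ref{projection3}) between the $\tbU_0,\tbV_0$ and $\widebar{\bU}_0,\widebar{\bV}_0$ descriptions of $\mT$ to identify the certificate's spectral part with the singular subspaces of $\bP\bX_0\bQ^\T$.

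With the certificate in hand, I would combine it with the optimality inequality and bound $\langle\bE,\bR\rangle$ by dual-norm inequalities; this is where the noise terms $\eps_{2\rightarrow2}$, $\eps_\infty$, $\eps'_\infty$, $\eps_*$ enter, through pairings of the two pieces of $\bR$ against $\bE$ and $\bP^*\bE\bQ^*$. The penalty-level conditions (\ref{lambda2})--(\ref{lambda3}) are engineered so that the certificate dominates the noise with slack $c>1$, which forces the errors into the model cone and bounds $\|\mP_{\mT_0^\perp}(\hbX-\bX_0)\|_*$ and $\|\mP_{\mS^\perp}(\hbY-\bY_0)\|_{\tvec(1)}$ by their in-model counterparts, giving the intermediate quantities $\delta_1$ and $\delta_2$. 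Because the loss is quadratic in $\bR$, the residual energy $\tfrac12\|\bR\|_F^2$ is controlled directly by the assembled linear terms, which collect into $\delta$. Finally I would translate back to the targets: the passage from the projected sparse error to $\|\hbY-\bY_0\|_{\tvec(1)}$ uses $\eta_2$ from (\ref{eta2}); the passage from $\mP_{\mT^\perp}(\bP(\hbX-\bX_0)\bQ^\T)$ to $\mP_{\mT_0^\perp}(\hbX-\bX_0)$ uses $\eta_1$ from (\ref{eta1}), whose existence is Proposition \ref{prop-eta1}; and inverting $\bP,\bQ$ to recover $\bX$-scale quantities produces the factors $\sigma_{\min}^{-1}(\bP)\sigma_{\min}^{-1}(\bQ)$ together with the sparse--low-rank cross term $\lam_1\sqrt{sr}$. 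Assembling these pieces yields (\ref{thm-main-1})--(\ref{thm-main-3}).

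The hardest part is the dual-certificate construction in the projected geometry. Unlike classical robust PCA, the nuclear-norm subgradient lives on the small $n\times m$ space while sparsity is imposed on the large $N\times M$ space, and the two are linked only through the non-orthogonal maps $\bP,\bQ$. Arranging the certificate so that its $\mT^\perp$ and $\mS^\perp$ residuals are simultaneously small, so that the relevant composite operator still has norm $\alpha(\rho)\beta(\rho)<1$, while faithfully tracking the leakage constants $\gamma_1,\gamma_2$ and the scale distortions $\sigma_{\min}(\bP),\sigma_{\min}(\bQ)$ throughout, is the delicate bookkeeping on which the rest of the argument rests.
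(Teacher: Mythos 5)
Your scaffolding matches the paper's: the basic optimality inequality, the decomposability (subgradient) lower bounds for the two penalties, the least-squares certificate $(\bD_\mS,\bD_\mT)$ built from the Neumann inversions $(\mI-\mP_\mS\circ\mP_\mT)^{-1}$ and $(\mI-\mP_\mT\circ\mP_\mS)^{-1}$ under $\alpha(\rho)\beta(\rho)<1$ (Theorem \ref{thm-dual}), the transfer role of $\bGamma$ (Theorem \ref{thm-link}), and the $\eta_1,\eta_2$ translations. Combined, these ingredients deliver exactly the paper's Propositions \ref{prop-sbg} and \ref{prop-bound}, namely
\[
\lam_1\|\mP_{\mT_0^\perp}(\hbX-\bX_0)\|_*+\lam_2\|\mP_{\mS^\perp}(\hbY-\bY_0)\|_{\tvec(1)}
\le \tfrac{1}{2}(1-1/c)^{-1}\|\bD_\mT+\bD_\mS\|_2^2\le \tfrac{1}{2}(1-1/c)^{-1}\delta.
\]
Two of your glosses are inaccurate, though: the cone-complement parts are bounded by the \emph{certificate's energy}, not ``by their in-model counterparts,'' and $\delta_1,\delta_2$ are bounds on $\|\bD_\mT\|_*$ and $\|\bD_\mS\|_{\tvec(1)}$ via (\ref{dual-3}), not quantities attached to the estimation error.

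The genuine gap is that nothing in your plan bounds the in-model components $\|\mP_{\mS}(\bP^*(\hbY-\bY_0)\bQ^*)\|_{\tvec(1)}$ and $\|\mP_{\mT}(\bP(\hbX-\bX_0)\bQ^\T)\|_*$, and these are where the bounds (\ref{thm-main-1})--(\ref{thm-main-3}) actually live: the terms $5\lam_2 s$, $2s\eps_{\infty}$, $3s\eps'_{\infty}$ and $2\sigma_{\min}^{-1}(\bP)\sigma^{-1}_{\min}(\bQ)\lam_1\sqrt{sr}$ all arise there. Controlling the aggregate residual $\|\bP\bDelta_{\bX}\bQ^\T+\bDelta_{\bY}\|_F^2$, as you propose, cannot substitute: smallness of the sum says nothing about the two summands separately (this is precisely the identifiability obstruction the model was designed around), and the quadratic loss provides no restricted strong convexity to break that cancellation. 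The device the paper uses, and your plan omits, is the exact KKT stationarity system at $(\hbX,\hbY)$, $\lam_2\bG_Y=\bZ-\bP\hbX\bQ^\T-\hbY$ and $\lam_1\bG_X=\bP^\T(\bZ-\bP\hbX\bQ^\T-\hbY)\bQ$ with $\bG_Y\in\partial\|\hbY\|_{\tvec(1)}$ and $\bG_X\in\partial\|\hbX\|_*$. One rescales these identities by $\bP^*(\cdot)\bQ^*$ and $(\bP^+)^\T(\cdot)\bQ^+$ so that both live on the $N\times M$ scale, projects each onto $\mS$ and onto $\mT$, subtracts, and uses $\|\mP_\mS\circ\mP_\mT\|\le\alpha(\rho)\beta(\rho)<1$ to solve for $\mP_{\mS}(\bP^*\bDelta_{\bY}\bQ^*)$ up to terms already controlled by Proposition \ref{prop-bound}; the projected subgradients are then bounded crudely, $\|\mP_\mS(\cdot)\|_{\tvec(1)}\le s\|\cdot\|_{\tvec(\infty)}$ and $\|\mP_\mT((\bP^+)^\T\bG_X\bQ^+)\|_{\tvec(2)}\le 2\sqrt{r}\,\sigma_{\min}^{-1}(\bP)\sigma_{\min}^{-1}(\bQ)$, which is exactly what produces the $s$- and $\sqrt{sr}$-terms. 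Without this step (or an equivalent mechanism for the in-model error), your assembly terminates at the cone-complement bounds and the theorem as stated does not follow.
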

	\medskip
	
	We note that the last term in the RHS of (\ref{thm-main-3}) can be easily bounded by $\sqrt{2r}\|\bP^*(\hbY-\bY_0)\bQ^*\|_{\tvec(1)}$ and then (\ref{thm-main-1}) can be applied. 
	To understand the derived bounds in Theorem \ref{thm-main}, we first recall that the matrices $\bP$ and $\bQ$ are of full column rank. When $\sigma_{\min}(\bP)\asymp\sigma_{\max}(\bP)\asymp\sigma_{\min}(\bQ)\asymp\sigma_{\max}(\bQ)\asymp  \mathcal{O}(1)$ as of interpolation matrices and 
	\bes
	\gamma_1\alpha(\rho)\asymp \gamma_2\asymp \mathcal{O}(1),
	\ees
	the penalty levels $\lam_1$ and $\lam_2$ of order
	\begin{align}\label{lambdaorder}
		\lam_1=&\mathcal{O}\Big([\alpha(\rho)\eps_{\infty}]\vee \eps_{2\rightarrow 2}\Big),
		\cr \lam_2 =& \mathcal{O}([1/\alpha(\rho)] \lam_1),
	\end{align}
	would satisfy conditions (\ref{lambda1}) and (\ref{lambda2}). As a consequence, the error bounds in Theorem \ref{thm-main} are of order
	\begin{align}\label{bound1}
		\|\hbY-\bY_0\|_{\tvec(1)}\asymp&\|\bP^*(\hbY-\bY_0)\bQ^*\|_{\tvec(1)} 
		\cr= &\mathcal{O}\Big(r\alpha(\rho)\big\{[\alpha(\rho)(\eps_{\infty} \vee \eps'_{\infty})]\vee \eps_{2\rightarrow 2}\big\}\Big),
	\end{align}
	and
	\begin{align}\label{bound2}
		&\|\bP(\hbX-\bX_0)\bQ^\T\|_{\tvec(1)}\cr =& \mathcal{O}\Big(r^{3/2}\alpha(\rho)\big\{[\alpha(\rho)(\eps_{\infty} \vee \eps'_{\infty})]\vee \eps_{2\rightarrow 2}\big\}+\eps_*\Big).
	\end{align}
	%This match the error bounds derived in \citet{hsu2011robust} if no constraint imposed on the upper bound of the true sparse component $\|\bY_0\|_{\tvec(\infty)}$.
	\citet{hsu2011robust} derived the upper bounds for $\|\hbY-\bY_0\|_{\tvec(1)}$ and $\|\hbX-\bX_0\|_{*}$ under the classical RPCA setup, i.e., $(\bP,\bQ)=(\bI_N,\bI_M)$. They imposed the constraint $\|\hbY-\bZ\|_{\tvec(\infty)}\le b$ in the optimization for some $b\ge \|\bY_0-\bZ\|_{\tvec(\infty)}$, while also allow $b$ to go to infinity. We note that the error bounds (\ref{bound1}) and (\ref{bound2}) is of the same order to their results when no knowledge of $b$ is imposed, i.e., $b=\infty$.
	In fact, Theorem \ref{thm-main} can be viewed as a generalization of \citet{hsu2011robust} for arbitrary full column rank matrices $\bP$ and $\bQ$.
	
	We still need to understand the random error terms in the bound. When the noise matrix $\bE$ has i.i.d. Gaussian entries, $E_{i,j}\sim \mathcal{N}(0,\sigma^2)$, by \citet{davidson2001local}, we have the following probabilistic upper bound,
	\bes
	\|\bE\|_{2\rightarrow 2}&\le& \sigma\sqrt{N} + \sigma\sqrt{M} +\mathcal{O}(\sigma),
	\cr \|\bP^*\bE\bQ^*\|_{2\rightarrow 2}&\le&% \|\bE\|_{2\rightarrow 2}\le 
	\sigma\sqrt{N} + \sigma\sqrt{M} +\mathcal{O}(\sigma).
	\ees 
	In addition, for the terms with $\tvec(\infty)$-norm, we have the following inequalities hold with high probability
	%by standard arguments with rotational invariance of Gaussian distribution, 
	\bes
	\|\bE\|_{\tvec(\infty)}&\le& \mathcal{O}(\sigma\log(MN)),
	\cr \|\mP_{\mT}(\bE)\|_{\tvec(\infty)}&\le& \mathcal{O}(\sigma\log(MN)),
	\cr \|\bP^*\bE\bQ^*\|_{\tvec(\infty)}&\le& \mathcal{O}(\sigma\log(MN)),
	\cr \|\mP_{\mT}(\bP^*\bE\bQ^*)\|_{\tvec(\infty)}&\le& \mathcal{O}(\sigma\log(MN)).
	\ees
	Finally, for the nuclear-normed error term, 
	\bes
	\|\mP_{\mT}(\bP^*\bE\bQ^*)\|_*\le 2r\|\bP^*\bE\bQ^*\|_{2\rightarrow 2}\le 2r \sigma\sqrt{N} +2r \sigma\sqrt{M} +\mathcal{O}(r\sigma)
	\ees  
	holds with high probability, where the first inequality holds by Lemma \ref{lm-4} in the supplementary material.
	Then we can summarize the asymptotic probabilistic bound below.
	\begin{align*}
		\|\hbY-\bY_0\|_{\tvec(1)}\asymp&\|\bP^*(\hbY-\bY_0)\bQ^*\|_{\tvec(1)} 
		\cr= &\mathcal{O}\Big(\sigma r\alpha(\rho)\big\{\left[\alpha(\rho)\log(MN)\right]\vee [\sqrt{N}+\sqrt{M}]\big\}\Big),
		\cr 	\|\bP(\hbX-\bX_0)\bQ^\T\|_{*}=&\mathcal{O}\Big(\sigma r^{3/2}\alpha(\rho)\big\{\left[\alpha(\rho)\log(MN)\right]\vee [\sqrt{N}+\sqrt{M}]\big\}\Big).
	\end{align*}
	We note that the bound on $\|\bP(\hbX-\bX_0)\bQ^\T\|_{*}$ can be improved if prior knowledge is known on the upper bound of $\|\bY_0\|_\infty$.
	\subsection{Outline of proof}\label{sec:proof}
	The key to prove Theorem \ref{thm-main} is the following two theorems. In Theorem \ref{thm-link}, we provide a transfer property between the two projections $\mP_{\mT^\perp}(\cdot)$ and $\mP_{\mT_0^\perp}(\cdot)$ through $\bGamma$. Building on the transfer property, we in Theorem \ref{thm-dual} construct a dual certificate $(\bD_{\mS}, \bD_{\mT})$ such that (1) $\bD_\mS+\bD_\mT+\bE$ is a subgradient of $\lam_2\|\bY\|_{\tvec(1)}$ at $\bY=\bY_0$, and (2) $\bP^\T(\bD_\mS+\bD_\mT+\bE)\bQ$ is a subgradient of $\lam_1\|\bX\|_{*}$ at $\bX=\bX_0$.
	
	\begin{theorem}[Transfer Property]\label{thm-link}
		%Let $\bM^+$ be the Moore-Penrose inverse of a matrix $\bM$. 
		Suppose $\bP$ and $\bQ$ are of full column rank. Let $\bGamma$ be as in (\ref{structure}). Let $\bD\in\mathbb{R}^{N\times M}$ be any matrix satisfies 
		\bes
		\mP_{\mT}(\bD) = \bGamma.
		\ees
		Then, $\bP^T\bD\bQ$ is a sub-gradient of $\|\bX\|_*$ at $\bX_0$, in other words, 
		\bes
		\mP_{\mT_0}(\bP^\T\bD\bQ) = \bU_0\bV_0^\T.
		\ees
	\end{theorem}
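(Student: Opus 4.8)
The plan is to prove the stated identity $\mP_{\mT_0}(\bP^\T\bD\bQ)=\bU_0\bV_0^\T$, which is exactly the assertion that the $\mathcal{T}_0$-component of the candidate matrix $\bP^\T\bD\bQ$ coincides with $\bU_0\bV_0^\T$, i.e.\ the ``on-tangent-space'' half of the subgradient condition for $\|\cdot\|_*$ at $\bX_0$ (the complementary spectral-norm bound on the $\mathcal{T}_0^\perp$-part being controlled separately through $\gamma_2=\|\bGamma\|_{2\to 2}$ and the freedom in choosing $\bD$). My strategy has two stages: first reduce the left-hand side so it depends on $\bD$ only through $\mP_{\mT}(\bD)=\bGamma$, and then carry out the explicit pseudoinverse computation of $\mP_{\mT_0}(\bP^\T\bGamma\bQ)$.

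For the reduction, the crucial structural observation is that $\bP\bZ\bQ^\T\in\mathcal{T}$ for every $\bZ\in\mathcal{T}_0$: if $\bZ=\bX_1+\bX_2$ with $\text{range}(\bX_1)\subseteq\text{range}(\bX_0)$ and $\text{range}(\bX_2^\T)\subseteq\text{range}(\bX_0^\T)$, then $\bP\bZ\bQ^\T$ is precisely of the generating form defining $\mathcal{T}$. Writing $\bD=\bGamma+\bD^\perp$ with $\bD^\perp\in\mathcal{T}^\perp$, I would test $\mP_{\mT_0}(\bP^\T\bD^\perp\bQ)$ against an arbitrary $\bZ\in\mathcal{T}_0$: using self-adjointness of the projections, $\mP_{\mT_0}(\bZ)=\bZ$, and the adjoint of $\bM\mapsto\bP^\T\bM\bQ$ under $\langle\bA,\bB\rangle=\tr(\bA^\T\bB)$, one gets $\langle\mP_{\mT_0}(\bP^\T\bD^\perp\bQ),\bZ\rangle=\langle\bD^\perp,\bP\bZ\bQ^\T\rangle=0$ since $\bP\bZ\bQ^\T\in\mathcal{T}$ while $\bD^\perp\perp\mathcal{T}$. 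As $\mP_{\mT_0}(\bP^\T\bD^\perp\bQ)$ itself lies in $\mathcal{T}_0$, it must vanish, so $\mP_{\mT_0}(\bP^\T\bD\bQ)=\mP_{\mT_0}(\bP^\T\bGamma\bQ)$ and the arbitrary part of $\bD$ is eliminated.

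The second stage is a direct computation. Substituting the definition of $\bGamma$ and using $\bQ^+\bQ=\bI_m$, $\bP^\T(\bP^+)^\T=(\bP^+\bP)^\T=\bI_n$, together with $(\bP\bU_0)^+=(\bU_0^\T\bP^\T\bP\bU_0)^{-1}\bU_0^\T\bP^\T$ and the analogous formula for $(\bQ\bV_0)^+$, I would simplify to $\bP^\T\bGamma\bQ=\bC_U\bV_0^\T+\bU_0\bR_V-\bC_U\bR_V$, where $\bC_U=\bP^\T\bP\bU_0(\bU_0^\T\bP^\T\bP\bU_0)^{-1}\in\mathbb{R}^{n\times r}$ and $\bR_V=(\bV_0^\T\bQ^\T\bQ\bV_0)^{-1}\bV_0^\T\bQ^\T\bQ\in\mathbb{R}^{r\times m}$. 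The two cancellations that make everything collapse are $\bU_0^\T\bC_U=\bI_r$ and $\bR_V\bV_0=\bI_r$, both immediate from the definitions. Applying $\mP_{\mT_0}(\bM)=\bU_0\bU_0^\T\bM+\bM\bV_0\bV_0^\T-\bU_0\bU_0^\T\bM\bV_0\bV_0^\T$ and using $\bU_0^\T\bU_0=\bV_0^\T\bV_0=\bI_r$, each of the three constituent terms of the projection reduces to $\bU_0\bV_0^\T$, yielding $\mP_{\mT_0}(\bP^\T\bGamma\bQ)=\bU_0\bV_0^\T+\bU_0\bV_0^\T-\bU_0\bV_0^\T=\bU_0\bV_0^\T$, which completes the argument.

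I expect the main obstacle to be conceptual rather than computational: recognizing that one should not manipulate the unknown $\bD$ directly, but instead exploit the inclusion $\bP\,\mathcal{T}_0\,\bQ^\T\subseteq\mathcal{T}$ to transfer the whole problem onto $\bGamma$. Once that reduction is in hand, the remaining work is pseudoinverse bookkeeping, where the only real care needed is to verify $\bU_0^\T\bC_U=\bI_r$ and $\bR_V\bV_0=\bI_r$ and to track which of the projection's three terms survives; these are routine precisely because $\bP$ and $\bQ$ have full column rank.
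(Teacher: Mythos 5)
Your proof is correct, and it reaches the identity by a genuinely different organization than the paper's. The paper never decomposes $\bD$: it writes the hypothesis $\mP_{\mT}(\bD)=\bGamma$ out explicitly as $\tbU_0\tbU_0^\T\bD + \bD\tbV_0\tbV_0^\T - \tbU_0\tbU_0^\T\bD\tbV_0\tbV_0^\T = \bGamma$ and then left/right-multiplies this single equation three times --- by $\bU_0\bU_0^\T\bP^\T$ and $\bQ$, by $\bP^\T$ and $\bQ\bV_0\bV_0^\T$, and by both --- so that each multiplication simultaneously (i) annihilates the part of $\bD$ not pinned down by the hypothesis, via the range inclusions $\text{range}(\bP\bU_0)\subseteq\text{range}(\tbU_0)$ and $\text{range}(\bQ\bV_0)\subseteq\text{range}(\tbV_0)$, and (ii) evaluates the $\bGamma$ side by the same pseudoinverse identities you use; summing the three resulting equations yields $\mP_{\mT_0}(\bP^\T\bD\bQ)=\bU_0\bV_0^\T$. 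You instead split $\bD=\bGamma+\bD^\perp$, kill $\bD^\perp$ by an adjointness/orthogonality argument resting on the inclusion $\bP\,\mT_0\,\bQ^\T\subseteq\mT$ --- which is in substance exactly what the paper proves separately as Proposition \ref{prop-eta1} --- and then compute $\mP_{\mT_0}(\bP^\T\bGamma\bQ)$ in isolation; your cancellations $\bU_0^\T\bC_U=\bI_r$, $\bR_V\bV_0=\bI_r$ are the same pseudoinverse facts the paper uses in (\ref{pf-4-4}). What your route buys: it makes transparent that only the $\mT$-component of $\bD$ can matter (hence why $\bGamma$ is defined as it is), and it reuses a structural fact the paper needs elsewhere anyway. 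What the paper's route buys: it requires nothing about orthogonal projections --- only matrix algebra applied to the displayed hypothesis --- whereas your reduction step silently relies on the formula (\ref{projection1}) being a self-adjoint, idempotent projector, so that $\bD-\bGamma$ is orthogonal to its image. Since the paper asserts exactly that when defining $\mP_{\mT}$, you are entitled to it, and your argument is even robust to the mild mismatch between the set $\mT$ as literally defined and the (generally larger) image of (\ref{projection1}), because $\bP\bZ\bQ^\T$ lies in both.
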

	
	\begin{theorem}[Dual Certificate]\label{thm-dual}
		Let $r=\rank(\bX_0)$, $s=\|\bY\|_0$ and $\rho>0$.
		Let error terms $\eps_{2\rightarrow 2}$, $\eps_{\infty}$, $\eps'_{\infty}$, $\eps_{*}$ be as in (\ref{error}) and $\eta_1, \eta_2$ be as in (\ref{eta1}), (\ref{eta2}), respectively. Assume that $\inf_{\rho>0}\alpha(\rho)\beta(\rho) <1$ and the penalty level $\lam_1$ and $\lam_2$ satisfy (\ref{lambda2}) and (\ref{lambda3}) for some $c>1$. Suppose $\bP$ and $\bQ$ are of full column rank. 
		Then, the following quantity $\bD_\mS$ and $\bD_\mT$ are well defined, 
		\bes
		\bD_\mS  &=& (\mI-\mP_\mS\circ\mP_\mT)^{-1}\left(\lam_2\sgn(\bY_0)-\lam_1\mP_{\mS}(\bGamma)- (\mP_\mS\circ\mP_{\mT^\perp})(\bE)\right),
		\cr \bD_\mT &=&(\mI-\mP_\mT\circ\mP_\mS)^{-1}\left(\lam_1\bGamma-\lam_2\mP_{\mT}\big(\sgn(\bY_0)\big)-(\mP_\mT\circ\mP_{\mS^\perp})(\bE)\right).
		\ees
		They satisfy
		\bel{dual-1}
		&&\mP_\mS(\bD_\mS+\bD_\mT+\bE) =\lam_2\sgn(\bY_0),
		\cr &&\mP_\mT(\bD_\mS+\bD_\mT+\bE) =\lam_1\bGamma,
		\cr &&\mP_{\mT_0}\left(\bP^\T(\bD_\mS+\bD_\mT+\bE)\bQ\right) =\lam_1\bU_0\bV_0^\T,
		\eel
		and
		\bel{dual-2}
		&\|\mP_{\mS^\perp}(\bD_\mS+\bD_\mT+\bE)\|_{\tvec(\infty)} &\le\lam_2/c,
		\cr &\|\mP_{\mT_0^\perp}\left(\bP^\T(\bD_\mS+\bD_\mT+\bE)\bQ\right)\|_{2\rightarrow 2} &\le \lam_1/c.
		\eel
		Moreover,
		\begin{align}\label{dual-3}
			&\|\bD_\mS\|_{2\rightarrow 2}\le \frac{\alpha(\rho)}{1-\alpha(\rho)\beta(\rho)}(\lam_2+\gamma_1\lam_1+ \eps_{\infty}),
			\cr &\|\bD_\mT\|_{\tvec(\infty)}\le \frac{1}{1-\alpha(\rho)\beta(\rho)} \left(\gamma_1\lam_1+\lam_2 \alpha(\rho)\beta(\rho)+\eps_{\infty}\right),
			\cr &\|\bD_\mT\|_{*}\le r\left(\frac{2\alpha(\rho)}{1-\alpha(\rho)\beta(\rho)}(\lam_2+\gamma_1\lam_1+ \eps_{\infty})+2\eps_{2\rightarrow 2}+\lam_1\gamma_2\right),
			\cr &\|\bD_\mS\|_{\tvec(1)}\le  \frac{s}{1-\alpha(\rho)\beta(\rho)}(\lam_2+ \lam_1\gamma_1+\eps_{\infty}),
			\cr & \|\bD_\mT+\bD_\mS\|_2^2\le  (\lam_2+\eps_\infty)  \|\bD_\mS\|_{\tvec(1)}+ (\lam_1\gamma_2+\eps_{2\rightarrow 2}) \|\bD_\mT\|_{*}.
		\end{align}
	\end{theorem}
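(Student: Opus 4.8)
The plan is to read the definitions of $\bD_\mS$ and $\bD_\mT$ as the solution of a coupled fixed-point system and to argue in four stages: well-definedness, the three on-support identities (\ref{dual-1}), the norm bounds (\ref{dual-3}), and the two off-support inequalities (\ref{dual-2}). The engine throughout is the contraction estimate $\|\mP_\mS\circ\mP_\mT\|\le\alpha(\rho)\beta(\rho)$, read as an operator bound that trades the entrywise $\tvec(\infty)$-norm on the $\mS$-side against the spectral $2\rightarrow2$-norm on the $\mT$-side; this is the analogue for general $\bP,\bQ$ of the corresponding lemma of \citet{hsu2011robust}, and here it rests on the definition of $\beta(\rho)$ through $\tbU_0,\tbV_0$ in (\ref{beta}). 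Since we work at a $\rho$ with $\alpha(\rho)\beta(\rho)<1$, both $\mI-\mP_\mS\circ\mP_\mT$ and $\mI-\mP_\mT\circ\mP_\mS$ are invertible with Neumann inverses $\sum_{k\ge0}(\mP_\mS\mP_\mT)^k$ and $\sum_{k\ge0}(\mP_\mT\mP_\mS)^k$; as the defining arguments lie in $\mS$ and $\mT$ and these operators preserve $\mS$ and $\mT$ respectively, $\bD_\mS$ and $\bD_\mT$ are well defined with $\bD_\mS\in\mS$ and $\bD_\mT\in\mT$.

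For the identities, set $\bD=\bD_\mS+\bD_\mT+\bE$. Substituting $\mP_{\mT^\perp}=\mI-\mP_\mT$ and $\mP_{\mS^\perp}=\mI-\mP_\mS$ into the two defining equations and collecting terms yields the reduced pair $\mP_\mS(\bD)=\lam_2\sgn(\bY_0)-\lam_1\mP_\mS(\bGamma)+\mP_\mS\mP_\mT(\bD)$ and $\mP_\mT(\bD)=\lam_1\bGamma-\lam_2\mP_\mT(\sgn(\bY_0))+\mP_\mT\mP_\mS(\bD)$. Writing $u=\mP_\mS(\bD)\in\mS$, $w=\mP_\mT(\bD)\in\mT$ and eliminating $w$ collapses the first equation to $(\mI-\mP_\mS\mP_\mT)u=\lam_2(\mI-\mP_\mS\mP_\mT)\sgn(\bY_0)$; injectivity of $\mI-\mP_\mS\mP_\mT$ on $\mS$ forces $u=\lam_2\sgn(\bY_0)$, and back-substitution gives $w=\lam_1\bGamma$. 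These are the first two lines of (\ref{dual-1}), and the third, $\mP_{\mT_0}(\bP^\T\bD\bQ)=\lam_1\bU_0\bV_0^\T$, is then immediate from the Transfer Property (Theorem \ref{thm-link}) applied to $\lam_1^{-1}\bD$.

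For the norm bounds (\ref{dual-3}) I would expand each certificate as a Neumann series and estimate term by term. The $\mS$-argument has $\tvec(\infty)$-norm at most $\lam_2+\gamma_1\lam_1+\eps_\infty$ (using $\|\sgn(\bY_0)\|_{\tvec(\infty)}\le1$, $\|\mP_\mS(\bGamma)\|_{\tvec(\infty)}\le\gamma_1$, and $\|\mP_\mS\mP_{\mT^\perp}(\bE)\|_{\tvec(\infty)}\le\eps_\infty$); one factor $\alpha(\rho)$ converts this to a spectral bound and $(1-\alpha(\rho)\beta(\rho))^{-1}$ sums the series, giving the stated bound on $\|\bD_\mS\|_{2\rightarrow2}$, while multiplying by the sparsity $s$ gives $\|\bD_\mS\|_{\tvec(1)}$. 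The symmetric estimate on the $\mT$-side, together with $\|\mP_\mT(\sgn(\bY_0))\|_{\tvec(\infty)}\le\alpha(\rho)\beta(\rho)$, gives $\|\bD_\mT\|_{\tvec(\infty)}$, and the nuclear bound follows because every element of $\mT$ has rank at most $2r$. The final line is obtained from the identity $\|\bD_\mS+\bD_\mT\|_F^2=\lam_2\langle\sgn(\bY_0),\bD_\mS\rangle+\lam_1\langle\bGamma,\bD_\mT\rangle-\langle\bE,\bD_\mS+\bD_\mT\rangle$ (a direct consequence of the just-proven identities and self-adjointness of $\mP_\mS,\mP_\mT$), followed by Hölder pairings, the $\tvec(\infty)/\tvec(1)$ duality on the $\mS$-piece and the spectral/nuclear duality $\langle\bGamma,\bD_\mT\rangle\le\gamma_2\|\bD_\mT\|_*$ on the $\mT$-piece, with $\langle\mP_\mT(\bE),\bD_\mT\rangle=\langle\bE,\bD_\mT\rangle\le\eps_{2\rightarrow2}\|\bD_\mT\|_*$.

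Finally, the off-support inequalities (\ref{dual-2}). The first is short: since $\bD_\mS\in\mS$ we have $\mP_{\mS^\perp}(\bD)=\mP_{\mS^\perp}(\bD_\mT+\bE)$, so $\|\mP_{\mS^\perp}(\bD)\|_{\tvec(\infty)}\le\|\bD_\mT\|_{\tvec(\infty)}+\eps_\infty$, and inserting the $\|\bD_\mT\|_{\tvec(\infty)}$ bound reduces the claim $\le\lam_2/c$ to precisely the rearrangement encoded in hypothesis (\ref{lambda3}). The second inequality is the crux, and the step I expect to be the main obstacle: one must pass from the ambient $\mT$-identity $\mP_\mT(\bD)=\lam_1\bGamma$ to a spectral bound on $\mP_{\mT_0^\perp}(\bP^\T\bD\bQ)$ in the reduced $n\times m$ space. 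I would write $\bD=\lam_1\bGamma+\mP_{\mT^\perp}(\bD_\mS+\bE)$, transport each piece through $\bP^\T(\cdot)\bQ$ followed by $\mP_{\mT_0^\perp}$, use that this composition annihilates the $\mT$-aligned directions (the content of Theorem \ref{thm-link} and Proposition \ref{prop-eta1}) so that only the $\bD_\mS+\bE$ part survives, and bound the survivor in spectral norm by $\sigma_{\max}(\bP)\sigma_{\max}(\bQ)(\|\bD_\mS\|_{2\rightarrow2}+\eps_{2\rightarrow2})$ plus the $\gamma_1\alpha(\rho)$ contribution of the $\bGamma$-term. Matching this against hypothesis (\ref{lambda2}) then yields $\le\lam_1/c$; the delicate bookkeeping is exactly why the factors $\sigma_{\max}^{-1}(\bP)\sigma_{\max}^{-1}(\bQ)$ and $c\gamma_1\alpha(\rho)/(1-\alpha(\rho)\beta(\rho))$ appear in (\ref{lambda2}).
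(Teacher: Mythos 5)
Your proposal is correct and follows essentially the same route as the paper's own proof: inversion of $\mI-\mP_\mS\circ\mP_\mT$ under the contraction $\|\mP_\mS\circ\mP_\mT\|\le\alpha(\rho)\beta(\rho)<1$ for well-definedness, algebraic elimination for the identities (\ref{dual-1}) (which the paper dismisses as ``not hard to verify''), the same $\star(\rho)$-norm trades, rank-$2r$ argument, and duality pairings for (\ref{dual-3}), and the same decomposition-plus-penalty-condition matching for (\ref{dual-2}). One remark: the annihilation step you flag as the main obstacle, namely $\mP_{\mT_0^\perp}\left(\bP^\T\bD_\mT\bQ\right)=0$ for $\bD_\mT\in\mT$, is asserted silently in the paper's proof as well and does not literally follow from Theorem \ref{thm-link} or Proposition \ref{prop-eta1} (both of which transfer in the opposite direction, from $\mT_0$ to $\mT$), so on that point your sketch and the paper share the same unproven leap rather than your sketch falling short of the paper.
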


	\section{Simulation studies}\label{sec:sim}
	In this section, we conduct a comprehensive simulation study to demonstrate the performance of PRPCA.
	% We divide this section into two subsections to consider both correctly specified model and mis-specified model.
	%\subsection{Correctly specified model}
	Without loss of generality, all the simulations are for square matrix recovery, i.e., $M=N$. %For the correctly specified model, we 
	%We consider both correctly specified model and mis-specified model. In the correctly specified model, we use correct choice of $(\bP,\bQ)$ in the optimization problem (\ref{obj}) to recover the model (\ref{model2}). While otherwise, mis-specified $(\bP,\bQ)$ will be considered to test the robustness of PRPCA.
	
We consider the model
	\bel{sim-1}
	\bZ = \bP_0\bX_0\bQ_0^\T + \bY_0 +\bE
	\eel
under two cases
		\begin{itemize}
		\item[(Case 1)] $\bP_0$ and $\bQ_0$ are the interpolation matrices, i.e., $\bP_0=\bQ_0=\bJ_N$
		\item[(Case 2)]  $\bP_0$ and $\bQ_0$ are the generalized interpolation matrices in (\ref{def-3-1}) with all the weights $w_{ur}^i$, $w_{lr}^i$ are i.i.d. generated from $\mathcal{N}(0.5,0.2^2)$. 
	\end{itemize}
%1) $\bP_0$ and $\bQ_0$ are the interpolation matrices, i.e., $\bP_0=\bQ_0=\bJ_N$. 2) $\bP_0$ and $\bQ_0$ are interpolation matrices $\bJ_N$ plus Gaussian noise. 
	%where $n(\bP)$ and $ m(\bQ)$ refers to the number of columns of $\bP$ and $\bQ$. 
We study the the following optimization problem
	\begin{equation}\label{sim-2}
	(\hbX, \hbY)\in \argmin_{\substack{\bX\in \mathbb{R}^{n(\bP)\times m(\bQ)}\\ \bY\in  \mathbb{R}^{N \times M}}} \frac{1}{2} \left\|\bZ-\bP\bX\bQ^\T-\bY \right\|_F^2 +\lambda_1 \|\bX\|_* +\lambda_2\|\bY\|_{\tvec(1)},
	\end{equation} 
	with four sets of $(\bP, \bQ)$, where $n(\bP)$ and $ m(\bQ)$ refers to the number of columns of $\bP$ and $\bQ$, respectively.
	\begin{itemize}
		\item  The standard RPCA, with both $\bP$ and $\bQ$ being identity matrices, denoted as ``no interpolation''. %Note that this is equivalently to the standard RPCA setup. 
		With such $\bP$ and $\bQ$, apparently there exists $\bX$ such that $\bP\bX\bQ^\T=\bP_0\bX_0\bQ_0^\T$ under both cases. In other words, the optimization problem (\ref{sim-2}) is correctly specified under both cases.
		
		\item Both $\bP$ and $\bQ$ are interpolation matrices, i.e., $\bP=\bQ=\bJ_N$, denoted as ``single interpolation''. With such $\bP$ and $\bQ$, the optimization problem (\ref{sim-2}) is correctly specified under Case 1, while mis-specified under Case 2.
		\item Both $\bP$ and $\bQ$ are estimated based on robust linear regression (LR) with Huber Loss described in Section \ref{sec:lrsmooth}, denoted as ``LR interpolation''.  With such $\bP$ and $\bQ$, the optimization problem (\ref{sim-2}) is mis-specified under both cases as the estimated $\hbP$ and $\hbQ$ may not recover $\bP_0$ and $\bQ_0$ exactly with probability goes to 1. 
		\item  Both $\bP$ and $\bQ$ are double interpolation matrices, i.e., $\bP=\bQ=\bJ_N\times \bJ_{N/2}$, denoted as ``double interpolation''. With such $\bP$ and $\bQ$, the optimization problem (\ref{sim-2}) is mis-specified under both cases.
	\end{itemize}

	%Note that when $\bP=\bQ=\bJ_N\times \bJ_{N/2}$, the model is mis-specified as $\bP_0\bX_0\bQ^\T$ cannot be written as $\bP\bX\bQ^\T$ for an general $\bX$. Also, when $\bP=\bQ=\bI_N$, (\ref{sim-1}) reduces to the classical RPCA. From now on, we refer to the PRPCA with single interpolation matrices as PRPCA1 and PRPCA with double interpolation matrices as PRPCA2. 

%	under two sets of $(\bP_0,\bQ_0)$: 1) $\bP_0$ and $\bQ_0$ are the interpolation matrices, i.e., $\bP_0=\bQ_0=\bJ_N$. 2) $\bP_0$ and $\bQ_0$ are interpolation matrices $\bJ_N$ plus Gaussian noise. Specifically, $\bP_0$ and $\bQ_0$ are obtained by adding i.i.d. $\mathcal{N}(0,0.1^2)$ noise on the non-zero entries of $\bJ_N$. \lfR{The first scenario represents that }

Under both cases, we generate each entry of the noise term $\bE$ from an i.i.d $\mathcal{N}(0,\sigma^2)$ distribution. 
	The low-rank matrix $\bX_0$ is generated as $\bX_0=\bU_0\bV_0^\T$, where both $\bU_0$ and $\bV_0$ are $N\times r$ matrices with i.i.d. $\mathcal{N}(0,\sigma_0^2)$ entries. Each entry of the sparse component $\bY_0$ is i.i.d. generated, and being
	0 with probability $1-\rho_s$, and uniformly distributed in $[-5,5]$ with  probability $1-\rho_s$.
	The simulation is run over a grid of values for the matrix dimension $N$, noise level $\sigma$ and sparsity level $\rho_s$:
	\begin{itemize}
		\item $\sigma=0.2, 0.4,0.6,0.8,1$
		\item $\rho_s=0.05,0.1,0.15,0.2,0.25$
		\item $N=60,100,200,300,400$
	\end{itemize}
The other parameters are fixed at at $r=10$ and $\sigma_0=0.6$ if otherwise specified.
	%For different dimension $N$, we consider the cases that rank $r$ being fixed and proportional to $N$,  
	%\begin{itemize}
	%	\item $r=10$ and $r=0.05*N$
	%\end{itemize}
%	For each set of the parameters, we simulate $100$ independent datasets.
	For all four sets of $(\bP,\bQ)$, we use the same penalty level with $\lam_1=\sqrt{2N}\sigma$ and $\lam_1=\sqrt{2}\sigma$. This penalty level are commonly used in RPCA with noise, for example, in \citet{zhou2010stable}. When $\bP$ and $\bQ$ are single or double interpolation matrices, other carefully tuned penalty levels may further increase the estimation accuracy. In other words, this penalty level setup may not favor the PRPCA with interpolation matrices. But it allows us to better tell the effects of $(\bP,\bQ)$ on the matrix recovery accuracy.
	
	%would affect image recovery accuracy and avoid other effects,

	We report the root mean square errors (RMSE) of recovering $\bP_0\bX_0\bQ_0^\T$, $\bY_0$ and $\bTheta$ with different choices of $(\bP,\bQ)$:
	\bes
	\text{RMSE}(\bP\bX\bQ^\T)&=&\frac{\left\|\bP\hbX\bQ^\T-\bP_0\bX_0\bQ_0^\T\right\|_F}{\sqrt{N*M}},
	\cr \text{RMSE}(\bY)&=&\frac{\|\hbY-\bY_0\|_F}{\sqrt{N*M}},
	\cr \text{RMSE}(\bTheta)&=&\frac{\|\hbTheta-\bTheta\|_F}{\sqrt{N*M}}.
	\ees
	Finally, we report the required computation time (in seconds; all calculations were performed on a 2018 MacBook Pro laptop with 2.3 GHz Quad-Core Processor and 16GB Memory).
	\subsection{Case 1 Study: Effect of noise level $\sigma$ }
	Figure \ref{fig-sigma} below reports the performance of PRPCA and RPCA over different noise levels under Case 1, with $\sigma_0=\sigma$ and fixed  $\rho_s=0.1$, $N=M=200$ and $r=10$. 

	We first observe that the PRPCA with both single interpolation and LR interpolation demonstrate clear advantages in recovering all three targets: $\bP_0\bX_0\bQ_0^\T$, $\bY_0$ and $\bTheta$. Particularly, PRPCA with single interpolation performs the best across the whole range of $\sigma$. While for the LR interpolation based PRPCA, we need to estimate the weights in the interpolation matrix first and then recover $\hbX$ and $\hbY$. It still outperforms RPCA in recovering $\bTheta$ across the whole range of $\sigma$, and in recovering $\bP\bX\bQ^\T$ and $\bY$ with relatively large $\sigma$ ($\sigma\ge 0.6$).
	In terms of PRPCA with the mis-specified double interpolation matrix, its overall performance is not as good as the RPCA. But note that when the noise level is small, it achieves similar recovery accuracy in recovering $\bY_0$ and $\Theta$ compared to RPCA. 
	
	Regarding the computation time,  it is clear that imposing interpolation matrices expedite the computation, and such improvement is significant.
%To further test the robustness of PRPCA, we also considered the mis-specified model. We let $\bP_0=\bQ_0=\bI_{N/2}\otimes [1,1]^\T$ and apply the same three choices of $(\bP,\bQ)$ described before. In this case, the only correctly specified $(\bP,\bQ)$ is identity matrices $\bP=\bQ=\bI_N$.
%\newpage

	\begin{figure}[H]
		\centering
		\includegraphics[width=0.75\columnwidth]{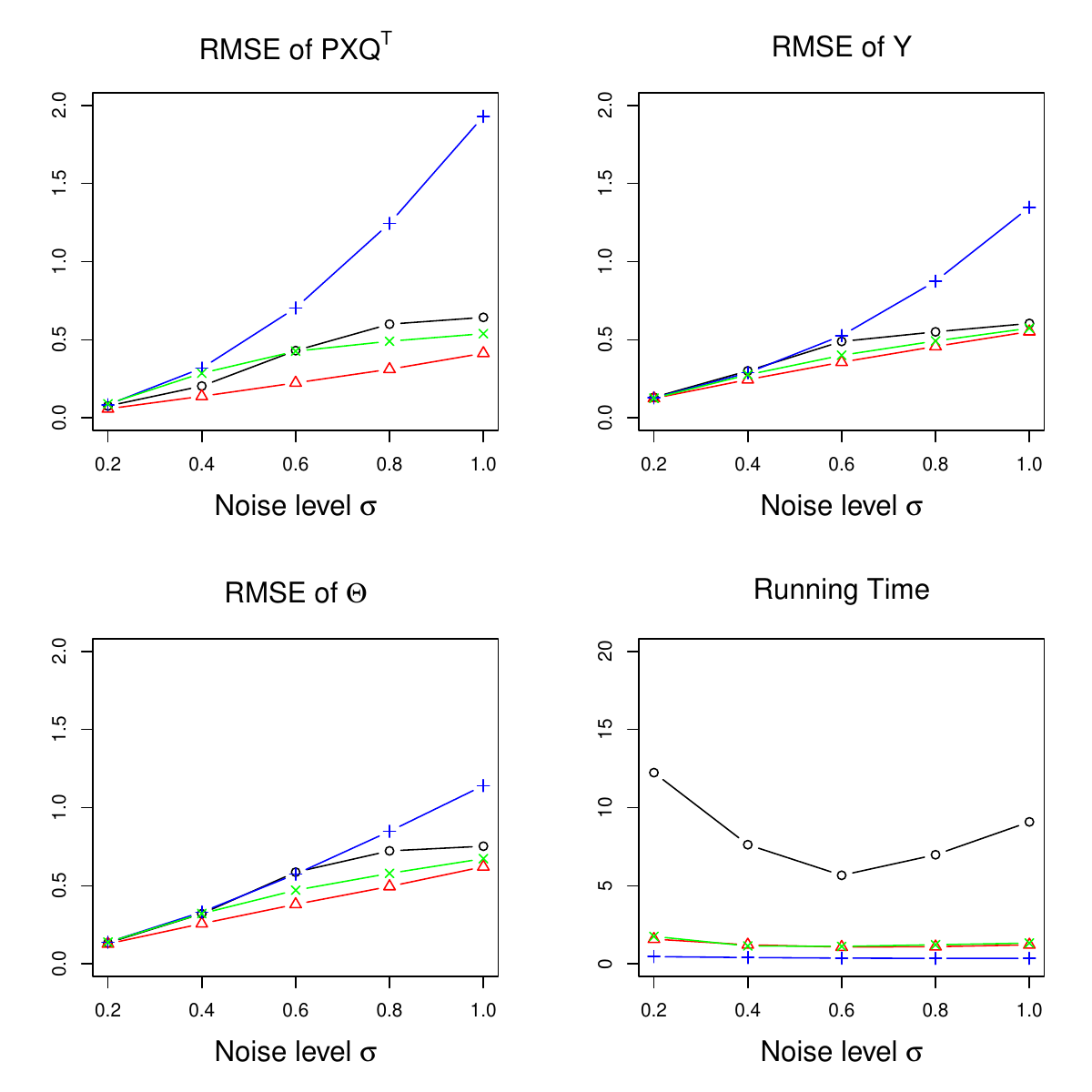}
		%	\vspace{-0.5in}
		\caption{RMSE and running time with different $(\bP,\bQ)$ ranges over different $\sigma$ under Case 1. $\rho_s=0.1$, $N=M=200$, $r=10$. %Noisy and missing image, and recovered images with different $(\bP,\bQ)$ in one experiment. Observation rate is 30\%.
			Here: $\circ$ refers to no interpolation,  \lfR{$-\triangle-$} refers to single interpolation,  {\color{Green}{$-\times-$}} refers to LR interpolation, \lfB{$-+-$} refers to double interpolation. The running times are in seconds. }
		%From left to right along rows, the results indicate the recovery RMSE of $\bP\bX\bQ^\T$, $\bY$, $\bTheta$ and running time, respectively. }
		\label{fig-sigma}
	\end{figure}

\subsection{Case 1 Study: Effect of sparsity level $\rho_s$}
Figure \ref{fig-rho} reports the performance of PRPCA and RPCA over different $\rho_s$, the sparsity level of $\bY_0$, under Case 1, with fixed  $\sigma=0.6$, $N=M=200$ and $r=10$.

It is clear that both the single interpolation and LR interpolation based PRPCA demonstrate clear advantages in recovering $\bP_0\bX_0\bQ_0^\T$, $\bY_0$ and $\bTheta$. Indeed, both of them outperforms RPCA across the whole range of $\rho_s$ in estimating all three targets. %,  both the single interpolation and LR interpolation based PRPCA outperform the RPCA in recovering $\bP_0\bX_0\bQ_0^\T$, $\bY_0$ and $\bTheta$. 
Moreover, the PRPCA with mis-specified double interpolation also achieves smaller RMSE compared to RPCA in recovering $\bTheta$ when $\rho_s$ is relatively large, e.g., $\rho_s\ge 0.15$. One possible explanation for such phenomena is that when $\rho_s$ goes up, the mis-modeled entries in $\bP\bX\bQ^\T$ are more likely to be modeled by the sparse component $\bY$, thus further level up the performance of PRPCA. For the running time, we also see a significant speed up when interpolation matrices imposed.
	\begin{figure}[H]
		\centering
		\includegraphics[width=0.75\columnwidth]{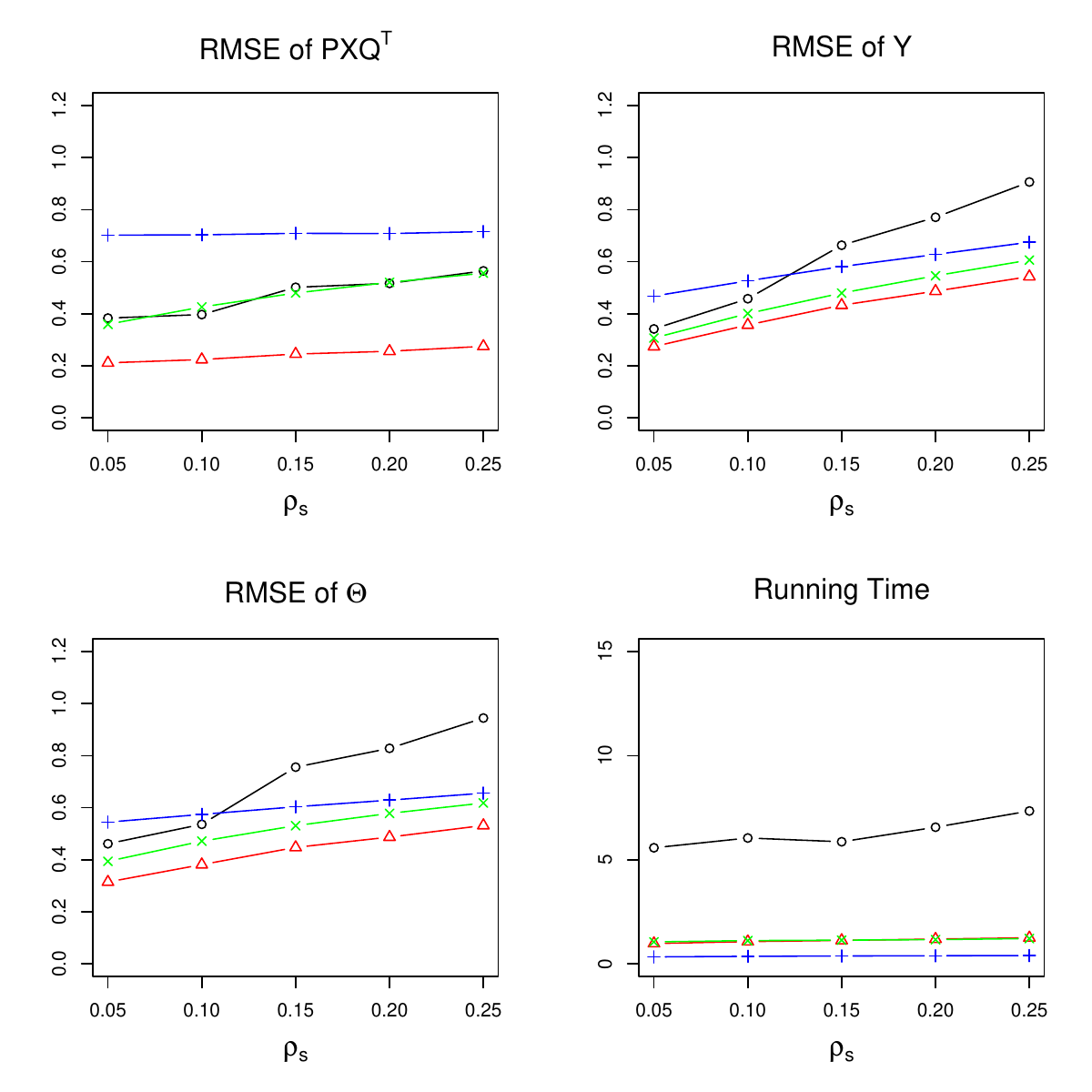}
		%	\vspace{-0.5in}
		\caption{RMSE and running time with different $(\bP,\bQ)$ ranges over different  $\rho_s$ under Case 1. The sparsity of $\bY_0$. $\sigma=0.6$, $N=M=200$, $r=10$. %Noisy and missing image, and recovered images with different $(\bP,\bQ)$ in one experiment. Observation rate is 30\%.
			Here: $-\circ -$ refers to no interpolation,  \lfR{$-\triangle-$} refers to single interpolation,  \lfB{$-+-$} refers to double interpolation, {\color{Green}{$-\times-$}} refers to LR interpolation.
			The running times are in seconds. }
		\label{fig-rho}
	\end{figure}

	\subsection{Case 1 Study: Effect of matrix dimension $N$}
Figure \ref{fig-N} reports the performance of PRPCA and RPCA over different matrix dimension $N$ under Case 1. %In the first four plots, the rank of $\bX_0$ is fixed at $r=10$, while for the next four plots, the rank of $\bX_0$ varies with respect to $N$, i.e., $r=0.05*N$. 
The noise level and sparsity level of $\bY_0$ are fixed at $\sigma=0.6$ and $\rho_s=0.05$.

We again see that the advantage of PRPCA with single and LR interpolation in recovering $\bP_0\bX_0\bQ_0^\T$, $\bY_0$ and $\bTheta$ across all the values of $N$.
In addition, the PRPCA with double interpolation also outperforms RPCA in recovering $\bY_0$ and $\bTheta$ when the matrix is of high dimension, e.g., $N\ge 200$. %While when $r$ is proportional to $N$, although RPCA achieves better accuracy in recovering $\bP\bX\bQ^\T$ and $\bY$ compared to PRPCA2, PRPCA2 outperforms RPCA in recovering $\bTheta$ when $N\ge 200$.
In terms of computation, the running time of RPCA grows almost exponentially as $N$ increase. The computational benefits of applying PRPCA is even more significant for high-dimensional matrix problems.
% One possible explanation for such phenomena is that when $\rho_s$ goes up, the mis-modeled points in $\bP\bX\bQ^\T$ are more likely to be modeled by the sparse component $\bY$, thus further level up the performance of PRPCA with double interpolation matrices. For the running time, we also see a significant speed up when interpolation matrices imposed.

Under Case 1, the PRPCA with single interpolation is expected to outperform LR interpolation as the $\bP$ and $\bQ$ are correctly specified. We now consider Case 2, under which the $\bP_0$ and $\bQ_0$ are generated with noise. This allows us to test the performance of PRPCA when $\bP$ and $\bQ$ are not perfectly specified.

	\begin{figure}[H]
		\centering
		\includegraphics[width=0.75\columnwidth]{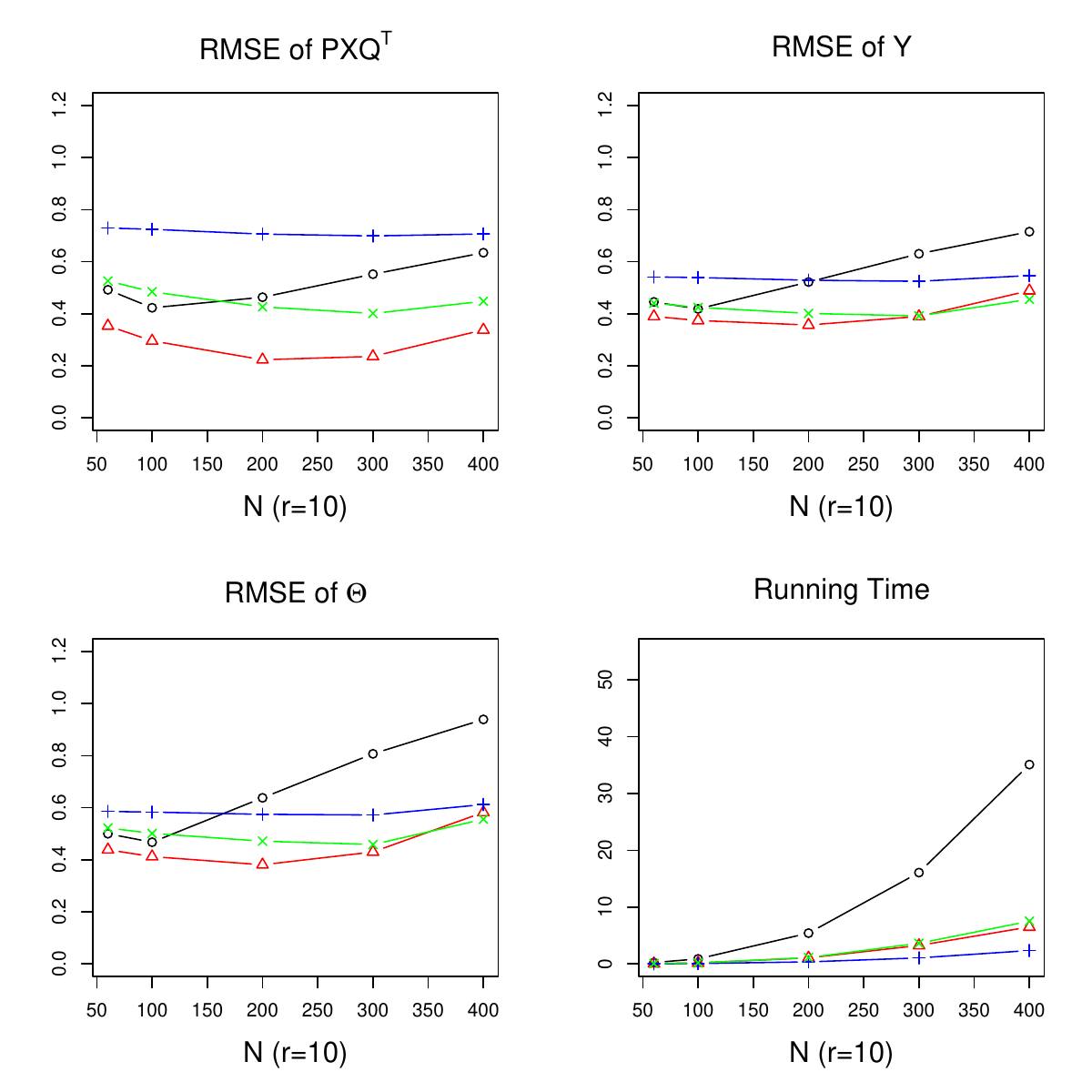}
		\vspace{-0.1in}
		\caption{RMSE and running time with different $(\bP,\bQ)$ ranges over different  $N$ and $r$. $\sigma=0.6$, $\rho_s=0.1$, $M=N$. In the first four plots, $r$ is fixed at 10, while in the second four plots, $r=0.05N$. %Noisy and missing image, and recovered images with different $(\bP,\bQ)$ in one experiment. Observation rate is 30\%.
				Here: $\circ$ refers to no interpolation,  \lfR{$-\triangle-$} refers to single interpolation,  {\color{Green}{$-\times-$}} refers to LR interpolation, \lfB{$-+-$} refers to double interpolation. The running times are in seconds. }
		%From left to right along rows, the results indicate the recovery RMSE of $\bP\bX\bQ^\T$, $\bY$, $\bTheta$ and running time, respectively. }
		\label{fig-N}
	\end{figure}

	\subsection{Case 2 Study: Effect of mis-specified $\bP_0$ and $\bQ_0$ across $N$}
	
	Figure \ref{fig-N-noise} reports the performance of PRPCA and RPCA when $\bP_0$ and $\bQ_0$ are generated with noise across $N$. %In the first four plots, the rank of $\bX_0$ is fixed at $r=10$, while for the next four plots, the rank of $\bX_0$ varies with respect to $N$, i.e., $r=0.05*N$. 
	The noise level and sparsity level of $\bY_0$ are fixed at $\sigma=0.2$ and $\rho_s=0.05$. 
	
	Under Case 2, we observe that the performance of PRPCA with  LR interpolation demonstrate clear advantages over other approaches. Such advantage is resulted from the fact that the LR interpolation is data-dependent and able to achieve a better estimation of $(\bP_0,\bQ_0)$. While for the PRPCA with single and double interpolation matrices, they use a mis-specified $(\bP,\bQ)$ but still demonstrate robust performances, especially for recovering $\Theta$.
	Indeed, although the PRPCA with mis-specified smoothing mechanism may not perform as good as RPCA in recovering $\bP\bX\bQ^\T$ and $\bY$, but they in general achieve an better accuracy in  recovering $\bTheta$. An explanation for such phenomena is that the PRPCA could decompose $\bTheta$ differently with the sparse component accounting for more signals. As a consequence, the target $\bTheta$ can be recovered well even with mis-specified smoothing matrices.
%A possible explanation for such phenomena is that the mis-modeled entries in $\bP\bX\bQ^\T$ are more likely to be modeled by the sparse component $\bY$, thus further level up the performance of PRPCA.
	In terms of computation, we see a similar pattern as before: imposing the interpolation matrices is able to expedite computation significantly.
	% One possible explanation for such phenomena is that when $\rho_s$ goes up, the mis-modeled points in $\bP\bX\bQ^\T$ are more likely to be modeled by the sparse component $\bY$, thus further level up the performance of PRPCA with double interpolation matrices. For the running time, we also see a significant speed up when interpolation matrices imposed.

		\begin{figure}[H]
		\centering
		\includegraphics[width=0.75\columnwidth]{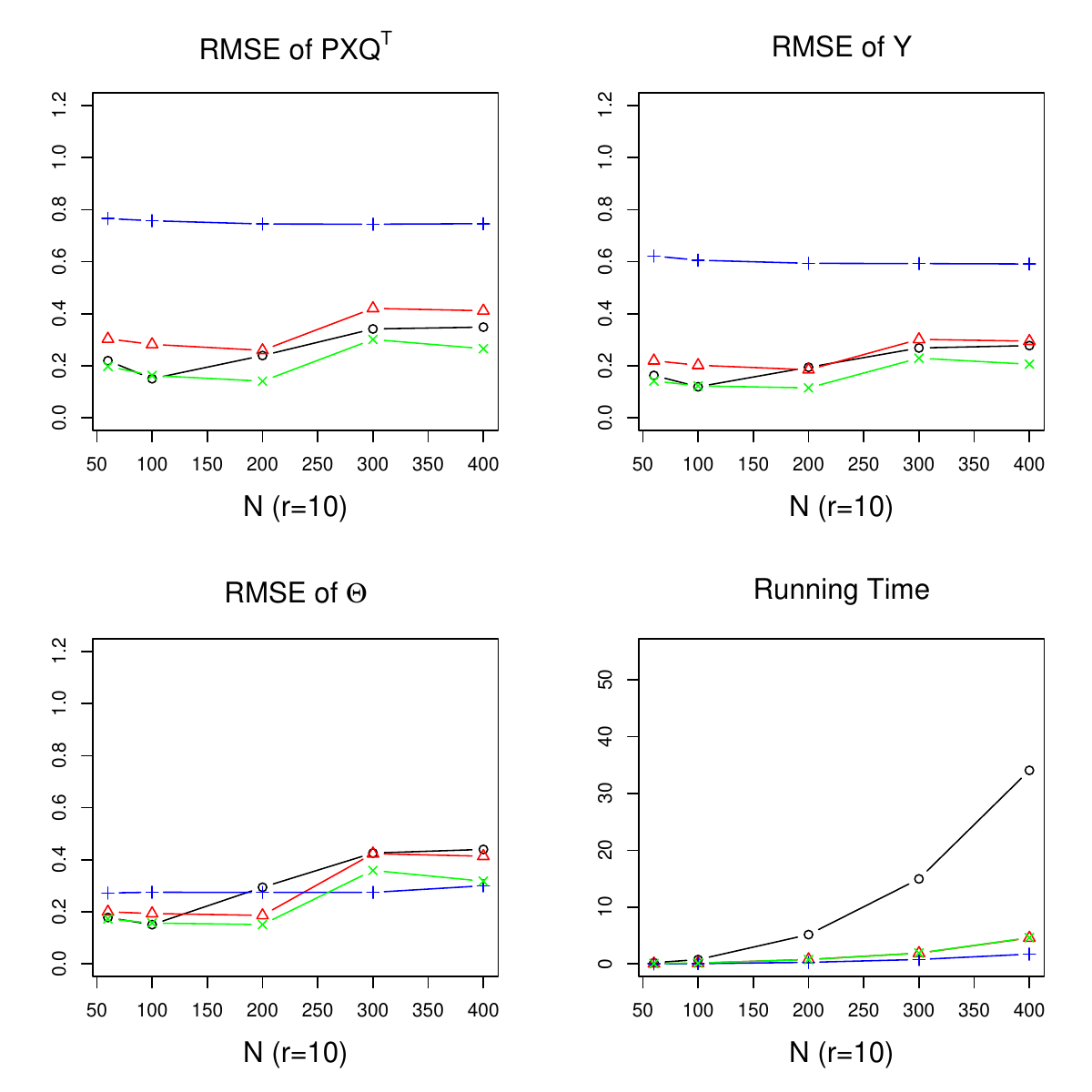}
		\vspace{-0.1in}
		\caption{RMSE and running time with different $(\bP,\bQ)$ ranges over different  $N$ and $r$. $\sigma=0.6$, $\rho_s=0.05$, $M=N$, $r=10$.	Here: $\circ$ refers to no interpolation,  \lfR{$-\triangle-$} refers to single interpolation,  {\color{Green}{$-\times-$}} refers to LR interpolation, \lfB{$-+-$} refers to double interpolation. The running times are in seconds. }
		%From left to right along rows, the results indicate the recovery RMSE of $\bP\bX\bQ^\T$, $\bY$, $\bTheta$ and running time, respectively. }
		\label{fig-N-noise}
	\end{figure}
		
%Recall that $(\lam_1,\lam_2)$ is fixed for all 
After all, we conclude that the PRPCA with interpolation matrices performs consistently well across a large range of noise level, matrix dimension, sparsity of $\bY_0$. %, and even mis-specified $\bP$ and $\bQ$. 
Moreover, even with mis-specified smoothing matrices, the PRPCA is still able to produce robust estimation, 
especially for recovering the mean matrix $\bTheta$.
%In addition, when $N$ and $\rho_s$ is large, PRPCA with double or oven more interpolations can be applied to further speed up the computation and at the same time maintaining good estimation accuracy, especially for estimating the mean matrix $\bTheta$.

	%\begin{figure}[H]
	%	\centering
	%	\includegraphics[width=0.75\columnwidth]{plot_N&r.pdf}
	%	\caption{RMSE and running time with different $(\bP,\bQ)$ ranges over different  $N$ and $r$, the matrix dimension and rank. $r=0.05*N$, $\sigma=0.6$, $\rho_s=0.1$, $M=N$. 	Black line --- No interpolation, \lfR{Red line} --- Single interpolation,  \lfB{Blue line} --- Double interpolation.	From left to right along rows, the results indicate the recovery RMSE of $\bP\bX\bQ^\T$, $\bY$, $\bTheta$ and running time, respectively. }
	%\label{fig-Nr}
	%\end{figure}

	\section{The Lenna image analysis}\label{sec:lenna}
	%We conduct comprehensive experiments to evaluate the  performance of proposed approach.
	In this section, we analyze the image of Lenna, a benchmark in image data analysis, and demonstrate the advantage of PRPCA with interpolation matrices over the RPCA. We consider the gradyscale Lenna image, which is of dimension $512\times 512$ and can be found at  \href{https://www.ece.rice.edu/~wakin/images/} {https://www.ece.rice.edu/~wakin/images/}. The image is displayed in Figure \ref{lenna} below.
	\begin{figure}[H]
		\begin{center}
			\includegraphics[width=0.45\columnwidth]{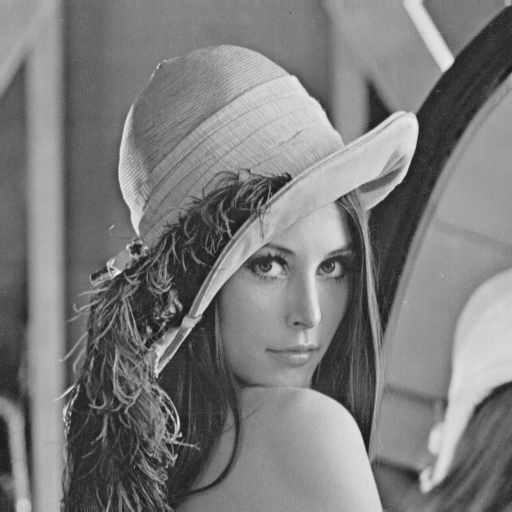}
			\caption{The gray-scale Lenna image}
			\label{lenna}
		\end{center}
		\vskip -0.2in
	\end{figure}
	
	%The experiments aim to recover a gray scale image (Figure \ref{monkey}, Monkey) below with the number of pixels being $128\times 128$, i.e., $N=M=128$, when we only observe partial pixels and the observed pixels are subject to noise.
	
	We re-scale the Lenna image such that each pixel of the image is range from 0 to 1, with 0 represents pure black and 1 represents pure white. 
	Our target is to recover the Lenna image from its noisy version with different noise levels. That is, we observe
	\bes
	\bZ=\bTheta+\bE,
	\ees
	where $\bTheta$ is the true Lenna image and $\bE$ is the noise term with i.i.d entries generated from $\mathcal{N}(0,\sigma^2)$. We consider the noise levels range from 0.05 to 0.25. Specifically, we let $\sigma=0.05,0.1,0.15,0.2,0.25$. Figure \ref{lenna_blur} plots the Lenna image with different noise level.
	\begin{figure}[ht]
		\begin{center}
			\includegraphics[width=\columnwidth]{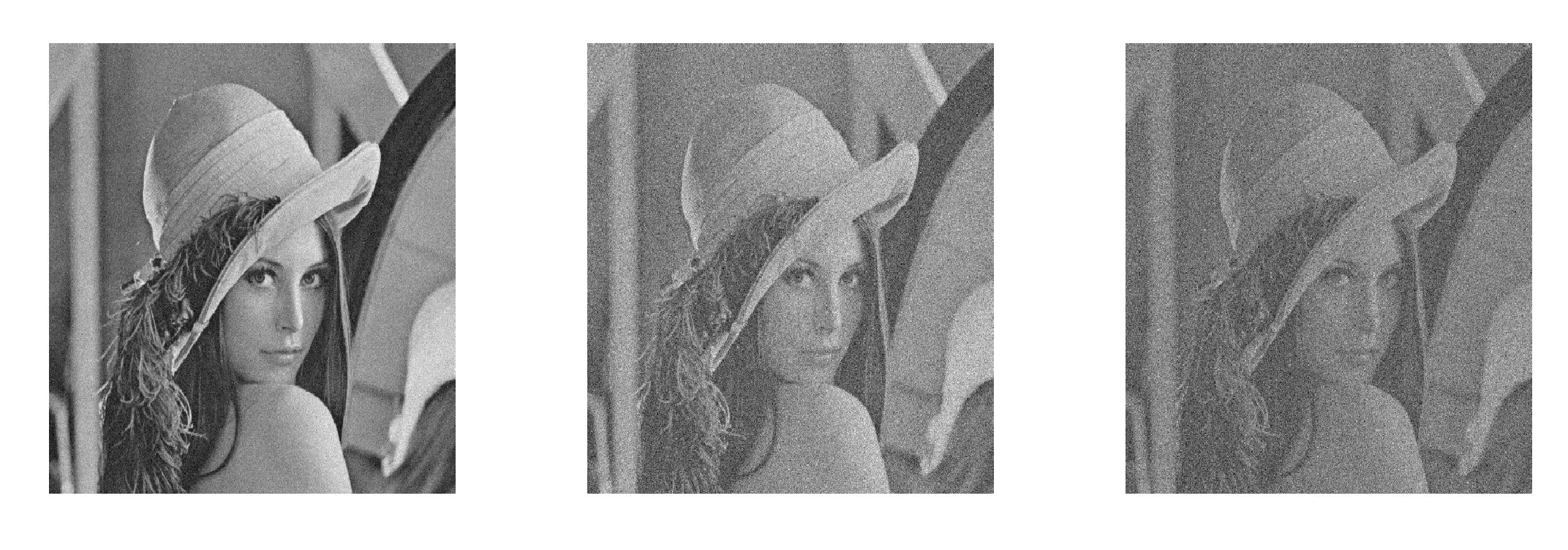}
			\vspace{-0.2in}
			\caption{The Lenna image with different noise levels, $\sigma=0.05$ (left), $\sigma=0.15$ (middle), $\sigma=0.25$ (right).}
			\label{lenna_blur}
		\end{center}
	\end{figure}

	As in the simulation study, we recover the image with three sets of $(\bP,\bQ)$: 1) identity matrices; 2) single interpolation matrices; 3) LR interpolation; and 4) double interpolation matrices. 
	%\begin{itemize}
	%	\item $(\bP,\bQ)=(\bI_N,\ \bI_M)$, 
	%	\item $(\bP,\bQ)=(\bJ_N, \ \bJ_M)$, 
	%	\item $(\bP,\bQ)=(\bJ_N\times \bJ_{N/2}, \ \bJ_M\times. \bJ_{M/2})$ 
	%\end{itemize}
	%Three sets of $(\bP,\bQ)$ correspond to no interpolation, one interpolation, and double interpolation on the low-rank component. 
	%In other words, three sets of $(\bP,\bQ)$ reflect how ``smooth'' we believe the low-rank component are:  identity matrices reflects no smoothness requirement, while double interpolation matrices reflects strong smoothness requirement.Also, recall that the set $(\bP,\bQ)=(\bI_N,\ \bI_M)$ reduces model (\ref{model1}) and (\ref{model2}) to the usual approximately low-rank matrix problem.
	
	The penalty levels are still fixed at $\lam_1=\sqrt{2N}\sigma$ and $\lam_1=\sqrt{2}\sigma$ for all three sets of $(\bP,\bQ)$. 
	As the true low-rank component $\bP\bX_0\bQ^\T$ and sparse component $\bY_0$ are not available in the real image analysis, we only measure the RMSE of $\bTheta$ and the computation time. We generate 100 independent noise terms $\bE$ and report the mean running time and RMSE in Figure \ref{lenna_sigma}. In addition, we plot the recovered Lenna image with different noise level in one implementation in Figure \ref{lenna_blur}.

	\begin{figure}[ht]
		\begin{center}
			\includegraphics[width=\columnwidth]{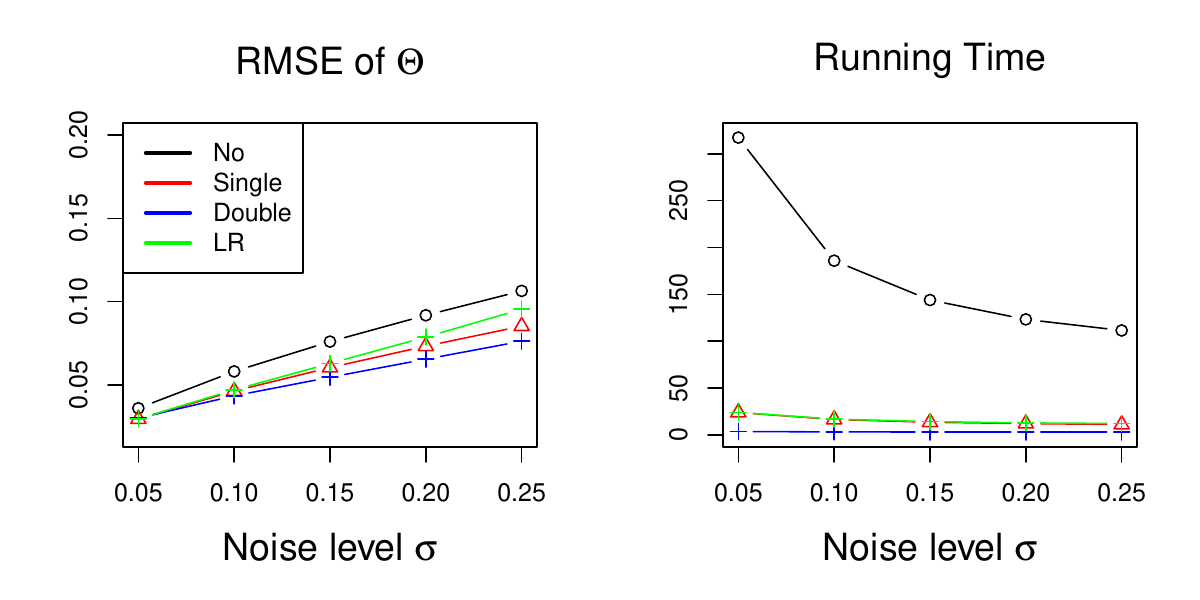}
			\vspace{-0.2in}
			\caption{The RMSE of $\bTheta$ and running time of the Lenna image analysis with different $(\bP,\bQ)$ and different $\sigma$. 
					Here: $\circ$ refers to no interpolation,  \lfR{$-\triangle-$} refers to single interpolation,  {\color{Green}{$-\times-$}} refers to LR interpolation, \lfB{$-+-$} refers to double interpolation. The running times are in seconds. }
			%From left to right along rows, the results indicate the recovery RMSE of $\bP\bX\bQ^\T$, $\bY$, $\bTheta$ and running time, respectively. }
			\label{lenna_sigma}
		\end{center}
	\end{figure}

	From Figure \ref{lenna_sigma} and Figure \ref{lenna_blur}, it is clear that the three PRPCA approaches outperform the RPCA significantly in terms of both image recovery accuracy and computation time across the whole range of $\sigma$. 
	Recall that the Lenna image is of dimension $512\times 512$. Under such dimension, the computational benefits of PRPCA is even more significant. 
	The PRPCA with single or LR interpolation is on average 10 times faster than RPCA, while PRPCA with double interpolation is at least 30 times faster than RPCA. In extreme case, when the noise level is low, e.g., $\sigma=0.05$, the average running time of PRPCA with double interpolation is 3.6 seconds. While RPCA requires 311.7 second, more than 86 times of that of PRPCA. 
	
	In terms of recovery accuracy, we see that the PRPCA with double interpolation even outperforms PRPCA with single or LR interpolation across the whole range of $\sigma$. In the simulation study, we conclude that when the target matrix is of large dimension and the sparsity of $\bY_0$, $\rho_s$, is high, the PRPCA with double or even more interpolation matrices would work well in terms of mean matrix $\bTheta$ recovery. The Lenna image can be viewed as such kind, with resolution $512\times 512$ and although unknown, but potentially large $\rho_s$. Thus it is not supervised to see the outstanding performance of double interpolated PRPCA for the Lenna image analysis.
	
	On the other hand, we note that for both the Lenna image analysis and simulation studies, the single interpolated PRPCA demonstrates clear advantages not only compared to the classical RPCA, but also compared to the more adaptive PRPCA with LR interpolation. In other words, although the single interpolation matrix is not data-dependent, a simple equal-weights average smoothing mechanism would benefits for many image problems tremendously.

	After all, the Lenna image analysis further validates the advantage of PRPCA over RPCA for smooth image recovery. Especially when image is of high resolution and  complicated (large latent $\rho_s$), it is more beneficial to impose the smoothness structure and allow the neighborhood pixels to learn from each other.
	Such benefits could be significant not only for computation, but also for recovery accuracy.
	
	%Moreover, the PRPCA with double interpolation matrices also outperforms classical PCP in recovering $\bY_0$ and $\bTheta$ when $\rho_s$ increases, e.g., $\rho_s\ge 0.15$. One possible explanation for such phenomena is that when $\rho_s$ goes up, the mis-modeled points in $\bP\bX\bQ^\T$ are more likely to be modeled by the sparse component $\bY$, thus further level up the performance of PRPCA with double interpolation matrices. For the running time, we also see a significant speed up when interpolation matrices imposed.

	\begin{figure}[ht]
		\begin{center}
			\includegraphics[width=\columnwidth]{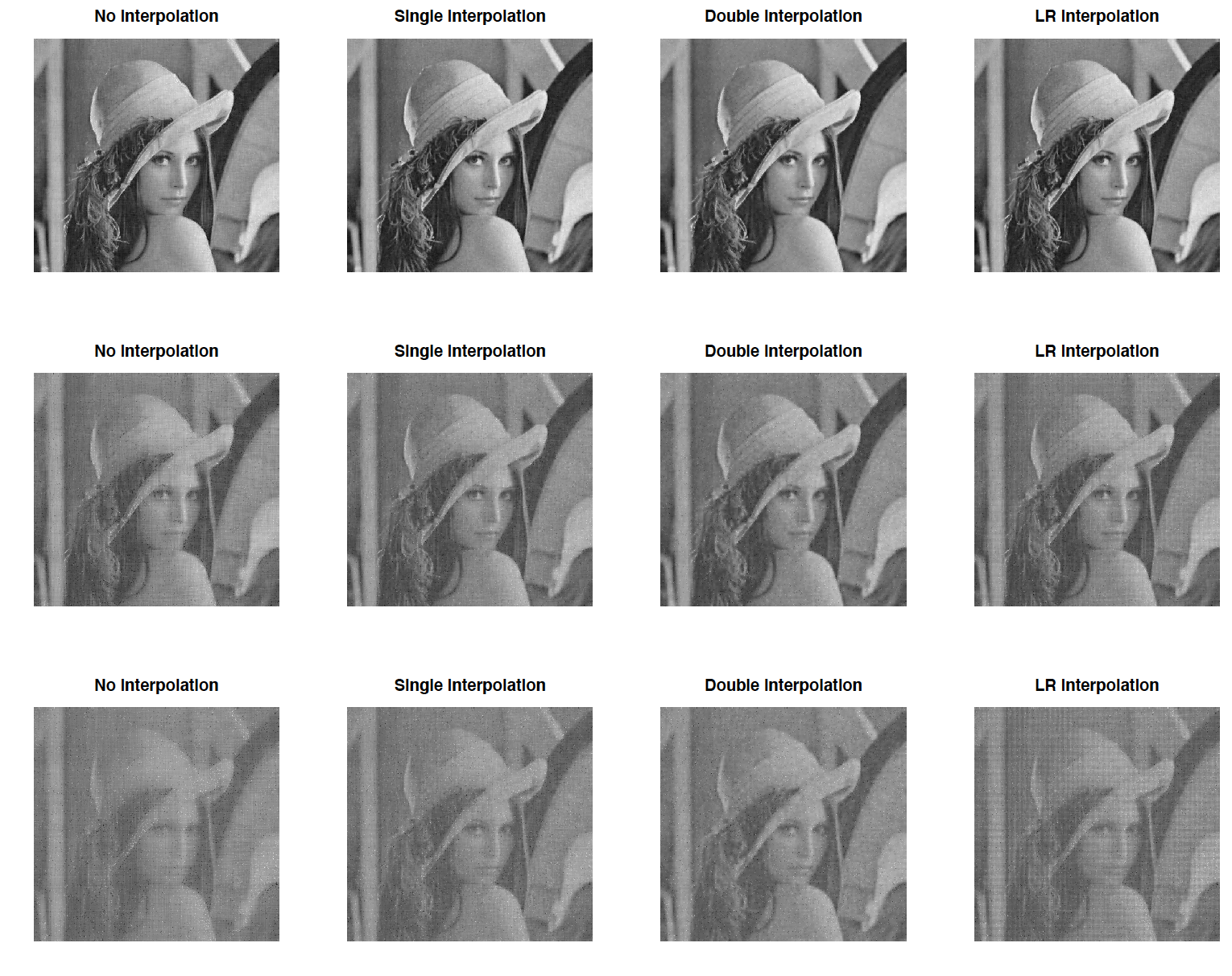}
			\vspace{-0.2in}
			\caption{Recovered Lenna image with no interpolation (first column), single interpolation (second column), double interpolation (third column), LR interpolation (last column) when, $\sigma=0.05$ (the first row), $\sigma=0.15$ (the middle row), $\sigma=0.25$ (the last row).}
			\label{lenna_blur}
		\end{center}
	\end{figure}
	
	\section{Conclusions and future work}\label{sec:conclusion}
	In this paper, we developed a novel framework of projected RPCA that motivated by smooth image recovery. This framework is general in the sense that it includes not only the classical RPCA as a special case, it also works for multivariate reduced rank regression with outliers. 
	Theoretically, we derived explicit error bounds on the estimation of $\bP\bX_0\bQ^\T$ and $\bY_0$. Our bounds match the optimum bounds in RPCA. In addition, by bringing the interpolation matrices into PRPCA model, %we build a connection between two commonly imposed matrix structures: approximate low-rankness and piecewise smoothness. Benefited from such connection, 
	we could not only significantly speed up the computation of the RPCA,  but also improve matrix accuracy, which was demonstrated by a comprehensive simulation study and a real image data analysis. Due to the prevalence of low-rank and smooth images and (stacked) videos, this paper would greatly advance future research on many computer vision
	problems and demonstrate the potential of statistical methods on computer vision study.
	%This framework could benefits matrix recovery problem from both computational perspective and theoretical perspective.
	
	We conclude with the discussion of future works. One interesting direction is to explore the performance of PRPCA in a missing entry scenario. That is, when the entries of $\bZ$ are observed with both missingness and noise, how would the PRPCA perform in terms of matrix recovery accuracy compared to RPCA? %in the missing scenario, it is even more desired to ``borrow'' information from $\bP$ and $\bQ$ to improve the results. 
	Consider an image inpainting problem, where images are observed with missing pixels. Intuitively, it would be more beneficial if 
	%the values of missing entries are able to 
	we could borrow information from the observed pixels for its neighbor missing entries. In other words, the interpolation matrices could play an even more significant role in image inpainting problems.
	Empirically, it is interesting to discover how different missing patterns and missing rates would affect the performance of PRPCA. Theoretically, it would also be significant to derive the error bounds under missing entry scenario. Such derivation may be more challenging as the dual certificate we constructed in Theorem \ref{thm-dual} may not be generalized directly.

	%Theoretically, beyond the in-sample prediction error bound we derived in Section \ref{sec:theorem}, it is  also essential to derive the total prediction error bound in the future.
	%\newpage
	
	%\bibliographystyle{apalike}
	\bibliography{reference.bib}
	
	\appendix
	\section*{Appendix A.}
In the appendix, we provide proofs in the following order: Proposition \ref{prop-1}, Proposition \ref{prop-eta1}, Theorem \ref{thm-link}, Theorem \ref{thm-dual},  Theorem \ref{thm-main}. 
\medskip

\noindent {\bf Proof of Proposition  \ref{prop-1}.}
	We first show that the smallest singular value of interpolation matrices $\sigma_{\min}(\bP)$ is greater than 1. This is because for any $\bu\in \mathbb{R}^n$,
	\bes
	\|\bP\bu\|_2^2=	\sum_{j=1}^N(\bP_{j,\cdot}\bu)^2> \sum_{1\le j\le N \  \& \ j \text{ is even}}(\bP_{j,\cdot}\bu)^2 = \sum_{j=1}^n \bu_j^2 = \|\bu\|_2^2.
	\ees
	Similarly we have $\sigma_{\min}(\bQ)>1$. Then it follows that
	\bes
	\|\bP\bX\|_*\ge \sum_{i=1}^r \sigma_{i}(\bX)\sigma_{\min}(\bP)>\|\bX\|_*
	\ees
	where $r$ is the rank of $\bX$. Furthermore,
	\bes
	\|\bP\bX\bQ^\T\|_*\ge \sum_{i=1}^r \sigma_{i}(\bP\bX)\sigma_{\min}(\bQ)>\|\bP\bX\|_*>\|\bX\|_*.
	\ees
	This completes the proof.
\medskip
\\

\noindent {\bf Proof of Proposition  \ref{prop-eta1}.}
	When $\|\mP_{\mT_0^\perp}(\bX)\|_*=0$, we have $\bX=\mP_{\mT_0}(\bX)$, in other words, $\bX\in \mT_0$. Thus $\bX$ can be written as $\bX=\bU_0\bX_1^\T+\bX_2\bV_0^\T$ for certain matrices $\bX_1\in \mathbb{R}^{m\times r}$ and $\bX_2\in \mathbb{R}^{n\times r}$. It then follows that
	\begin{align}
	&\mP_{\mT}(\bP\bX\bQ^\T)
	\cr=& \mP_{\mT}(\bP\bU_0\bX_1^\T\bQ^\T+\bP\bX_2\bV_0^\T\bQ^\T)
	\cr=&\tbU_0\tbU_0^T\bP\bU_0\bX_1^\T\bQ^\T +\tbU_0\tbU_0^T\bP\bX_2\bV_0^\T\bQ^\T
	\cr& + \bP\bU_0\bX_1^\T\bQ^\T\tbV_0\tbV_0^\T + \bP\bX_2\bV_0^\T\bQ^\T\tbV_0\tbV_0^\T \cr &-\tbU_0\tbU_0^T\bP\bU_0\bX_1^\T\bQ^\T\tbV_0\tbV_0^\T +\tbU_0\tbU_0^T\bP\bX_2\bV_0^\T\bQ^\T\tbV_0\tbV_0^\T
	\cr=&\bP\bU_0\bX_1^\T\bQ^\T +\tbU_0\tbU_0^T\bP\bX_2\bV_0^\T\bQ^\T
	\cr&+ \bP\bU_0\bX_1^\T\bQ^\T\tbV_0\tbV_0^\T + \bP\bX_2\bV_0^\T\bQ^\T
	\cr &-\bP\bU_0\bX_1^\T\bQ^\T\tbV_0\tbV_0^\T +\tbU_0\tbU_0^T\bP\bX_2\bV_0^\T\bQ^\T
	\cr=&\bP\bU_0\bX_1^\T\bQ^\T+\bP\bX_2\bV_0^\T\bQ^\T
	\cr=&\bP\bX\bQ^\T
	\end{align}
	Therefore, $\mP_{\mT^\perp}(\bP\bX\bQ^\T)={\bf 0}$. This completes the proof.
\medskip
\\

\noindent{\bf Proof of Theorem  \ref{thm-link}.}
	The condition
	\bes
	\mP_{\mT}(\bD) = \bGamma
	\ees
	suggests that
	\begin{align}\label{pf-4-1}
	\tbU_0\tbU_0^T\bD + \bD\tbV_0\tbV_0^\T - \tbU_0\tbU_0^\T\bD\tbV_0\tbV_0^\T = \bGamma.
	\end{align}
	Left-multiply $\bU_0\bU_0^\T\bP^\T$ and right-multiply $\bQ$ on both sides of (\ref{pf-4-1}), we get 
	\begin{align}\label{pf-4-2}
	\bU_0\bU_0^\T\bP^\T\left(\tbU_0\tbU_0^T\bD + \bD\tbV_0\tbV_0^\T - \tbU_0\tbU_0^\T\bD\tbV_0\tbV_0^\T\right) \bQ= \bU_0\bU_0^\T\bP^\T\bGamma\bQ.
	\end{align}
	For the LHS of (\ref{pf-4-2}),
	\begin{align}\label{pf-4-3}
	&\bU_0\bU_0^\T\bP^\T\left(\tbU_0\tbU_0^T\bD + \bD\tbV_0\tbV_0^\T - \tbU_0\tbU_0^\T\bD\tbV_0\tbV_0^\T\right)\bQ
	\cr =&\left(\bU_0\bU_0^\T\bP^\T\bD+\bU_0\bU_0\bP^\T\bD\tbV_0\tbV_0^\T-\bU_0\bU_0\bP^\T\bD\tbV_0\tbV_0^\T\right)\bQ
	\cr =&\bU_0\bU_0^\T\bP^\T\bD\bQ,
	\end{align}
	%where the second equality holds due to the full column rank of $\bP\bU_0$ and third equality holds due to the full column rank of $\bQ\bV_0$. 
	For the RHS of (\ref{pf-4-2}),
	\begin{align}\label{pf-4-4}
	&\bU_0\bU_0^\T\bP^\T\bGamma\bQ
	\cr=&\bU_0\bU_0^\T\bP^\T\big((\bP\bU_0)^+\big)^\T\bV_0^\T\bQ^+\bQ+\bU_0\bU_0^\T\bP^\T(\bP^+)^\T\bU_0(\bQ\bV_0)^+\bQ\cr &-\bU_0\bU_0^\T\bP^\T\big((\bP\bU_0)^+\big)^\T(\bQ\bV_0)^+\bQ
	\cr=&\bU_0\bV_0^\T+\bU_0(\bQ\bV_0)^+\bQ-\bU_0(\bQ\bV_0)^+\bQ
	\cr=&\bU_0\bV_0^\T.
	\end{align}
	Plug (\ref{pf-4-3}) and (\ref{pf-4-4}) into (\ref{pf-4-2}), we have 
	\bel{pf-4-5}
	\bU_0\bU_0^\T\bP^\T\bD\bQ=\bU_0\bV_0^\T.
	\eel
	
	Similarly, left-multiply $\bP^\T$ and right-multiply $\bQ\bV_0\bV_0^\T$ on both sides of (\ref{pf-4-1}), we get 
	\bel{pf-4-6}
	\bP^\T\bD\bQ\bV_0\bV_0^\T=\bU_0\bV_0^\T.
	\eel
	
	Finally, left-multiply $\bU_0\bU_0^\T\bP^\T$ and right-multiply $\bQ\bV_0\bV_0^\T$ on both sides of (\ref{pf-4-1}), we get
	\bel{pf-4-7}
	\bU_0\bU_0^\T\bP^\T\bD\bQ\bV_0\bV_0^\T=\bU_0\bV_0^\T.
	\eel
	Combine (\ref{pf-4-5}), (\ref{pf-4-6}) and (\ref{pf-4-7}) together, we have
	\bes
	\bU_0\bU_0^\T\bP^\T\bD\bQ+\bP^\T\bD\bQ\bV_0\bV_0^\T-\bU_0\bU_0^\T\bP^\T\bD\bQ\bV_0\bV_0^\T=\bU_0\bV_0^\T.
	\ees
	In other words, 
	\bes
	\mP_{\mT_0}(\bP^\T\bD\bQ) = \bU_0\bV_0^\T.
	\ees
	This completes the proof.
\medskip
\\

Before proving Theorem \ref{thm-dual}, we first provide the following Definitions and Lemmas from \citet{hsu2011robust}.
\begin{definition}
	The matrix norm $\|\cdot\|_{\#'}$ is said to be the dual norm of $\|\cdot\|_\#$ if for all $\bM$,  $\|\bM\|_{\#'}=\sup_{\|N\|_\#\le 1}\langle \bM,\bN\rangle$.
\end{definition}
\begin{lemma}\label{lm-1}
	For any linear matrix operator $\mathcal{T}: \mathbb{R}^{n\times m} \rightarrow \mathbb{R}^{n\times m}$, and any pair of matrix norms $\|\cdot\|_{\#}$ and $\|\cdot\|_{*}$, we have
	\bes
	\| \mathcal{T}\|_{\#\rightarrow *} = 	\| \mathcal{T}^*\|_{*'\rightarrow \#'} 
	\ees
	where $\|\cdot\|_{\#'}$ is the dual norm of $\|\cdot\|_\#$ and $\|\cdot\|_{*'}$ is the dual norm of $\|\cdot\|_*$.
\end{lemma}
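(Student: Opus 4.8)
The plan is to prove this identity by the standard ``double supremum'' duality argument, the only real input being that on a finite-dimensional space every norm coincides with its bidual. First I would unfold the operator norm on the left directly from its definition,
\bes
\|\mathcal{T}\|_{\#\to *} = \sup\big\{\|\mathcal{T}(\bM)\|_* : \|\bM\|_\# \le 1\big\}.
\ees
The key step is then to rewrite the inner norm $\|\mathcal{T}(\bM)\|_*$ in variational form. Because we work on $\mathbb{R}^{n\times m}$, a finite-dimensional space equipped with the trace inner product $\langle \bA,\bB\rangle=\tr(\bA^\T\bB)$, the bidual of any norm equals the norm itself, so $\|\mathcal{T}(\bM)\|_* = \sup\{\langle \mathcal{T}(\bM),\bN\rangle : \|\bN\|_{*'}\le 1\}$. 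Substituting this turns the left-hand side into a double supremum over the unit balls of $\|\cdot\|_\#$ and $\|\cdot\|_{*'}$.

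Next I would invoke the defining property of the adjoint, $\langle \mathcal{T}(\bM),\bN\rangle = \langle \bM,\mathcal{T}^*(\bN)\rangle$, to move $\mathcal{T}$ onto the second argument. Since the resulting bilinear expression $\langle \bM,\mathcal{T}^*(\bN)\rangle$ is maximized over a product of two sets, the two suprema may be interchanged freely (this reordering is unconditional). After swapping, the inner supremum over $\{\|\bM\|_\#\le 1\}$ of $\langle \bM,\mathcal{T}^*(\bN)\rangle$ collapses, by the very definition of the dual norm $\|\cdot\|_{\#'}$, to $\|\mathcal{T}^*(\bN)\|_{\#'}$. What remains is $\sup\{\|\mathcal{T}^*(\bN)\|_{\#'}:\|\bN\|_{*'}\le 1\}$, which is precisely $\|\mathcal{T}^*\|_{*'\to \#'}$, completing the chain of equalities.

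The only genuine subtlety, and the step I would flag as the main obstacle, is the bidual identity $\|\bL\|_* = \sup_{\|\bN\|_{*'}\le 1}\langle \bL,\bN\rangle$: this is what licenses replacing a norm by a supremum of linear functionals, and is where reflexivity enters. In the finite-dimensional matrix setting it is automatic (a consequence of Hahn--Banach, or equivalently of the closedness and convexity of the unit ball together with the bipolar theorem), so I would simply cite it rather than reprove it. Everything else is bookkeeping: the interchange of suprema holds with no hypotheses, and both the adjoint relation and the dual-norm formula are used verbatim from their definitions.
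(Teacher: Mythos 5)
Your proof is correct. Note, however, that the paper itself contains no proof of this lemma at all: it is stated as one of several facts imported verbatim from \citet{hsu2011robust} (``we first provide the following Definitions and Lemmas from \citet{hsu2011robust}''), so there is no in-paper argument to compare against. Your double-supremum argument is the standard one and is complete: writing $\| \mathcal{T}\|_{\#\rightarrow *}=\sup_{\|\bM\|_\#\le 1}\sup_{\|\bN\|_{*'}\le 1}\langle \mathcal{T}(\bM),\bN\rangle$, passing to the adjoint, and swapping the (unconditionally interchangeable) suprema yields $\sup_{\|\bN\|_{*'}\le 1}\|\mathcal{T}^*(\bN)\|_{\#'}=\| \mathcal{T}^*\|_{*'\rightarrow \#'}$. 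You also correctly isolate the one nontrivial ingredient, the bidual identity $\|\bL\|_*=\sup_{\|\bN\|_{*'}\le 1}\langle \bL,\bN\rangle$, which in the finite-dimensional setting of $\mathbb{R}^{n\times m}$ with the trace inner product is automatic; citing it rather than reproving it is appropriate. One small point worth making explicit if this were written out in full: the inner-supremum collapse $\sup_{\|\bM\|_\#\le 1}\langle \bM,\mathcal{T}^*(\bN)\rangle=\|\mathcal{T}^*(\bN)\|_{\#'}$ uses the symmetry of the trace inner product together with the paper's definition of the dual norm, which is exactly how Definition 1 in the paper is phrased, so your argument matches the conventions of the text it would be spliced into.
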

\begin{lemma}\label{lm-2}
	For any matrix $\bM \in \mathbb{R}^{n\times m}$ and $p\in\{1,\infty\}$, we have
	\bes
	\| \mathcal{P}_\mathcal{S}\|_{\tvec(\infty)\rightarrow \star(\rho)} \le \alpha(\rho).
	\ees
	where the norm $\|\cdot\|_{\star(\rho)}$ is defined as
	\bes
	\|\bM\|_{\star(\rho)}= \max\{\rho\|\bM\|_{1\rightarrow 1}, \rho^{-1}\|\bM\|_{\infty\rightarrow \infty}\}.
	\ees
\end{lemma}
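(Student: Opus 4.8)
The plan is to unwind the definition of the transformation norm $\|\mP_{\mS}\|_{\tvec(\infty)\rightarrow \star(\rho)}$ and control it by a direct, entrywise estimate of how the coordinate projection $\mP_{\mS}$ acts on absolute column and row sums. By definition of the induced norm it suffices to bound $\|\mP_{\mS}(\bM)\|_{\star(\rho)}$ uniformly over all $\bM$ with $\|\bM\|_{\tvec(\infty)}\le 1$; so first I would fix such an $\bM$, noting that the constraint simply reads $|M_{i,j}|\le 1$ for every entry, and then recall that $\mP_{\mS}(\bM)$ retains $M_{i,j}$ on $\supp(\bY_0)$ and sets all other entries to zero.

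Next I would invoke the two induced-norm identities already recorded in the main text, namely $\|\bN\|_{1\rightarrow 1}=\max_j\sum_i |N_{i,j}|$ (the largest absolute column sum) and $\|\bN\|_{\infty\rightarrow\infty}=\max_i\sum_j |N_{i,j}|$ (the largest absolute row sum), so that by definition $\|\bN\|_{\star(\rho)}=\max\{\rho\,\|\bN\|_{1\rightarrow 1},\ \rho^{-1}\|\bN\|_{\infty\rightarrow\infty}\}$. Applying this to $\bN=\mP_{\mS}(\bM)$, each surviving entry obeys $|M_{i,j}|\le 1$, so the absolute sum along any column $j$ is at most the number of indices $i$ with $(i,j)\in\supp(\bY_0)$, and the same holds along any row.

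The crux of the argument is to recognize that these nonzero counts are exactly the column and row $\ell_1$-sums of the sign matrix: the number of nonzeros in column $j$ equals $\sum_i |\{\sgn(\bY_0)\}_{i,j}|$, whose maximum over $j$ is $\|\sgn(\bY_0)\|_{1\rightarrow 1}$, and likewise the maximal row count equals $\|\sgn(\bY_0)\|_{\infty\rightarrow\infty}$. This yields $\|\mP_{\mS}(\bM)\|_{1\rightarrow 1}\le \|\sgn(\bY_0)\|_{1\rightarrow 1}$ and $\|\mP_{\mS}(\bM)\|_{\infty\rightarrow\infty}\le \|\sgn(\bY_0)\|_{\infty\rightarrow\infty}$. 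Taking the $\rho$-weighted maximum that defines $\|\cdot\|_{\star(\rho)}$ then gives $\|\mP_{\mS}(\bM)\|_{\star(\rho)}\le \max\{\rho\,\|\sgn(\bY_0)\|_{1\rightarrow 1},\ \rho^{-1}\|\sgn(\bY_0)\|_{\infty\rightarrow\infty}\}=\alpha(\rho)$, and since $\bM$ ranged over the whole unit $\tvec(\infty)$-ball, the transformation norm satisfies the same bound.

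There is no genuine analytic obstacle here; the result is essentially bookkeeping about which entries survive the projection. The only step that demands care is the identification in the third paragraph — matching the induced operator norms $\|\cdot\|_{1\rightarrow 1}$ and $\|\cdot\|_{\infty\rightarrow\infty}$ with, respectively, maximal absolute column and row sums, and then observing that replacing each surviving $M_{i,j}$ by its upper bound $1$ converts those sums precisely into the corresponding norms of $\sgn(\bY_0)$. (The index $p\in\{1,\infty\}$ in the statement plays no role in the displayed inequality; it merely signals that both the $1\rightarrow 1$ and the $\infty\rightarrow\infty$ contributions are being controlled at once.)
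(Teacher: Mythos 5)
Your proof is correct, and the argument is the natural one: reduce the induced norm to a bound on $\|\mP_{\mS}(\bM)\|_{\star(\rho)}$ over the unit $\tvec(\infty)$-ball, use the standard column-sum/row-sum characterizations $\|\bN\|_{1\rightarrow 1}=\max_j\|\bN e_j\|_1$ and $\|\bN\|_{\infty\rightarrow\infty}=\max_i\|\bN^\T e_i\|_1$, and observe that each surviving entry of $\mP_{\mS}(\bM)$ has modulus at most $1$, so the column and row sums are dominated by the corresponding nonzero counts of $\bY_0$, which are exactly the column and row $\ell_1$-sums of $\sgn(\bY_0)$. There is, however, no in-paper proof to compare against: this lemma is one of several results the paper imports verbatim from \citet{hsu2011robust} without proof (they are stated only as preparation for proving Theorem \ref{thm-idf}), so your argument supplies a step the paper delegates to a citation, and it is essentially the argument given in that reference. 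Two of your side observations are also accurate and worth noting: the paper's definition (\ref{projections}) of $\mP_{\mS}$ contains a typo ($\supp(\bX_0)$ should read $\supp(\bY_0)$, consistent with $\mS=S(\bY_0)$), which you silently and correctly repaired; and the index $p\in\{1,\infty\}$ in the lemma statement indeed plays no role in the displayed inequality --- it is an artifact of the statement being lifted from a more general formulation.
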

\begin{lemma}\label{lm-3}
	For any matrix $\bM \in \mathbb{R}^{n\times m}$, we have for all $\rho>0$,
	\bes
	\| \bM\|_{2\rightarrow 2} \le \|\bM\|_{\star(\rho)}.
	\ees
\end{lemma}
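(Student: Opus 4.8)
The plan is to first establish the classical interpolation inequality
\[
\|\bM\|_{2\rightarrow 2}^2 \le \|\bM\|_{1\rightarrow 1}\,\|\bM\|_{\infty\rightarrow\infty},
\]
and then to deduce the stated bound for every $\rho>0$ by a geometric-mean argument. For this second step, observe that for any fixed $\rho>0$ the product $\big(\rho\|\bM\|_{1\rightarrow 1}\big)\big(\rho^{-1}\|\bM\|_{\infty\rightarrow\infty}\big)$ equals $\|\bM\|_{1\rightarrow 1}\,\|\bM\|_{\infty\rightarrow\infty}$, so its square root (the geometric mean of the two bracketed quantities) is at most their maximum, which is precisely $\|\bM\|_{\star(\rho)}$. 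Combining this with the interpolation inequality yields $\|\bM\|_{2\rightarrow 2}\le\|\bM\|_{\star(\rho)}$ for every $\rho>0$; the cancellation of $\rho$ inside the product is exactly what makes the bound hold uniformly in $\rho$.

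For the interpolation inequality itself I would use the Schur test. Fix a vector $\nu$ with $\|\nu\|_2=1$ and expand $\|\bM\nu\|_2^2 = \sum_i\big(\sum_j M_{ij}\nu_j\big)^2$. The key step is to apply the Cauchy--Schwarz inequality to each inner sum with the weights $|M_{ij}|$ split as $\sqrt{|M_{ij}|}\cdot\sqrt{|M_{ij}|}\,\nu_j$, giving
\[
\Big(\sum_j M_{ij}\nu_j\Big)^2 \le \Big(\sum_j |M_{ij}|\Big)\Big(\sum_j |M_{ij}|\,\nu_j^2\Big).
\]
The first factor is the $i$-th absolute row sum, bounded by $\|\bM\|_{\infty\rightarrow\infty}$. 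Substituting this bound, summing over $i$, and exchanging the order of summation leaves $\|\bM\|_{\infty\rightarrow\infty}\sum_j \nu_j^2\big(\sum_i|M_{ij}|\big)$, where each column sum $\sum_i|M_{ij}|$ is bounded by $\|\bM\|_{1\rightarrow 1}$. Using $\sum_j\nu_j^2=1$ then gives $\|\bM\nu\|_2^2\le\|\bM\|_{1\rightarrow 1}\,\|\bM\|_{\infty\rightarrow\infty}$, and taking the supremum over unit $\nu$ establishes the claim.

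I do not expect a genuine obstacle here: this is the standard fact that the spectral norm is controlled by the geometric mean of the maximal absolute row and column sums. The only point requiring care is the weighted Cauchy--Schwarz split above, together with the bookkeeping that identifies $\sum_j|M_{ij}|$ and $\sum_i|M_{ij}|$ with the row and column sums defining $\|\bM\|_{\infty\rightarrow\infty}$ and $\|\bM\|_{1\rightarrow 1}$, via the identities $\|\bM\|_{1\rightarrow 1}=\max_j\|\bM e_j\|_1$ and $\|\bM\|_{\infty\rightarrow\infty}=\max_i\|\bM^\T e_i\|_1$ recorded in the Notations.
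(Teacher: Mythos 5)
Your proposal is correct: the weighted Cauchy--Schwarz (Schur test) step is valid, the identifications of $\sum_j|M_{ij}|$ and $\sum_i|M_{ij}|$ with row/column sums match the paper's identities $\|\bM\|_{1\rightarrow 1}=\max_j\|\bM e_j\|_1$ and $\|\bM\|_{\infty\rightarrow\infty}=\max_i\|\bM^\T e_i\|_1$, and the final step $\sqrt{ab}\le\max(a,b)$ for $a,b\ge 0$ correctly converts the interpolation bound into the uniform-in-$\rho$ statement. One point of comparison, though: the paper does not prove this lemma at all. It appears in the supplement among the facts imported verbatim from \citet{hsu2011robust}, so there is no in-paper argument to match against; your write-up supplies a self-contained proof of something the paper only cites. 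In the cited source the interpolation inequality $\|\bM\|_{2\rightarrow 2}\le\sqrt{\|\bM\|_{1\rightarrow 1}\|\bM\|_{\infty\rightarrow\infty}}$ is obtained by interpolation-type reasoning rather than your elementwise Schur test; an even shorter route you might note is
\begin{equation*}
\|\bM\|_{2\rightarrow 2}^2=\|\bM^\T\bM\|_{2\rightarrow 2}\le\|\bM^\T\bM\|_{\infty\rightarrow\infty}\le\|\bM^\T\|_{\infty\rightarrow\infty}\,\|\bM\|_{\infty\rightarrow\infty}=\|\bM\|_{1\rightarrow 1}\,\|\bM\|_{\infty\rightarrow\infty},
\end{equation*}
using that the spectral radius of the symmetric matrix $\bM^\T\bM$ is dominated by any induced operator norm, together with submultiplicativity. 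Either way, the conclusion and your $\rho$-cancellation argument stand; nothing is missing.
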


\begin{lemma}\label{lm-4}
	For any matrix $\bM\in\mathbb{R}^{N\times M}$ and $p\in\{1,\infty\}$, we have	
	\begin{align*}
	&\|\mP_{\mS}(\bM)\|_{p\rightarrow p}\le 	\|\sgn(\bX_0)\|_{p\rightarrow p}	\|\bM\|_{\tvec(\infty)},
	\cr &\|\mP_{\mS}\|_{\tvec(\infty)\rightarrow \star(\rho)} \le \alpha(\rho),
	\cr&\|\mP_{\mT}(\bM)\|_{*}\le 2r\|\bM\|_{2\rightarrow 2},
	\cr&\|\mP_{\mT}(\bM)\|_{\tvec(2)}\le 2\sqrt{r}\|\bM\|_{2\rightarrow 2}.
	\end{align*}
\end{lemma}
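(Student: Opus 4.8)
The plan is to establish the four inequalities one at a time, handling the two bounds on the sparse projection $\mP_\mS$ first (these are elementary counting arguments) and then the two bounds on the low-rank projection $\mP_\mT$ (these reduce to a rank-versus-operator-norm estimate after a careful regrouping of the three terms defining $\mP_\mT$).

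\textbf{The two bounds on $\mP_\mS$.} For the first inequality I would invoke the explicit form (\ref{projections}): $\mP_\mS(\bM)$ keeps the entries of $\bM$ on $\supp(\bY_0)$ and zeroes out the rest, so every surviving entry obeys $|M_{i,j}|\le \|\bM\|_{\tvec(\infty)}$. Taking $p=1$, the quantity $\|\mP_\mS(\bM)\|_{1\rightarrow 1}=\max_j\sum_i|[\mP_\mS(\bM)]_{i,j}|$ is at most $\|\bM\|_{\tvec(\infty)}$ times the largest number of support entries lying in a single column, which is exactly $\|\sgn(\bY_0)\|_{1\rightarrow 1}$; the case $p=\infty$ is identical with rows replacing columns. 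The second inequality is then immediate: by the definition of $\|\cdot\|_{\star(\rho)}$ as the maximum of a $\rho$-weighted $1\rightarrow 1$ norm and a $\rho^{-1}$-weighted $\infty\rightarrow\infty$ norm, applying the first inequality with $p=1$ and $p=\infty$ and comparing with the definition (\ref{alpha}) of $\alpha(\rho)$ yields $\|\mP_\mS(\bM)\|_{\star(\rho)}\le \alpha(\rho)\|\bM\|_{\tvec(\infty)}$, i.e.\ the claimed operator-norm bound.

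\textbf{The two bounds on $\mP_\mT$.} The key observation is that the three-term expression (\ref{projection1}) can be regrouped as
\[
\mP_\mT(\bM) = \tbU_0\tbU_0^\T\bM + (\bI - \tbU_0\tbU_0^\T)\bM\,\tbV_0\tbV_0^\T,
\]
a sum of only two pieces, each of rank at most $r$: the first because $\tbU_0\tbU_0^\T$ has rank $r$, the second because of the right factor $\tbV_0\tbV_0^\T$. Since $\tbU_0\tbU_0^\T$, $\bI-\tbU_0\tbU_0^\T$ and $\tbV_0\tbV_0^\T$ are orthogonal projections, each of operator norm at most one, submultiplicativity of $\|\cdot\|_{2\rightarrow 2}$ gives that both pieces have operator norm at most $\|\bM\|_{2\rightarrow 2}$. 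The nuclear-norm bound then follows from $\|\bA\|_*\le \rank(\bA)\,\|\bA\|_{2\rightarrow 2}$ applied to each piece, giving $r\|\bM\|_{2\rightarrow 2}+r\|\bM\|_{2\rightarrow 2}=2r\|\bM\|_{2\rightarrow 2}$; the Frobenius bound follows identically from $\|\bA\|_{\tvec(2)}\le \sqrt{\rank(\bA)}\,\|\bA\|_{2\rightarrow 2}$, yielding the factor $2\sqrt{r}$.

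\textbf{Main obstacle.} The only genuine subtlety is securing the sharp constants $2r$ and $2\sqrt{r}$ rather than the $3r$ and $3\sqrt{r}$ that a naive term-by-term triangle inequality on the three summands of (\ref{projection1}) would produce. The regrouping into two rank-$r$ pieces above is exactly what achieves this, and it relies on $\tbU_0$ and $\tbV_0$ having precisely $r$ orthonormal columns — which is guaranteed because $\bP$ and $\bQ$ are of full column rank, so $\bP\bU_0$ and $\bQ\bV_0$ inherit the rank $r$ of $\bX_0$. Beyond this grouping, the remaining steps are the standard submultiplicativity and rank-versus-norm inequalities, so no further difficulty is anticipated.
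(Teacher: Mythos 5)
Your proof is correct. One thing to be aware of: the paper itself never proves this lemma --- it is imported verbatim (as part of a block of ``Definitions and Lemmas'') from \citet{hsu2011robust}, so there is no in-paper argument to compare against; your write-up supplies exactly what the paper leaves to the citation, and it coincides with the standard argument used there. Two remarks. First, the lemma as printed contains a typo: $\|\sgn(\bX_0)\|_{p\rightarrow p}$ should be $\|\sgn(\bY_0)\|_{p\rightarrow p}$, since $\mS$ and $\alpha(\rho)$ are defined through $\supp(\bY_0)$ and $\sgn(\bY_0)$; you silently read it the correct way, and indeed the second inequality (which is just a restatement of the paper's Lemma \ref{lm-2}) only follows from the first under that reading. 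Second, your identification of the regrouping
\[
\mP_\mT(\bM) = \tbU_0\tbU_0^\T\bM + (\bI - \tbU_0\tbU_0^\T)\bM\,\tbV_0\tbV_0^\T
\]
as the crux is exactly right: each summand has rank at most $r$ and spectral norm at most $\|\bM\|_{2\rightarrow 2}$ (products of orthogonal projections with $\bM$), so $\|\bA\|_*\le\rank(\bA)\|\bA\|_{2\rightarrow 2}$ and $\|\bA\|_{\tvec(2)}\le\sqrt{\rank(\bA)}\,\|\bA\|_{2\rightarrow 2}$ give the constants $2r$ and $2\sqrt{r}$, whereas the naive three-term triangle inequality on (\ref{projection1}) would only give $3r$ and $3\sqrt{r}$. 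Your observation that $\tbU_0$ and $\tbV_0$ have exactly $r$ orthonormal columns because $\bP$, $\bQ$ have full column rank (so $\bP\bU_0$, $\bQ\bV_0$ inherit rank $r$) is the right justification for applying (\ref{projection1}) in this form; note that even if the rank were smaller the bounds would only improve, so nothing breaks. No gaps.
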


\begin{lemma}\label{lm-5}
	For any linear matrix operator $\mathcal{T}_1 : \mathbb{R}^{n\times m} \rightarrow \mathbb{R}^{n\times m}$ and $\mathcal{T}_2 : \mathbb{R}^{n\times m} \rightarrow \mathbb{R}^{n\times m}$, and any matrix norm $\|\cdot\|_{\#}$, if $\|\mT_1\circ\mT_2\|_{\#} <1$, then $\mathcal{I}-\mT_1\circ\mT_2$ is convertible and satisfies
	\bes
	\| (\mathcal{I}-\mT_1\circ\mT_2)^{-1}\|_{\#\rightarrow \#} \le\frac{1}{	 1-\|\mT_1\circ\mT_2\|_{\#\rightarrow \#}},
	\ees
	where $\mathcal{I}$ is the identity operator.
\end{lemma}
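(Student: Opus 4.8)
The plan is to invoke the standard Neumann series (operator geometric series) argument. Set $\mathcal{A}=\mathcal{T}_1\circ\mathcal{T}_2$ and let $q=\|\mathcal{A}\|_{\#\rightarrow\#}$ denote the operator norm induced by the matrix norm $\|\cdot\|_{\#}$; this is the quantity appearing in the hypothesis, so $q<1$. Two elementary facts drive the argument: the induced operator norm is submultiplicative, so that iterating gives $\|\mathcal{A}^j\|_{\#\rightarrow\#}\le q^j$ for every $j\ge 0$; and the space of linear operators on the finite-dimensional space $\mathbb{R}^{n\times m}$ is complete under $\|\cdot\|_{\#\rightarrow\#}$.

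First I would form the partial sums $S_k=\sum_{j=0}^{k}\mathcal{A}^j$ and show they are Cauchy: for $k>\ell$ one has $\|S_k-S_\ell\|_{\#\rightarrow\#}\le\sum_{j=\ell+1}^{k}q^j\le q^{\ell+1}/(1-q)$, which tends to $0$ as $\ell\to\infty$. Completeness then yields a limit $S=\sum_{j=0}^{\infty}\mathcal{A}^j$. Next I would identify $S$ with the inverse of $\mathcal{I}-\mathcal{A}$ through the telescoping identity $(\mathcal{I}-\mathcal{A})S_k=\mathcal{I}-\mathcal{A}^{k+1}$; since $\|\mathcal{A}^{k+1}\|_{\#\rightarrow\#}\le q^{k+1}\to 0$, passing to the limit gives $(\mathcal{I}-\mathcal{A})S=\mathcal{I}$, and the symmetric computation gives $S(\mathcal{I}-\mathcal{A})=\mathcal{I}$. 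Hence $\mathcal{I}-\mathcal{T}_1\circ\mathcal{T}_2$ is invertible with inverse $S$, and the claimed bound follows from the triangle inequality and the geometric sum, $\|(\mathcal{I}-\mathcal{A})^{-1}\|_{\#\rightarrow\#}\le\sum_{j=0}^{\infty}\|\mathcal{A}^j\|_{\#\rightarrow\#}\le\sum_{j=0}^{\infty}q^j=1/(1-q)$.

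This lemma is entirely routine, so there is no genuine obstacle; the only point warranting a word of care is the convergence of the series, which rests solely on completeness of the finite-dimensional operator space together with submultiplicativity of $\|\cdot\|_{\#\rightarrow\#}$. I note in passing an alternative that bypasses any limiting argument for invertibility: since the spectral radius of $\mathcal{A}$ is bounded by any operator norm, $q<1$ forces $1$ not to be an eigenvalue of $\mathcal{A}$, so $\mathcal{I}-\mathcal{A}$ is invertible directly, after which the Neumann series supplies the stated norm bound.
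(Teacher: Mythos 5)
Your proof is correct. Note, however, that the paper itself offers no proof of this lemma: it is one of several facts imported verbatim from \citet{hsu2011robust} (the paper's supplementary material introduces Lemmas 1--5 with ``we first provide the following Definitions and Lemmas from \citet{hsu2011robust}''), so there is nothing in the paper to compare against line by line. Your Neumann-series argument is the standard and complete one: submultiplicativity of the induced norm gives $\|\mathcal{A}^j\|_{\#\rightarrow\#}\le q^j$, completeness of the finite-dimensional operator space gives convergence of the partial sums, the telescoping identity identifies the limit as a two-sided inverse of $\mathcal{I}-\mathcal{A}$, and the geometric sum yields the bound $1/(1-q)$. You also correctly resolve the paper's small notational slip (the hypothesis writes $\|\mathcal{T}_1\circ\mathcal{T}_2\|_{\#}$ where the induced norm $\|\cdot\|_{\#\rightarrow\#}$ is meant), and your parenthetical spectral-radius remark is a valid shortcut to invertibility alone, though the Neumann series is still needed for the quantitative bound, which is what the paper actually uses (in the proof of Theorem 4, to control $(\mathcal{I}-\mathcal{P}_{\mathcal{S}}\circ\mathcal{P}_{\mathcal{T}})^{-1}$ and $(\mathcal{I}-\mathcal{P}_{\mathcal{T}}\circ\mathcal{P}_{\mathcal{S}})^{-1}$).
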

\medskip
Now we are ready to prove Theorem  \ref{thm-dual}.
\medskip

\noindent{\bf Proof of Theorem  \ref{thm-dual}.}
	First, it is not hard to verify that $ \bD_\mS \in\mS $, $ \bD_\mT \in\mT $ and the first two equality of (\ref{dual-1}). The third equality of (\ref{dual-1}) followed by Theorem \ref{thm-link}. We now prove (\ref{dual-3}). 
	\bes
	&&\|\bD_\mS\|_{2\rightarrow 2}\cr &\le& \|\bD_\mS\|_{\star(\rho)}
	\cr  &=& \left\|(\mI-\mP_\mS\circ\mP_\mT)^{-1} \left(\lam_2\sgn(\bY_0)-\lam_1\mP_{\mS}(\bGamma)- (\mP_\mS\circ\mP_{\mT^\perp})(\bE)\right)\right\|_{\star(\rho)}
	\cr &\le& \frac{1}{1-\alpha(\rho)\beta(\rho)}\left\| \lam_2\sgn(\bY_0)-\lam_1\mP_{\mS}(\bGamma)- (\mP_\mS\circ\mP_{\mT^\perp})(\bE) \right\|_{\star(\rho)}
	\cr &=& \frac{1}{1-\alpha(\rho)\beta(\rho)}\Big(\lam_2\|\sgn(\bY_0)\|_{\star(\rho)} +\lam_1\left\|\mP_{\mS}(\bGamma)\right\|_{\star(\rho)} + \|(\mP_\mS\circ\mP_{\mT^\perp})(\bE)\|_{\star(\rho)}\Big)
	\cr &\le&\frac{\alpha(\rho)}{1-\alpha(\rho)\beta(\rho)}(\lam_2+\gamma_1\lam_1+ \eps_{\infty}),
	\ees
	where %$\gamma_1=\|\big((\bP\bU_0)^+\big)^\T(\bQ\bV_0)^+\|_{\tvec(\infty)}$, 
	the first inequality holds by Lemma \ref{lm-3}, the second inequality holds by Lemma \ref{lm-5}, the last inequality hods by Lemma \ref{lm-4} and
	\bes
	\|(\mP_\mS\circ\mP_{\mT^\perp})(\bE)\|_{\star(\rho)}\le\alpha(\rho) \|\mP_{\mT^\perp}(\bE)\|_{\infty}\le \alpha(\rho)(\|\bE\|_\infty+\|\mP_{\mT}(\bE)\|_{\infty})\le \alpha(\rho)\eps_{\infty}.
	\ees
	Similarly, for $\|\bD_\mT\|_{\infty}$,
	\bes
	&&\|\bD_\mT\|_{\infty}\cr &=&\left\| (\mI-\mP_\mT\circ\mP_\mS)^{-1} \Big(\lam_1\bGamma-\lam_2\mP_{\mT}\big(\sgn(\bY_0)\big)- (\mP_\mT\circ\mP_{\mS^\perp})(\bE)\Big)\right\|_{\infty}
	\cr &\le& \frac{1}{1-\alpha(\rho)\beta(\rho)} \left(\lam_1\|\bGamma\|_\infty+\lam_2\big\|\mP_{\mT}\big(\sgn(\bY_0)\big)\big\|_\infty+ \big\|(\mP_\mT\circ\mP_{\mS^\perp})(\bE)\big\|_\infty\right)
	\cr &\le &\frac{1}{1-\alpha(\rho)\beta(\rho)} \left(\gamma_1\lam_1+\lam_2 \alpha(\rho)\beta(\rho)+\eps_{\infty}\right),
	\ees
	where for the last inequality we used the bound
	\bes
	\big\|(\mP_\mT\circ\mP_{\mS^\perp})(\bE)\big\|_\infty\le \|\mP_\mT(\bE)-(\mP_\mT\circ\mP_{\mS})(\bE)\|_\infty\le \|\mP_\mT(\bE)\|+\alpha(\rho)\beta(\rho)\|(\bE)\|_\infty\le \eps_\infty.
	\ees
	For $\|\bD_\mT\|_{*}$, 
	\bes
	&&\|\bD_\mT\|_{*}
	\cr\le && r\|\bD_\mT\|_{2\rightarrow 2}
	\cr= &&r\|\mP_\mT(\bD_\mS+\bE) -\lam_1\bGamma\|_{2\rightarrow 2}
	\cr= &&r\left(\|\bD_\mS\|_{2\rightarrow 2}+\|\bE\|_{2\rightarrow 2} +\lam_1\gamma_2\right)
	\cr \le &&r\left(\frac{2\alpha(\rho)}{1-\alpha(\rho)\beta(\rho)}(\lam_2+\gamma_1\lam_1+ \eps_{\tvec(\infty)})+2\eps_{2\rightarrow 2}+\lam_1\gamma_2 \right).
	\ees
	For $\|\bD_\mS\|_{\tvec(1)}$, we have
	\bes
	&&\|\bD_\mS\|_{\tvec(1)}
	\cr\le && s\|\bD_\mS\|_{\tvec(\infty)}
	%\cr =&& s\|\mP_\mS(\bD_\mT+\bE) -\lam_2\sgn(\bY_0)\|_{\tvec(\infty)}
	%\cr\le && s\left(\frac{1}{1-\alpha(\rho)\beta(\rho)} \left(\gamma_1\lam_1+\lam_2 \alpha(\rho)\beta(\rho)+\lfR{\eps_{\tvec(\infty)}}\right)+\eps_{\tvec(\infty)}+\lam_2\right)
	\cr \le &&\frac{s}{1-\alpha(\rho)\beta(\rho)}\Big(\lam_2\|\sgn(\bY_0)\|_{\tvec(\infty)} +\lam_1\big\|\mP_{\mS}(\bGamma)\big\|_{\tvec(\infty)} + \|(\mP_\mS\circ\mP_{\mT^\perp})(\bE)\|_{\tvec(\infty)}\Big)
	\cr \le && \frac{s}{1-\alpha(\rho)\beta(\rho)}(\lam_2+ \lam_1\gamma_1+\eps_{\infty}).
	\ees
	Finally,
	\bes
	\|\bD_\mT+\bD_\mS\|_2^2&=&\langle\bD_\mS,\mP_{\mS}(\bD_\mS+\bD_{\mT})\rangle + \langle\bD_\mT,\mP_{\mT}(\bD_\mS+\bD_{\mT})\rangle\cr &=& \langle\bD_\mS, \lam_2\mP_{\mS}(\sgn(\bY_0))-\mP_{\mS}(\bE)\rangle \cr &&+ \langle\bD_\mT, \lam_1 \mP_{\mT}(\bGamma)-\mP_{\mT}(\bE)\rangle
	\cr &\le & \|\bD_\mS\|_{\tvec(1)} (\lam_2+\eps_\infty)+\|\bD_\mT\|_{*} (\lam_1\gamma_2 + \eps_{2\rightarrow 2}).
	\ees
	This finish the proof for (\ref{dual-3}). To prove (\ref{dual-2}), let $\bD=\bD_\mS+\bD_\mT+\bE$,
	\bes
	\big\| \mP_{\mT_0^\perp}(\bP^\T\bD\bQ)\big\|_{2\rightarrow 2}&=&\Big\|\mP_{\mT_0^\perp}\left(\bP^\T(\bD_\mS+\bE)\bQ\right)\Big\|_{2\rightarrow 2}\cr &\le & \sigma_{\max}(\bP) \sigma_{\max}(\bQ)\left(\|\bD_\mS\|_{2\rightarrow 2}+\|\bE\|_{2\rightarrow 2 }\right)
	\cr &\le &\sigma_{\max}(\bP) \sigma_{\max}(\bQ)\left(\frac{\alpha(\rho)(\lam_2+\gamma_1\lam_1+ \eps_{\infty})}{1-\alpha(\rho)\beta(\rho)}+\eps_{2\rightarrow 2 }\right)\le \frac{\lam_1}{c},
	\ees
	where the last inequality holds by penalty condition (i). Similarly, we can bound $\big\| \mP_{\mS^\perp}(\bD)\big\|_{\tvec(\infty) }$ as below,
	\bes
	\big\| \mP_{\mS^\perp}(\bD)\big\|_{\tvec(\infty) }&=&\Big\|\mP_{\mS^\perp}\left(\bD_\mT+\bE\right)\Big\|_{\tvec(\infty)}
	\cr&\le& \|\bD_{\mT}\|_\infty +\eps_{\infty}
	\cr &\le& \frac{1}{1-\alpha(\rho)\beta(\rho)} \left(\gamma_1\lam_1+\lam_2 \alpha(\rho)\beta(\rho)+\eps_{\infty}\right) + \eps_{\infty} \le \frac{\lam_2}{c},
	\ees
	where the last inequality holds by penalty condition (ii). 
\medskip
\\

\noindent{\bf Proof of Theorem  \ref{thm-main}.}
	To prove Theorem \ref{thm-main}, we need the following Propositions.
	\begin{proposition}\label{prop-sbg}
%	\begin{proposition}\label{prop-sbg}
		For any $\lam_1>0$, $\lam_2>0$, define the penalty function $\text{Pen} (\bX, \bY)=\lam_1\|\bX\|_*+\lam_2 \|\bY\|_{\tvec(1)}$ with domain $\bX\in\mathbb{R}^{n\times m}$ and $\bY\in\mathbb{R}^{N\times M}$.
		Then, if there exists $\bD$ satisfies 
		\bes
		\mP_{\mT}(\bD)&=& \lam_1\bGamma
		\cr \mP_{\mS}(\bD)&=& \lam_2 \  \sgn(\bY_0),
		\ees
		and $\mP_{\mT_0^\perp}(\bP^\T\bD\bQ)\|_{2\rightarrow 2} \le \lam_1/c$, $\|\mP_{\mS^\perp}(\bD)\|_{\tvec(\infty)} \le\lam_2/c$, we have
		\bes
		\lam_1	\left\|\bX_0+\bDelta_{\bX} \right\|_*-\lam_1\|\bX_0\|_*-\langle \bP^\T\bD\bQ,\bDelta_{\bX} \rangle\ge \lam_1(1-1/c)\|\mP_{\mT_0^\perp}(\bDelta_{\bX})\|_*,
		\ees
		and 
		\bes
		\lam_2\left\|\bY_0+\bDelta_{\bY} \right\|_*-\lam_2\left\|\bY_0\right\|_*-\langle \bD,\bDelta_{\bY} \rangle\ge \lam_2(1-1/c)\big\|\mP_{\mS^\perp}(\bDelta_{\bY})\big\|_{\tvec(1)}.%\ge \lam_2(1-1/c)\big\|\mP_{\mS^\perp}(\bDelta_{\bY})\big\|_*
		\ees
	\end{proposition}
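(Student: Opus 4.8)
The plan is to prove both inequalities by the same device: combine the convexity (subgradient) inequality for the relevant norm with a dual-norm-optimal choice of subgradient living in the orthogonal complement, and then use the hypotheses on $\bD$ to cancel the ``aligned'' part and control the ``complementary'' part by the $1/c$ bounds. The two statements are structurally parallel, the $\tvec(1)$ case for $\bY$ being the simpler one because $\bD$ enters directly, whereas the nuclear-norm case for $\bX$ requires passing through $\bP^\T\bD\bQ$ and therefore invokes the transfer property (Theorem \ref{thm-link}).

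For the nuclear-norm inequality I would first recall the subdifferential $\partial\|\bX_0\|_* = \{\bU_0\bV_0^\T + \bW : \mP_{\mT_0}(\bW)=\mathbf{0},\ \|\bW\|_{2\rightarrow 2}\le 1\}$. Using the duality between $\|\cdot\|_*$ and $\|\cdot\|_{2\rightarrow 2}$ on $\mT_0^\perp$—together with the identity $\mP_{\mT_0^\perp}(\bW)=(\bI-\bU_0\bU_0^\T)\bW(\bI-\bV_0\bV_0^\T)$, which shows the projection is operator-norm contractive—I can select $\bW\in\mT_0^\perp$ with $\|\bW\|_{2\rightarrow 2}\le1$ and $\langle\bW,\bDelta_{\bX}\rangle = \langle\bW,\mP_{\mT_0^\perp}(\bDelta_{\bX})\rangle = \|\mP_{\mT_0^\perp}(\bDelta_{\bX})\|_*$. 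Convexity then gives $\lam_1\|\bX_0+\bDelta_{\bX}\|_* - \lam_1\|\bX_0\|_* \ge \lam_1\langle \bU_0\bV_0^\T + \bW, \bDelta_{\bX}\rangle$. The key step is to write $\bP^\T\bD\bQ = \lam_1\bU_0\bV_0^\T + \mP_{\mT_0^\perp}(\bP^\T\bD\bQ)$, which is exactly where Theorem \ref{thm-link} enters: since $\mP_{\mT}(\bD)=\lam_1\bGamma$, linearity of that theorem yields $\mP_{\mT_0}(\bP^\T\bD\bQ)=\lam_1\bU_0\bV_0^\T$. Subtracting $\langle\bP^\T\bD\bQ,\bDelta_{\bX}\rangle$, the $\lam_1\langle\bU_0\bV_0^\T,\bDelta_{\bX}\rangle$ terms cancel, leaving $\lam_1\langle\bW,\bDelta_{\bX}\rangle - \langle\mP_{\mT_0^\perp}(\bP^\T\bD\bQ),\bDelta_{\bX}\rangle$. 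The first term equals $\lam_1\|\mP_{\mT_0^\perp}(\bDelta_{\bX})\|_*$ by construction; the second, after transferring the projection onto $\bDelta_{\bX}$ and applying the nuclear/operator Hölder bound together with $\|\mP_{\mT_0^\perp}(\bP^\T\bD\bQ)\|_{2\rightarrow 2}\le\lam_1/c$, is at most $(\lam_1/c)\|\mP_{\mT_0^\perp}(\bDelta_{\bX})\|_*$, giving the claimed factor $\lam_1(1-1/c)$.

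For the $\tvec(1)$ inequality I would run the identical argument with $\partial\|\bY_0\|_{\tvec(1)} = \{\sgn(\bY_0)+\bW : \mP_{\mS}(\bW)=\mathbf{0},\ \|\bW\|_{\tvec(\infty)}\le1\}$, choosing $\bW\in\mS^\perp$ that attains $\langle\bW,\bDelta_{\bY}\rangle = \|\mP_{\mS^\perp}(\bDelta_{\bY})\|_{\tvec(1)}$ via $\tvec(1)$/$\tvec(\infty)$ duality. Here the simplification is that $\bD$ itself (not $\bP^\T\bD\bQ$) appears, so I split $\langle\bD,\bDelta_{\bY}\rangle = \lam_2\langle\sgn(\bY_0),\bDelta_{\bY}\rangle + \langle\mP_{\mS^\perp}(\bD),\bDelta_{\bY}\rangle$ using $\mP_{\mS}(\bD)=\lam_2\sgn(\bY_0)$, cancel the aligned piece against the subgradient inequality, and bound the remainder by $\|\mP_{\mS^\perp}(\bD)\|_{\tvec(\infty)}\le\lam_2/c$.

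I expect the main obstacle to be bookkeeping rather than conceptual: correctly tracking the $\lam_1$ scaling in $\mP_{\mT}(\bD)=\lam_1\bGamma$ through Theorem \ref{thm-link}, and verifying that the optimal dual element $\bW$ may indeed be taken inside $\mT_0^\perp$ (resp.\ $\mS^\perp$) without inflating its operator norm (resp.\ $\tvec(\infty)$ norm). The factorization $\mP_{\mT_0^\perp}(\bW)=(\bI-\bU_0\bU_0^\T)\bW(\bI-\bV_0\bV_0^\T)$ makes the contraction in the nuclear-norm case transparent, while the $\mS^\perp$ case is immediate since restricting to the off-support entries cannot increase the $\tvec(\infty)$ norm. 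Everything else is a standard convexity-plus-Hölder estimate.
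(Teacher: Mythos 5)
Your proposal is correct and follows essentially the same route as the paper's proof: both arguments combine the subgradient characterization of the nuclear and entrywise $\ell_1$ norms with the transfer property (Theorem \ref{thm-link}) to cancel the aligned component $\lam_1\bU_0\bV_0^\T$ (resp.\ $\lam_2\sgn(\bY_0)$), then use nuclear/operator (resp.\ $\tvec(1)$/$\tvec(\infty)$) duality on the orthogonal complement together with the $1/c$ bounds. The only cosmetic difference is that you exhibit the specific dual-optimal subgradient $\bW$, while the paper takes a supremum over all subgradients $\bG$ — the supremum is attained exactly at your $\bW$, so the two are identical in substance.
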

	\medskip
	
\noindent{\bf Proof of Proposition  \ref{prop-sbg}.}
		First, by the construction of $\bD$ and Theorem \ref{thm-link}, we have
		$\mP_{\mT_0}(\bP^T\bD\bQ)=\bU_0\bV_0^\T$. % is a sub-gradient of $\|\bX\|_*$ at $\bX_0$. 
		On the other hand, for any other sub-gradient $\bG\in\partial_{\bX}(\lam_1\|\bX_0\|_*)$, we have 
		\bes
		\mP_{\mT_0}(\bG)=\bU_0\bV_0^\T.
		\ees
		It then follows that
		\bes
		\bG-\bP^\T\bD\bQ&=&\mP_{\mT_0}(\bG) + \mP_{\mT_0^\perp}(\bG)- \mP_{\mT_0}(\bP^\T\bD\bQ)- \mP_{\mT_0^\perp}(\bP^\T\bD\bQ)
		\cr &=&\mP_{\mT_0^\perp}(\bG)-  \mP_{\mT_0^\perp}(\bP^\T\bD\bQ).
		\ees
		As a consequence,
		\begin{align}\label{sbg-1}
		&\lam_1\left\|\bX_0+\bDelta_{\bX} \right\|_*-\lam_1 \|\bX_0\|_*-\langle \bP^\T\bD\bQ,\bDelta_{\bX} \rangle
		\cr \ge &\sup\left\{\langle \bG,\bDelta_{\bX}\rangle -\langle \bP^\T\bD\bQ,\bDelta_{\bX}\rangle :\ \ \bG\in \partial_{\bX} (\lam_1\|\bX_0\|_*)\right\}
		\cr=&\sup\left\{\langle \bG-\bP^\T\bD\bQ,\bDelta_{\bX}\rangle :\ \ \bG\in \partial_{\bX} (\lam_1\|\bX_0\|_*)\right\}
		\cr=&\sup\left\{\langle \mP_{\mT_0^\perp}(\bG-\bP^\T\bD\bQ),\bDelta_{\bX}\rangle :\ \ \bG\in \partial_{\bX} (\lam_1\|\bX_0\|_*)\right\}
		\cr=&\sup\left\{\langle \mP_{\mT_0^\perp}(\bG),\mP_{\mT_0^\perp}(\bDelta_{\bX})\rangle -\langle \mP_{\mT_0^\perp}(\bP^\T\bD\bQ),\bDelta_{\bX}\rangle:\ \ \bG\in \partial_{\bX} (\lam_1\|\bX_0\|_*)\right\}
		\cr\ge&\lam_1\|\mP_{\mT_0^\perp}(\bDelta_{\bX})\|_*-\| \mP_{\mT_0^\perp}(\bP^\T\bD\bQ)\|_{2\rightarrow 2} \|\mP_{\mT_0^\perp}(\bDelta_{\bX})\|_*
		\cr\ge&\lam_1(1-1/c)\|\mP_{\mT_0^\perp}(\bDelta_{\bX})\|_*.
		%\cr \ge&\lfR{\lam_1\eta_1(1-1/c)\|\mP_{\mT^\perp}(\bP\bDelta_{\bX}\bQ^\T)\|_*}.
		\cr
		\end{align}
		Similarly,
		\begin{align}\label{sbg-2}
		&\lam_2\left\|\bY_0+\bDelta_{\bY} \right\|_*-\lam_2\left\|\bY_0\right\|_*-\langle \bD,\bDelta_{\bY} \rangle%\ge \lam_2(1-1/c)\big\|\mP_{\mS^\perp}(\bDelta_{\bY})\big\|_*
		\cr \ge &\sup\left\{\langle \bG,\bDelta_{\bY}\rangle -\langle \bD,\bDelta_{\bY}\rangle :\ \ \bG\in \partial_{\bY} (\lam_2\|\bY_0\|_{\tvec(1)})\right\}
		\cr = &\sup\left\{\big\langle \left\{\mP_{\mS^\perp}(\bG)-\mP_{\mS^\perp}( \bD)\right\},\ \bDelta_{\bY}\big\rangle :\ \ \bG\in \partial_{\bY} (\lam_1\|\bY_0\|_{\tvec(1)})\right\}
		\cr \ge &\lam_2\big\|\mP_{\mS^\perp}(\bDelta_{\bY})\big\|_{\tvec(1)}-\big\langle \mP_{\mS^\perp}(\bD),\ \mP_{\mS^\perp}(\bDelta_{\bY})\big\rangle 
		\cr \ge &\lam_2\big\|\mP_{\mS^\perp}(\bDelta_{\bY})\big\|_{\tvec(1)}-\big\| \mP_{\mS^\perp}(\bD)\big\|_{\tvec(\infty)} \big\|\mP_{\mS^\perp}(\bDelta_{\bX})\big\|_{\tvec(1)}
		\cr \ge &\lam_2(1-1/c)\big\|\mP_{\mS^\perp}(\bDelta_{\bY})\big\|_{\tvec(1)}.
		\end{align}
				This completes the proof of Proposition \ref{prop-sbg}.
	\medskip
	%\begin{align}
	%\frac{1}{2p}\sum_{(i,j)\in \Omega} \left(\widehat{\Theta}_{i,j}-\Theta_{i,j}\right)^2 \le &  \ \ \ 	\frac{1}{p}\sum_{(i,j)\in\Omega}E_{i,j}\left(\bP\bDelta_X\bQ^\T\right)_{i,j} +	\frac{1}{p}\sum_{(i,j)\in\Omega}E_{i,j}(\bDelta_Y)_{i,j} \cr & +\lambda_1(\|\bX_0\|_*-\|\hbX\|_*)  +\lambda_2(\|\bY_0\|_1-\|\hbY\|_1).
	%\end{align}
	
	\begin{proposition}\label{prop-bound}
		Suppose the conditions in Theorem \ref{thm-dual} holds and let $\bD_{\mS}$ and $\bD_{\mT}$ be as in Theorem \ref{thm-dual}, then
		\bes
		\lam_1\|\mP_{\mT_0^\perp}(\bDelta_{\bX})\|_*+\lam_2\big\|\mP_{\mS^\perp}(\bDelta_{\bY})\big\|_{\tvec(1)}\le(1-1/c)^{-1}(1/2)\|\bD_{\mT}+\bD_{\mS}\|_2^2
		\ees
	\end{proposition}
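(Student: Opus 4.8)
The plan is to combine the optimality of $(\hbX,\hbY)$ for problem (\ref{obj}) with the two subgradient inequalities of Proposition \ref{prop-sbg}, applied to the certificate $\bD=\bD_\mS+\bD_\mT+\bE$. First I would set $\bDelta_{\bX}=\hbX-\bX_0$, $\bDelta_{\bY}=\hbY-\bY_0$ and $\bDelta=\bP\bDelta_{\bX}\bQ^\T+\bDelta_{\bY}$. Since $\bZ-\bP\bX_0\bQ^\T-\bY_0=\bE$, we have $\bZ-\bP\hbX\bQ^\T-\hbY=\bE-\bDelta$, so expanding the Frobenius term and cancelling $\tfrac12\|\bE\|_F^2$ turns $\mathcal{L}(\hbX,\hbY)+\mathcal{P}(\hbX,\hbY)\le\mathcal{L}(\bX_0,\bY_0)+\mathcal{P}(\bX_0,\bY_0)$ into
\[
\tfrac12\|\bDelta\|_F^2+\lam_1\big(\|\hbX\|_*-\|\bX_0\|_*\big)+\lam_2\big(\|\hbY\|_{\tvec(1)}-\|\bY_0\|_{\tvec(1)}\big)\le\langle\bE,\bDelta\rangle.
\]

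Next I would lower-bound the two penalty gaps. The matrix $\bD=\bD_\mS+\bD_\mT+\bE$ satisfies $\mP_\mT(\bD)=\lam_1\bGamma$ and $\mP_\mS(\bD)=\lam_2\sgn(\bY_0)$ together with the operator-norm bounds, i.e. exactly the hypotheses of Proposition \ref{prop-sbg}, by (\ref{dual-1}) and (\ref{dual-2}). Hence
\[
\lam_1\big(\|\hbX\|_*-\|\bX_0\|_*\big)\ge\langle\bP^\T\bD\bQ,\bDelta_{\bX}\rangle+\lam_1(1-1/c)\|\mP_{\mT_0^\perp}(\bDelta_{\bX})\|_*,
\]
and the analogue with $\langle\bD,\bDelta_{\bY}\rangle$ for $\bY$. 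Adding the two and using $\langle\bP^\T\bD\bQ,\bDelta_{\bX}\rangle+\langle\bD,\bDelta_{\bY}\rangle=\langle\bD,\bDelta\rangle$, I substitute this lower bound into the previous display. The decisive point is that $\langle\bD,\bDelta\rangle=\langle\bD_\mS+\bD_\mT,\bDelta\rangle+\langle\bE,\bDelta\rangle$, so the noise term $\langle\bE,\bDelta\rangle$ appears on both sides and cancels, leaving
\[
\tfrac12\|\bDelta\|_F^2+\langle\bD_\mS+\bD_\mT,\bDelta\rangle+\lam_1(1-1/c)\|\mP_{\mT_0^\perp}(\bDelta_{\bX})\|_*+\lam_2(1-1/c)\|\mP_{\mS^\perp}(\bDelta_{\bY})\|_{\tvec(1)}\le0.
\]

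Finally I would handle the first two terms by completing the square: from $\tfrac12\|\bDelta+(\bD_\mS+\bD_\mT)\|_F^2\ge0$ we get $-\tfrac12\|\bDelta\|_F^2-\langle\bD_\mS+\bD_\mT,\bDelta\rangle\le\tfrac12\|\bD_\mS+\bD_\mT\|_F^2$. Combining with the displayed inequality and dividing by $(1-1/c)$ gives
\[
\lam_1\|\mP_{\mT_0^\perp}(\bDelta_{\bX})\|_*+\lam_2\|\mP_{\mS^\perp}(\bDelta_{\bY})\|_{\tvec(1)}\le(1-1/c)^{-1}(1/2)\|\bD_\mT+\bD_\mS\|_2^2,
\]
using $\|\bD_\mS+\bD_\mT\|_F=\|\bD_\mT+\bD_\mS\|_2$, which is the claim. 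I expect the only real obstacle to be bookkeeping: verifying that $\bD=\bD_\mS+\bD_\mT+\bE$ genuinely meets the hypotheses of Proposition \ref{prop-sbg}, and tracking the adjoint identity $\langle\bP^\T\bD\bQ,\bDelta_{\bX}\rangle=\langle\bD,\bP\bDelta_{\bX}\bQ^\T\rangle$ so that the $\langle\bE,\bDelta\rangle$ terms cancel exactly rather than merely approximately.
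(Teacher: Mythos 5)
Your proof is correct and follows essentially the same route as the paper: the optimality inequality for $(\hbX,\hbY)$, the two subgradient bounds of Proposition \ref{prop-sbg} applied to the certificate $\bD=\bD_\mS+\bD_\mT+\bE$ (whose hypotheses are exactly (\ref{dual-1})--(\ref{dual-2})), cancellation of the $\langle\bE,\cdot\rangle$ terms via the adjoint identity, and a completing-the-square bound giving $\tfrac12\|\bD_\mT+\bD_\mS\|_2^2$. The paper compresses these steps into its display following (\ref{pf-3}), but the logical content is identical to yours.
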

	
	\noindent{\bf Proof of Proposition \ref{prop-bound}.}
		By the optimality of $(\hbX, \hbY)$, we have
		\bel{pf-3}
		&&\lambda_1(\|\hbX\|_*-\|\bX_0\|_*)  +\lambda_2(\|\hbY\|_1-\|\bY_0\|_1) \cr \le &&-\frac{1}{2}\big\|\widehat{\bTheta}-\bTheta\big\|^2 +\langle\bE, \bP\bDelta_X\bQ^\T \rangle +\langle \bE, \bDelta_Y\rangle
		\eel
		Combining (\ref{sbg-1}), (\ref{sbg-2}) and (\ref{pf-3}), we have
		\bes
		&&\lam_1(1-1/c)\|\mP_{\mT_0^\perp}(\bDelta_{\bX})\|_*+\lam_2(1-1/c)\big\|\mP_{\mS^\perp}(\bDelta_{\bY})\big\|_{\tvec(1)}
		\cr \le &&- \langle \bP^\T(\bD_\mT+\bD_\mS) \bQ,\bDelta_{\bX} \rangle - \langle \bD_\mT+\bD_\mS,\bDelta_{\bY} \rangle - \frac{1}{2}\|\widehat{\Theta}-\Theta\|_F^2
		\cr = &&- \langle \bD_\mT+\bD_\mS,\bP\bDelta_{\bX}\bQ^\T \rangle - \langle \bD_\mT+\bD_\mS,\bDelta_{\bY} \rangle - \frac{1}{2}\|\bP\bDelta_{\bX}\bQ^\T+\bDelta_{\bY}\|_F^2
		%\cr \le && (p/2)\|(\bD_{\mT}+\bD_{\mS})_\Omega\|_2^2 - \langle \lfR{(\bD_\mT+\bD_\mS)_{\Omega^c}},(\hbTheta-\bTheta)_{\Omega^c} \rangle
		\cr \le && \frac{1}{2}\|\bD_{\mT}+\bD_{\mS}\|_2^2. 
		\ees
		This completes the proof of Proposition \ref{prop-bound}.
\medskip

	Now we are ready to prove Theorem \ref{thm-main}. By KKT condition, 
	\bel{pf-main-1}
	\lam_1 \bG_X&=&\bP^\T(\bZ-\bP\hbX\bQ^\T-\hbY)\bQ\cr
	\lam_2\bG_Y&=&\bZ-\bP\hbX\bQ^\T-\hbY
	\eel
	By algebra,
	\bel{pf-main-2}
	\lam_2  \bG_Y&=&\bE -\bP\bDelta_{\bX}\bQ^\T -\bDelta_{\bY}
	\eel
	Left-multiply $\bP^*$ and right-multiply $\bQ^*$ on both sides, 
	\bel{pf-main-3}
	\lam_2  \bP^*\bG_Y\bQ^*&=&\bP^*\bE\bQ^* -\bP\bDelta_{\bX}\bQ^\T -\bP^*\bDelta_{\bY}\bQ^*
	\eel
	Project both sides of (\ref{pf-main-3}) onto $\mS$, we have %and multiply by $\bP^\T$ and $\bQ$ on left and right respectively, 
	\begin{equation}\label{pf-main-4}
	\lam_2 \mP_{\mS}(\bP^*\bG_Y\bQ^*)=\mP_{\mS}(\bP^*\bE\bQ^*) -\mP_{\mS}(\bP\bDelta_{\bX}\bQ^\T) -\mP_{\mS}(\bP^*\bDelta_{\bY}\bQ^*),
	\end{equation}
	Similarly,
	\bel{pf-main-5}
	\lam_1  \bG_X&=&\bP^\T\bE\bQ -\bP^\T\bP\bDelta_{\bX}\bQ^\T\bQ  -\bP^\T\bDelta_{\bY}\bQ.
	\eel
	Multiply above by $(\bP^+)^\T$ and $\bQ^+$ on left and right respectively, 
	\begin{equation}\label{pf-main-6}
	\lam_1  (\bP^+)^\T\bG_X\bQ^+=\bP^*\bE\bQ^* -\bP\bDelta_{\bX}\bQ^\T  -\bP^*\bDelta_{\bY}\bQ^*.
	\end{equation}
	%where $\bP^*=\bP^+\bP^\T=\bP(\bP^\T\bP)^{-1}\bP^\T$ and  $\bQ^*=\bQ^+\bQ^\T=\bQ(\bQ^\T\bQ)^{-1}\bQ^\T$ are the projection matrices onto $\bP$ and $\bQ$ respectively.
	Project onto $\mT$ on both sides, we obtain
	\begin{align}\label{pf-main-7}
	&\lam_1  \mP_{\mT} \left( (\bP^+)^\T\bG_X\bQ^+\right)\cr =& \mP_{\mT}(\bP^*\bE\bQ^*) -  \mP_{\mT} (\bP\bDelta_{\bX}\bQ^\T)  -\mP_{\mT}(\bP^*\bDelta_{\bY}\bQ^*)
	\end{align}
	Further project onto $\mS$ on both sides of (\ref{pf-main-7}),
	\begin{align}\label{pf-main-8}
	&\lam_1   (\mP_{\mS}\circ\mP_{\mT}) \left( (\bP^+)^\T\bG_X\bQ^+\right)\cr =& (\mP_{\mS}\circ\mP_{\mT})(\bP^*\bE\bQ^*) -  (\mP_{\mS}\circ\mP_{\mT}) (\bP\bDelta_{\bX}\bQ^\T)  \cr &-(\mP_{\mS}\circ\mP_{\mT})(\bP^*\bDelta_{\bY}\bQ^*)
	\end{align}
	Subtracting (\ref{pf-main-8}) from (\ref{pf-main-4}) we have
	\begin{align}\label{pf-main-9}
	&\mP_{\mS}(\bP^*\bDelta_{\bY}\bQ^*)- (\mP_{\mS}\circ\mP_{\mT})(\bP^*\bDelta_{\bY}\bQ^*)+ (\mP_{\mS}\circ\mP_{\mT^\perp})(\bP\bDelta_{\bX}\bQ^\T)\cr =&  (\mP_{\mS}\circ\mP_{\mT^\perp})(\bP^*\bE\bQ^*) - \lam_2  \mP_{\mS}(\bP^*\bG_Y\bQ^*)+  \lam_1   (\mP_{\mS}\circ\mP_{\mT}) \left( (\bP^+)^\T\bG_X\bQ^+\right).
	\cr
	\end{align}
	As $\langle\sgn\left((\bP^*\bDelta_Y\bQ^*)_{\mS}\right), \mP_{\mS}(\bP^*\bDelta_Y\bQ^*)\rangle=\|\mP_{\mS}(\bP^*\bDelta_Y\bQ^*)\|_{\tvec(1)}$, take inner product on both sides of (\ref{pf-main-9}), we have
	\begin{align*}
	&\|\mP_{\mS}(\bP^*\bDelta_{\bY}\bQ^*)\|_{\tvec(1)}
	\cr \le & \|(\mP_{\mS}\circ\mP_{\mT})(\bP^*\bDelta_{\bY}\bQ^*)\|_{\tvec(1)}+ \|(\mP_{\mS}\circ\mP_{\mT^\perp})(\bP\bDelta_{\bX}\bQ^\T)\|_{\tvec(1)} 
	\cr &+\|(\mP_{\mS}\circ\mP_{\mT^\perp})(\bP^*\bE\bQ^*)\|_{\tvec(1)} + \lam_2  \|\mP_{\mS}(\bP^*\bG_Y\bQ^*)\|_{\tvec(1)}\cr &+  \lam_1   \|(\mP_{\mS}\circ\mP_{\mT}) \left( (\bP^+)^\T\bG_X\bQ^+\right)\|_{\tvec(1)}
	\cr \le& \alpha(\rho)\beta(\rho)\|\bP^*\bDelta_{\bY}\bQ^*\|_{\tvec(1)} 
	\cr &+ \sqrt{s} \|\mP_{\mT^\perp}(\bP\bDelta_{\bX}\bQ^\T)\|_{\tvec(2)}+\|(\mP_{\mS}\circ\mP_{\mT^\perp})(\bP^*\bE\bQ^*)\|_{\tvec(1)} 
	\cr & + \lam_2 s + \lam_1  \sqrt{s}\|\mP_{\mT}\left( (\bP^+)^\T\bG_X\bQ^+\right)\|_{\tvec(2)}
	\cr \le& \alpha(\rho)\beta(\rho)\|\mP_{\mS}(\bP^*\bDelta_{\bY}\bQ^*)\|_{\tvec(1)} + \alpha(\rho)\beta(\rho)\|\mP_{\mS^\perp}(\bP^*\bDelta_{\bY}\bQ^*)\|_{\tvec(1)}
	%+\alpha(\rho)\beta(\rho)\|\mP_{\mS^\perp}(\bP^*\bDelta_{\bY}\bQ^*)\|_{\tvec(1)} 
	\cr &+ \sqrt{s} \|\mP_{\mT^\perp}(\bP\bDelta_{\bX}\bQ^\T)\|_{*}+s\|\mP_{\mT^\perp}(\bP^*\bE\bQ^*)\|_{\tvec(\infty)} 
	\cr & + \lam_2 s + 2\sigma_{\min}^{-1}(\bP)\sigma^{-1}_{\min}(\bQ)\lam_1 \sqrt{sr},
	\end{align*}
	where in the last inequality, we used the Lemma \ref{lm-4} and the inequalities $\|\bP^+\|_{2\rightarrow 2}\le \sigma_{\min}^{-1}(\bP)$,  $\|\bQ^+\|_{2\rightarrow 2}\le \sigma_{\min}^{-1}(\bQ)$.
	Rearrange above inequality, we obtain
	\begin{align}\label{pf-main-10}
	&\left(1-\alpha(\rho)\beta(\rho)\right)\|\mP_{\mS}(\bP^*\bDelta_{\bY}\bQ^*)\|_{\tvec(1)}
	\cr \le & \alpha(\rho)\beta(\rho)\|\mP_{\mS^\perp}(\bP^*\bDelta_{\bY}\bQ^*)\|_{\tvec(1)}  +\sqrt{s}\|\mP_{\mT^\perp}(\bP\bDelta_{\bX}\bQ^\T)\|_{*}
	\cr &+s\|\mP_{\mT^\perp}(\bP^*\bE\bQ^*)\|_{\tvec(\infty)} 
	+ \lam_2 s +2\sigma_{\min}^{-1}(\bP)\sigma^{-1}_{\min}(\bQ)\lam_1 \sqrt{sr}.
	\end{align}
	To bound 
	$\alpha(\rho)\beta(\rho)\|\mP_{\mS^\perp}(\bP^*\bDelta_{\bY}\bQ^*)\|_{\tvec(1)}  +\sqrt{s}\|\mP_{\mT^\perp}(\bP\bDelta_{\bX}\bQ^\T)\|_{*}$, we subtract (\ref{pf-main-2}) from (\ref{pf-main-3}), we have
	\begin{equation}\label{pf-main-11}
	\bDelta_{\bY}=\lam_2  \bP^*\bG_Y\bQ^*-\lam_2\bG_Y + \bE-\bP^*\bE \bQ^*+\bP^*\bDelta_{\bY}\bQ^*
	\end{equation}
	Project both sides of (\ref{pf-main-11}) onto $\mS$, we have %and multiply by $\bP^\T$ and $\bQ$ on left and right respectively, 
	\begin{align}\label{pf-main-12}
	\mP_{\mS}(\bDelta_{\bY})=&\lam_2  \mP_{\mS}\left(\bP^*\bG_Y\bQ^*\right)-\lam_2\mP_{\mS}\left(\bG_Y\right)
	\cr&+ \mP_{\mS}(\bE)-\mP_{\mS}(\bP^*\bE\bQ^*) +\mP_{\mS}(\bP^*\bDelta_{\bY}\bQ^*).
	\end{align}
	It then follows that 
	\begin{align}\label{pf-main-13}
	&\|\mP_{\mS}(\bDelta_{\bY})\|_{\tvec(1)}
	\cr \le&\lam_2  \left\|\mP_{\mS}(\bP^*\bG_Y\bQ^*)\right\|_{\tvec(1)} + \lam_2  \left\|\mP_{\mS}(\bG_Y)\right\|_{\tvec(1)}+ \left\|\mP_{\mS}(\bE)\right\|_{\tvec(1)}
	\cr&+\left\|\mP_{\mS}(\bP^*\bE\bQ^*)\right\|_{\tvec(1)} +\left\|\mP_{\mS}(\bP^*\bDelta_{\bY}\bQ^*)\right\|_{\tvec(1)}
	\cr \le &2\lam_2 s +s\left\|\bE\right\|_{\tvec(\infty)}+s\left\|\bP^*\bE\bQ^*\right\|_{\tvec(\infty)}+\left\|\mP_{\mS}(\bP^*\bDelta_{\bY}\bQ^*)\right\|_{\tvec(1)}.
	\end{align}
	On the other hand, by the definition of  $\eta_2$, we have
	\begin{equation}\label{pf-main-14}
	\|\bP^*\bDelta_{\bY}\bQ^*\|_{\tvec(1)} \le  \|\mP_{\mS}(\bDelta_{\bY})\|_{\tvec(1)}+\eta_2^{-1}\|\mP_{\mS^\perp}(\bDelta_{\bY})\|_{\tvec(1)}.
	\end{equation}
	Combine (\ref{pf-main-13}) and (\ref{pf-main-14}), it follows that
	\bel{pf-main-15}
	\left\|\mP_{\mS^\perp}(\bP^*\bDelta_{\bY}\bQ^*)\right\|_{\tvec(1)} &\le&  \eta_2^{-1}\|\mP_{\mS^\perp}(\bDelta_{\bY})\|_{\tvec(1)}  + 2\lam_2 s
	\cr &&+s\left\|\bE\right\|_{\tvec(\infty)} +s\left\|\bP^*\bE\bQ^*\right\|_{\tvec(\infty)}.
	\eel
	Similarly, by  the definition of $\eta_1$, 
	\bel{pf-main-16}
	\|\mP_{\mT^\perp}(\bP\bDelta_{\bX}\bQ^\T)\|_{*}\le \eta_1^{-1} \|\mP_{\mT_0^\perp}(\bDelta_{\bX})\|_{*}.
	\eel
	Combine (\ref{pf-main-15}) and (\ref{pf-main-16}) and Proposition \ref{prop-bound}, we have
	\begin{align}\label{pf-main-17}
	&\alpha(\rho)\beta(\rho)\|\mP_{\mT^\perp}(\bP\bDelta_{\bX}\bQ^\T)\|_{*}+\sqrt{s}\left\|\mP_{\mS^\perp}(\bP^*\bDelta_{\bY}\bQ^*)\right\|_{\tvec(1)} 
	\cr \le & \left(\frac{\alpha(\rho)\beta(\rho)}{2\lam_2\eta_2} \vee  \frac{\sqrt{s}}{2\lam_1\eta_1}\right) (1-1/c)^{-1} \|\bD_{\mT}+\bD_{\mS}\|_2^2
	\cr &  + 2\lam_2 s+s\left\|\bE\right\|_{\tvec(\infty)} +s\left\|\bP^*\bE\bQ^*\right\|_{\tvec(\infty)}
	\cr\le &[2(1-1/c)\eta_0\lam_2]^{-1} \|\bD_{\mT}+\bD_{\mS}\|_2^2
	\cr &  + 2\lam_2 s+s\left\|\bE\right\|_{\tvec(\infty)} +s\left\|\bP^*\bE\bQ^*\right\|_{\tvec(\infty)}.
	\end{align}
	where we used the fact $\alpha(\rho)\beta(\rho)<1$, $\alpha(\rho)\ge \sqrt{s}$ and $\lam_1\ge \lam_2\alpha(\rho)\sigma_{\max}(\bP) \sigma_{\max}(\bQ)$ by (\ref{lambda2}).
	Further combine (\ref{pf-main-10}) and (\ref{pf-main-17}), we have
	\begin{align}\label{pf-main-18}
	&\left(1-\alpha(\rho)\beta(\rho)\right)\|\mP_{\mS}(\bP^*\bDelta_{\bY}\bQ^*)\|_{\tvec(1)}
	\cr\le &[2(1-1/c)\eta_0\lam_2]^{-1} \|\bD_{\mT}+\bD_{\mS}\|_2^2+ 3\lam_2 s+s\left\|\bE\right\|_{\tvec(\infty)}
	\cr & +s\left\|\bP^*\bE\bQ^*\right\|_{\tvec(\infty)}+s\|\mP_{\mT^\perp}(\bP^*\bE\bQ^*)\|_{\tvec(\infty)} 
	\cr & +2\sigma_{\min}^{-1}(\bP)\sigma^{-1}_{\min}(\bQ)\lam_1 \sqrt{sr}.
	\end{align}
	It further follows from (\ref{pf-main-18}), (\ref{pf-main-15}) and and Proposition \ref{prop-bound} that
	\bel{pf-main-19}
	&&\left(1-\alpha(\rho)\beta(\rho)\right)\|\bP^*\bDelta_{\bY}\bQ^*\|_{\tvec(1)}
	\cr\le&&
	\left(1-\alpha(\rho)\beta(\rho)\right)\left(\|\mP_{\mS}(\bP^*\bDelta_{\bY}\bQ^*)\|_{\tvec(1)}+ \|\mP_{\mS^\perp}(\bP^*\bDelta_{\bY}\bQ^*)\|_{\tvec(1)}\right)
	\cr\le &&[2\lam_2(1-1/c)]^{-1}(\eta_0^{-1}+\eta_2^{-1}) \|\bD_{\mT}+\bD_{\mS}\|_2^2
	\cr && + 5\lam_2 s+2s\left\|\bE\right\|_{\tvec(\infty)} +2s\left\|\bP^*\bE\bQ^*\right\|_{\tvec(\infty)}
	\cr &&+s\|\mP_{\mT^\perp}(\bP^*\bE\bQ^*)\|_{\tvec(\infty)} 
	+2\sigma_{\min}^{-1}(\bP)\sigma^{-1}_{\min}(\bQ)\lam_1 \sqrt{sr}.
	\cr \le &&[\lam_2(1-1/c)\eta_0]^{-1} \|\bD_{\mT}+\bD_{\mS}\|_2^2
	\cr &&+ 5\lam_2 s +2s\eps_{\infty}+3s\eps'_{\infty}  +2\sigma_{\min}^{-1}(\bP)\sigma^{-1}_{\min}(\bQ)\lam_1 \sqrt{sr}.
	\eel
	This proves (\ref{thm-main-1}). To prove  (\ref{thm-main-2}), we note that by (\ref{pf-main-13}), (\ref{pf-main-18}) and Proposition \ref{prop-bound},
	\begin{align}\label{pf-main-21}
	&(1-\alpha(\rho)\beta(\rho))\|\bDelta_{\bY}\|_{\tvec(1)}
	\cr \le &(1-\alpha(\rho)\beta(\rho))\|\mP_{\mS}(\bDelta_{\bY})\|_{\tvec(1)} + \|\mP_{\mS^\perp}(\bDelta_{\bY})\|_{\tvec(1)}
	\cr \le &2\lam_2 s +s\left\|\bE\right\|_{\tvec(\infty)}+s\left\|\bP^*\bE\bQ^*\right\|_{\tvec(\infty)}\cr &+(1-\alpha(\rho)\beta(\rho))\left\|\mP_{\mS}(\bP^*\bDelta_{\bY}\bQ^*)\right\|_{\tvec(1)} + [2(1-1/c)\lam_2]^{-1}\|\bD_{\mT}+\bD_{\mS}\|_2^2
	\cr \le &5\lam_2 s +2s\left\|\bE\right\|_{\tvec(\infty)}+2s\left\|\bP^*\bE\bQ^*\right\|_{\tvec(\infty)}+s\|\mP_{\mT^\perp}(\bP^*\bE\bQ^*)\|_{\tvec(\infty)} \cr &+2\sigma_{\min}^{-1}(\bP)\sigma^{-1}_{\min}(\bQ)\lam_1 \sqrt{sr}+ [2\lam_2(1-1/c)]^{-1}(\eta_0^{-1}+1)\|\bD_{\mT}+\bD_{\mS}\|_2^2
	\cr \le &5\lam_2 s +2s\left\|\bE\right\|_{\tvec(\infty)}+3s\eps'_{\infty}+2\sigma_{\min}^{-1}(\bP)\sigma^{-1}_{\min}(\bQ)\lam_1 \sqrt{sr}\cr &+ [2\lam_2(1-1/c)]^{-1}(\eta_0^{-1}+1)\|\bD_{\mT}+\bD_{\mS}\|_2^2.
	\cr
	\end{align}
	Finally, to prove (\ref{thm-main-3}), note that by (\ref{pf-main-7}),
	\begin{align}\label{pf-main-20}
	&\|\mP_{\mT} (\bP\bDelta_{\bX}\bQ^\T)\|_*\cr \le& \lam_1  \left\|\mP_{\mT} \left( (\bP^+)^\T\bG_X\bQ^+\right)\right\|_* + \|\mP_{\mT}(\bP^*\bE\bQ^*)\|_*   +\|\mP_{\mT}(\bP^*\bDelta_{\bY}\bQ^*)\|_*
	\cr\le &2\sigma_{\min}^{-1}(\bP)\sigma^{-1}_{\min}(\bQ)\lam_1 r+ \eps_* + \sqrt{2r}\|\bP^*\bDelta_{\bY}\bQ^*\|_{\tvec(2)}.
	\end{align}
	Then (\ref{thm-main-3}) follows from above. %and Proposition \ref{prop-bound}.

\end{document}